\renewcommand{\cite}{\citep}
\newcommand{\EE}{\mathbb{E}}
\newcommand{\PP}{\mathrm{Pr}}
\newcommand{\Var}{\mathrm{Var}}
\theoremstyle{definition}
\newtheorem{definition}{Definition}
\newtheorem{theorem}[definition]{Theorem}
\newtheorem{lemma}[definition]{Lemma}
\newtheorem{corollary}[definition]{Corollary}
\newtheorem{remark}[definition]{Remark}
\def\eps{{\epsilon}}
\DeclareMathOperator*{\argmin}{argmin}
\begin{document}

%

%

\twocolumn[

    \aistatstitle{Sequential 1-bit Mean Estimation with Near-Optimal Sample Complexity}
    
    \aistatsauthor{ Ivan Lau \And Jonathan Scarlett }
    \aistatsaddress{ National University of Singapore \And National University of Singapore} 
]

\begin{abstract}
In this paper, we study the problem of distributed mean estimation with 1-bit communication constraints. We propose a mean estimator that is based on (randomized and sequentially-chosen) interval queries, whose 1-bit outcome indicates whether the given sample lies in the specified interval.
Our estimator is $(\epsilon, \delta)$-PAC for all distributions with bounded mean ($-\lambda \le \mathbb{E}(X) \le \lambda $) and variance ($\mathrm{Var}(X) \le \sigma^2$) for some known parameters $\lambda$ and $\sigma$. 
We derive a sample complexity bound $\widetilde{O}\big( \frac{\sigma^2}{\epsilon^2}\log\frac{1}{\delta} + \log\frac{\lambda}{\sigma}\big)$, 
which matches the minimax lower bound for the unquantized setting up to logarithmic factors and the additional $\log\frac{\lambda}{\sigma}$ term that we show to be unavoidable.
We also establish an adaptivity gap for interval-query based estimators: the best non-adaptive mean estimator is considerably worse than our adaptive mean estimator for large $\frac{\lambda}{\sigma}$. Finally, we give tightened sample complexity bounds for distributions with stronger tail decay, and present additional variants that (i) handle an unknown sampling budget (ii) adapt to the unknown true variance given (possibly loose) upper and lower bounds on the variance, and (iii) use only two stages of adaptivity at the expense of more complicated (non-interval) queries.
\end{abstract}
\section{INTRODUCTION}

Mean estimation is one of the simplest yet most ubiquitous tasks in statistics, machine learning, and theoretical computer science. In modern applications such as those arising in large-scale and decentralized systems, the learner often has limited access to the true data samples. A common limitation is communication constraints, which require each data sample to be compressed to a small number of bits, before being communicated to the learner. In this paper, we address the extreme case of this setting where the learner receives only one bit of feedback per sample. This raises a fundamental theoretical question: 
\begin{quote}
    \textit{How does 1-bit quantization affect the sample complexity of mean estimation?}
\end{quote}
Our main contribution is a 1-bit mean estimator whose sample complexity nearly matches the minimax lower bound for the unquantized setting. To the best of our knowledge, analogous results were previously established only for Gaussian random variables~\cite{cai2022distributed, cai2024distributed} and other known location-scale families such as Laplace and logistic \cite{kipnis2022mean, kumar2025one}, leaving the fundamental limits of 1-bit mean estimation for broader, non-parametric distribution classes (such as those defined solely by bounded variance) largely unresolved.

\subsection{Problem Setup}
\label{sec: problem setup}
\textbf{Distributional assumption.}
Let $X$ be a real-valued random variable\footnote{Our results also have implications for certain multivariate settings; see Section \ref{sec:multivar} for details.} with unknown distribution $D$. 
We assume that $D$ belongs to a (non-parametric) family $\mathcal{D} =  \mathcal{D}(\lambda, \sigma)$, defined by known parameters $\lambda \ge \sigma > 0$; a distribution $D$ is in this family if the following conditions hold:
 \begin{enumerate}[topsep=0pt, itemsep=0pt]
     \item Bounded mean: $\mu(D)  \in [-\lambda, \lambda]$,\footnote{Without loss of generality, we set the search range to be symmetric.  Note that a dependence on the search range $\lambda$ is unavoidable in the 1-bit setting (see Theorem~\ref{thm: adaptive lower bound}), but a crude upper bound can be used due to the mild logarithmic dependence in the sample complexity (see Theorem~\ref{thm: main}).}
     \item Bounded variance: $\Var(X) \le \sigma^2 \le \lambda^2$,
 \end{enumerate}
 where both $\lambda$ and $\sigma$ are known to the learner.  Note that the support of $D$ may be unbounded. 

\textbf{1-bit communication protocol.}
The learner is interested in estimating the population mean $\mu = \mu(D) = \EE[X]$ from~$n$ independent and identically distributed (i.i.d.) samples $X_1, \dotsc, X_n \sim D$, subject to a 1-bit communication constraint per sample.
The estimation proceeds through an interactive protocol between a learner and a single memoryless agent\footnote{Equivalently, this can be viewed as a sequence of memoryless agents where the agent in each round may be different.  In particular, the agent in round $t$ only has access to $X_t$ and not to the previous samples $X_1,\dotsc,X_{t-1}$.} that observes i.i.d. samples and sends 1-bit feedback to the learner.
Specifically, for $t = 1, \dotsc, n$:
\begin{enumerate}[topsep=0pt, itemsep=0pt]
    \item The learner sends a 1-bit quantization function $Q_t\colon \mathbb{R} \to \{0, 1\}$ to an agent;

    \item The agent observes a fresh sample $X_t \sim D$ and sends a 1-bit message $Y_t = Q_t(X_t)$ to the learner.
\end{enumerate}
After $n$ rounds, the learner forms an estimate $\hat{\mu}$ based on the entire interaction history $\big(Q_1, Y_1, \dotsc, Q_n, Y_n \big)$. This (and similar) setting was also adopted in previous communication-constrained learning works, e.g.,~\cite{hanna2022solving, mayekar2023communication, lau2025quantile}.

The learner's algorithm in this protocol is formally defined as follows:
\begin{definition}[1-bit mean estimator]
A 1-bit mean estimator is an algorithm for the learner that operates within the above communication protocol. It consists of
\begin{enumerate}[topsep=0pt, itemsep=0pt]
    \item A (potentially randomized) query strategy for selecting the quantization functions $Q_1, \dotsc, Q_n$, where the choice of $Q_t$ can depend adaptively on the history of interactions $(Q_1, Y_1, \dotsc, Q_{t-1}, Y_{t-1})$.

    \item An estimation rule that maps the full transcript $(Q_1, Y_1, \ldots, Q_n, Y_n)$ to a final estimate $\hat{\mu} \in \mathbb{R}$.
\end{enumerate}
    We say that an estimator is \emph{non-adaptive} if the query strategy selects all quantization functions in advance, without access to any of the outcomes~$Y_1, \dotsc, Y_n$.
\end{definition}

\textbf{Interval query model.} In the problem formulation, we placed no restriction on the choice of quantization function $Q_t$.  However, motivated by the desire for ``simple'' choices in practice, we focus primarily on \emph{interval queries}, which take the form ``Is $X_t \in I_t?$'' for some interval $I_t = [a_t,b_t]$ (possibly with $a_t = -\infty$ or $b_t = \infty$). The resulting 1-bit feedback $Y_t$ is the corresponding binary answer $\mathbf{1}\{X_t \in I_t \}$.
Our main estimator will only use such queries, though we will also present a variant that uses general 1-bit queries.

\textbf{Learner's goal.}
The learner's goal is to design a 1-bit mean estimator that returns an accurate estimate with high probability, while using as few samples as possible. We formalize this notion as follows:

\begin{definition}[$(\epsilon, \delta)$-PAC]
     A mean estimator is $(\epsilon, \delta)$-PAC for distribution family $\mathcal{D}$ with sample complexity~$n(\epsilon,\delta)$ if, for each distribution $D \in  \mathcal{D}$, it returns an $\epsilon$-correct estimate $\hat{\mu}$ with probability at least $1-\delta$, i.e.,
    \begin{equation*}
        \text{for each } D \in \mathcal{D}, \quad
        \mathrm{Pr}\left(  |\hat{\mu} - \mu(D)| \le \epsilon \right) \ge 1- \delta
    \end{equation*}
    and the number of samples required is bounded by $n(\epsilon,\delta)$. The probability is taken over the samples $X_1, \ldots, X_n$ and any internal randomness of the estimator.
\end{definition}

\subsection{Summary of Contributions}
\label{sec: contributions}
With the problem setup now in place, we summarize our main contributions as follows:

\begin{itemize}[topsep=0pt, itemsep=0pt]
    \item We propose a novel adaptive 1-bit mean estimator (see Section~\ref{sec: algorithm}) that only makes use of interval queries.

    \item We show that the mean estimator is $(\eps, \delta)$-PAC for distribution family $\mathcal{D}(\lambda, \sigma)$, with a sample complexity that  matches the minimax lower bound  $\Omega(\sigma^2/\epsilon^2 \cdot \log(\delta^{-1}))$ for the unquantized setting up to logarithmic factors and an additional $\log(\lambda/\sigma)$ term (see Theorem~\ref{thm: main}). Our sample complexity bound scales logarithmically with $\lambda/\sigma$, which contrasts with existing bounds for communication-constrained non-parametric mean estimators scaling at least linearly in~$\lambda$.

    \item We derive a worst-case lower bound, showing that the additional $\log(\lambda/\sigma)$ term is unavoidable (see Theorem~\ref{thm: adaptive lower bound}).
    For the interval-query model, we establish an ``adaptivity gap'' by showing a worst-case lower bound $\Omega(\lambda \sigma/\epsilon^2 \cdot \log(\delta^{-1}))$ for non-adaptive estimators.

    \item We provide several extensions including improved logarithmic factors under stronger tail decay, handling partially unknown parameters $(\epsilon,\sigma)$, and a two-stage variant under general 1-bit queries.
    
\end{itemize}

\subsection{Related Work}
\label{sec: related work}
The related work on distributed mean estimation
is extensive, we only provide a brief outline here, emphasizing the most closely related works.

\textbf{Classical mean estimation.}
Mean estimation (in the unquantized setting) is a fundamental and well-studied problem in statistics, e.g., see~\cite{lee2022optimal, cherapanamjeri2022optimal, minsker2023efficient, dang2023optimality, gupta2024beyond} and the references therein.
The state-of-the-art $(\eps, \delta)$-PAC estimator by~\cite{lee2022optimal}  achieves a tight sample complexity $n = \left(2+o(1) \right) \cdot (\sigma^2/\eps^2) \cdot \log(1/\delta)$ for all distributions with finite variance~$\sigma$. These results serve as a natural benchmark for mean estimation problems under communication constraints.

\textbf{Distributed estimation and learning}.
Early work in distributed estimation, learning, and optimization was motivated by the applications of wireless sensor networks (see~\cite{xiao2006distributed, varshney2012distributed, veeravalli2012distributed, he2020distributed} and the references therein), with a recent resurgence driven by the rise of large-scale machine learning systems. This has led to the characterization of the sample complexity or minimax risk/error for various distributed estimation problems~\citep{ZhangDJW13, GargMN14, Shamir14, BravermanGMNW16, XuR18, HanMOW18, HanOW18, BarnesHO19, BarnesHO20, AcharyaCT20a, AcharyaCT20b, acharya2021distributed, acharya2021interactive, acharya2021estimating, acharya2023unified, shah2025generalized}.

While abundant, most of the existing literature differs in major aspects including the estimation goal itself, the use of parametric models, and/or imposing significantly stronger assumptions.  
 To our knowledge, none of the existing work on non-parametric distributed estimation captures our problem setup.
For example, distributed non-parametric density estimation~\cite{BarnesHO20, AcharyaCST21a} is an inherently harder problem, and accordingly the authors impose certain regularity conditions on the density function (e.g., belonging to Sobolev space).  Similarly, distributed non-parametric function estimation problems in~\cite{zhu2018distributed, Szab2018AdaptiveDM, Szab2020DistributedFE, cai2022distributednonparametric, zaman2022distributed} assume certain tail bounds on the likelihood ratio (e.g., Gaussian white noise model).

\textbf{Distributed mean estimation (DME).}
Several works study variants of mean estimation under communication constraints directly. A large body of work focuses on parametric settings, often assuming a known location-scale family \cite{kipnis2022mean, kumar2025one} with a particular emphasis on Gaussians \cite{ribeiro2006bandwidth1, cai2022distributed, cai2024distributed}.  These estimators crucially rely on CDF inversion, which are highly dependent on exact knowledge of the parametric family, and are not suitable for our non-parametric setting. 
The non-parametric mean estimators in~\cite{luo2005universal, ribeiro2006bandwidth2} can handle broader distributional families but require additional assumptions such as bounded support and/or smooth density functions. Furthermore, some of these estimators require more than 1 bit of feedback (per coordinate) per sample. 
A recent independent work on non-adaptive 1-bit mean estimation \cite{abdalla2026robust} partially circumvents these restrictive assumptions.  However, their estimator adopts a fixed quantization range whose width scales as $\Omega(\sigma^2/\epsilon)$ in the worst-case,\footnote{To bound the worst-case truncation bias by $O(\epsilon)$ under only a finite-variance assumption, it can be shown that one must set the range to be $\Omega(\sigma^2/\epsilon)$ due to the worst-case tightness of Cantelli's inequality (a one-sided version of Chebyshev's inequality).} and this translates to a sample complexity of $\Omega(\sigma^4 / \epsilon^4)$.
In contrast, our adaptive 1-bit mean estimator achieves near-optimal $\widetilde{O}(\sigma^2/\epsilon^2)$ rates for all distributions whose first two moments lie within known bounds.

\textbf{Empirical vs. population mean estimation.}
A closely related line of work focuses on distributed \textit{empirical} mean estimation of a fixed dataset, which is a key primitive in federated learning~\cite{suresh2017distributed, konevcny2018randomized, davies2020new, vargaftik2021drive, mayekar2021wyner, vargaftik2022EDEN, benbasat2024accelerating, babu2025unbiased}.
These estimators typically achieve a minimax optimal mean squared error (MSE) that scale as $\EE[ (\hat{\mu} - \mu_{\text{emp}})^2 ] = O(\lambda^2/n)$.
By using Markov's inequality and the median-of-means method, they can be converted to $(\eps, \delta)$-PAC \textit{population} mean estimator with a sample complexity of $n = \widetilde{O}(\lambda^2/\eps^2 \cdot \log(1/\delta))$.
In contrast, our mean estimator achieves a sample complexity of $\widetilde{O}(\sigma^2/\eps^2 \cdot \log(1/\delta) + \log(\lambda/\sigma))$, which is considerably smaller when $\sigma^2 \ll \lambda^2$. 
Although some empirical mean estimators achieve MSE that depends on empirical deviation/variance $\sigma_{\text{emp}}$ of the fixed dataset \cite{ribeiro2006bandwidth2, suresh2022correlated}, they require a bounded support. 
Furthermore, their MSE scale at least linearly with $\lambda$, e.g., 
the one in~\cite{suresh2022correlated} scales as $\EE[ (\hat{\mu} - \mu_{\text{emp}})^2 ] = O(\sigma_{\text{emp}} \lambda/n + \lambda^2/n^2)$. 
Consequently, converting them to $(\eps, \delta)$-PAC \textit{population} mean estimator using standard techniques would result in a sample complexity bound that scales at least linearly with~$\lambda$.

\section{ESTIMATOR AND UPPER BOUND}
In this section, we introduce our 1-bit mean estimator and provide its performance guarantee.  Our estimator is designed as a target-accuracy driven procedure that takes parameters $(\lambda,\sigma,\epsilon,\delta)$ as input.  It operates to ensure the desired accuracy $\eps$ is attained with probability at least $1-\delta$ while minimizing the sample complexity $n$, and hence does not have an explicit pre-specified sample budget.  However, the estimator can readily be applied to the fixed-budget setting where the sampling budget is given and the goal is to minimize the estimation error $\eps$. In Section~\ref{sec: unknown eps}, we address a harder variant of this, where $n$ is fixed but unknown to the learner.

\subsection{Description of the Estimator}
\label{sec: algorithm}
Our estimator first localizes an interval $I$ of length $O(\sigma)$ containing the mean $\mu$ with high probability (see Step~1). 
Using the mid-point of $I$ as the ``centre", it partitions~$\mathbb{R}$ into symmetric regions with width growing exponentially (see Step~2). 
The ``outer regions'' have low probability mass, from which we can infer that suffices to only estimate the contributions of ``significant'' regions (See Step~3). 
Finally, the estimator forms an estimate of the mean contribution from the significant regions to within an additive error of $\eps/2$ (see Steps~4--5).
This high-level strategy of performing ``localization'' (coarse estimation) and ``refinement'' (finer estimation) has also appeared in prior works such as \cite{cai2022distributed}, but with very different details, particularly for refinement (see Remark~\ref{rem: refinement novelty}).

In more detail, our mean estimator is outlined as follows, with any omitted details deferred to Appendix \ref{appendix: proof of main result}: 
\begin{enumerate}
    \item Using existing median estimation techniques, we localize a high probability confidence interval $[L, U]$ containing the median $M$ using
     \begin{equation}
         n_{\text{loc}}(\delta, \lambda, \sigma) = \Theta\left( \log \frac{\lambda}{\sigma} + \log \frac{1}{\delta} \right)
     \end{equation}
     1-bit threshold queries (which is a special case of interval queries).
     Using the well-known property $|\EE[X] - M| \le \sigma$, we have the high probability confidence interval $[L-\sigma, U+\sigma]$ containing the mean. It can then be verified that $|(U+\sigma) - (L -\sigma) | \le 6 \sigma$. Without loss of generality, we may assume that the interval $[L-\sigma, U+\sigma]$ is of length $6\sigma$ with the midpoint being exactly $0$, i.e., $L+U = 0$.

    \item We partition $\mathbb{R}$ into non-overlapping symmetric regions $R_1, R_{-1}, R_2, R_{-2}, \ldots$ with width growing exponentially as follows:
    \begin{equation}
    \label{eq: general R_i def}
        \frac{R_i}{\sigma} = 
        \begin{cases}
         \left[m_{i-1} , m_i  \right) & \text{if }  i \ge 1
         \\ \\
        - R_i /\sigma& \text{if } i \le -1,
        \end{cases}
    \end{equation}
    where\footnote{We choose to define $m_i$ as in~\eqref{eq: general m_i} for the convenience
    of analysis later on. Any exponential/geometric growth rate (e.g., $m_i = \Theta(a^i)$ for $a > 1$) would be sufficient to achieve the sample complexity in Theorem~\ref{thm: main}.  
    }  
    \begin{equation}
    \label{eq: general m_i}
        m_i= 
        \begin{cases}
        0 & \text{if }  i = 0
        \\ \\
         2^i & \text{if }  1 \le i \le  4
        \\ \\
        2(m_{i-1} -3 )   & \text{if }  i \ge 5.
        \end{cases}
    \end{equation}
    Note that $m_i = \Theta( 2^i)$ increases exponentially. 
    Since the sum of all $\mu_i = \EE \left[ X \cdot \mathbf{1} \left(X \in R_i \right)  \right]$ is $\mu$, we can consider estimating each $\mu_i$ separately.

    \item We identify a threshold $i_{\max} = \Theta\left( \log \left(\sigma/\epsilon \right) \right)$ such that the sum of $\mu_i$ for all $i$ satisfying $|i| > i_{\max}$ has at most $\eps/2$ contribution to $\mu$:
    \[
     \left| \sum_{|i| > i_{\max}} \mu_i \right| \le \frac{\eps}{2},
    \]
    so that they can be estimated as being 0, i.e., $\hat{\mu}_i = 0$ for $|i| > i_{\max}$. It remains to form an estimate $\hat{\mu}  = \sum_{|i| \le i_{\max}} \hat{\mu}_i$ satisfying
    \[
    \left| \sum_{|i| \le i_{\max}} \mu_i - \sum_{|i| \le i_{\max}} \hat{\mu}_i \right| \le  \frac{\eps}{2}
    \]
    as this implies
    \begin{align*}
        \left| \mu -  \hat{\mu} \right| 
        &=
        \left| 
            \sum_{|i| \le i_{\max}} \mu_i  +  \sum_{|i| > i_{\max}} \mu_i  - \sum_{|i| \le i_{\max}} \hat{\mu}_i  
        \right| \\
        &\le 
        \left| \sum_{|i| \le i_{\max}} \mu_i  - \sum_{|i| \le i_{\max}} \hat{\mu}_i  \right| +  \left| \sum_{|i| > i_{\max}} \mu_i \right| \\
        &\le  \frac{\eps}{2} +  \frac{\eps}{2} \\ &= \eps.
    \end{align*}

    \item  
    Let $R_i = [a_i, b_i)$. and define the random variable $T_i \sim \mathrm{Unif}(a_i, b_i)$. For ease of notation, we write $p_{a_i} \coloneqq \mathrm{Pr}\left(X \in [a_i, T_i] \right)$
    and $p_{b_i} \coloneqq \mathrm{Pr}\left(X \in [T_i, b_i] \right)$.
    It can be verified that
    \begin{equation}
    \label{eq: mu_i = a_i Pr(X in [a_i, T_i]) + b_i Pr(X in [T_i, b_i])}
        \begin{aligned}
        \mu_i 
        = \EE \left[ X \cdot \mathbf{1} \left(X \in R_i \right)  \right]
        = a_i \cdot p_{a_i} 
         + b_i \cdot p_{b_i}.
    \end{aligned}
    \end{equation}
    In Appendix~\ref{appendix: SQ}, we show that $p_{a_i}$ (resp. $p_{b_i}$)
    is equivalent to the probability of $X$ being in $R_i$ and getting rounded down to $a_i$ (resp. rounded up to $b_i$) by a binary stochastic quantizer for $R_i$.   

    \item 
    It is therefore sufficient for the learner to form good estimates $\hat{p}_{a_i}$ of $p_{a_i}$ and $\hat{p}_{b_i}$ of $p_{b_i}$ such that
    \[
        \hat{\mu}_i =  a_i \cdot \hat{p}_{a_i} 
         + b_i \cdot \hat{p}_{b_i}
    \]
    satisfies
    \begin{align*}
        &\left| \sum_{|i| \le i_{\max}} \mu_i - \sum_{|i| \le i_{\max}} \hat{\mu}_i \right| \\
        = 
        &\left|
         \sum_{|i| \le i_{\max}} 
         a_i \cdot ( p_{a_i} -  \hat{p}_{a_i} ) + b_i \cdot ( {p}_{b_i} - \hat{p}_{b_i} ) 
        \right| \\
        \le \ & \frac{\eps}{2}.
    \end{align*}
    The learner estimates them separately using randomized interval queries of the form $\mathbf{1}\{X \in [a_i, T_i]\}$ and $\mathbf{1}\{X \in [T_i, b_i]\}$, and empirical averages of the 1-bit feedback sent by agent.  
    Using standard concentration inequalities, the number of 1-bit observations~$n_i$ needed to form ``accurate'' estimates can be bounded by
    $n_i =  \widetilde{O} \left( \left(  \frac{\sigma^2}{\epsilon^2}   + \frac{2^{i} \sigma}{\epsilon} \right) \cdot \log\left( \frac{1}{\delta}\right)\right)$. Summing over all $i$ satisfying $|i| \le i_{\max} = \Theta\left( \log \left(\sigma/\epsilon \right) \right)$ gives
    \[
     n_{\text{ref}}(\eps, \delta, \sigma) = \sum_{i: |i| \le i_{\max}} n_i  =  
     \widetilde{O} \left(  \frac{\sigma^2}{\epsilon^2}  \cdot \log\left( \frac{1}{\delta}\right)\right).
    \]
    Combining this with the $n_{\text{loc}}(\delta, \lambda, \sigma) = O\left( \log \frac{\lambda}{\sigma} + \log \frac{1}{\delta} \right)$ samples used in Step 1 for localization, we obtain a total sample complexity of
    \begin{equation*}
        n \coloneqq n(\eps, \delta, \lambda, \sigma) = \widetilde{O} \left(  \frac{\sigma^2}{\epsilon^2}  \cdot \log\left( \frac{1}{\delta}\right) + \log\frac{\lambda}{\sigma}\right). 
    \end{equation*}
\end{enumerate}

\begin{remark}
\label{rem: refinement novelty}
   Standard refinement components such as stochastic quantization or CDF inversion fail to achieve the near-optimal variance-dependent rate in the general non-parametric setting. To achieve the near-optimal rate, we introduce specific design choices in the refinement stage that interact delicately:
    \begin{itemize}[topsep=0pt, itemsep=0pt]
        \item an exponential interval partitioning scheme that concentrates sampling on regions with ``significant'' probability mass;
        \item a carefully chosen truncation index that balances (a) bias from tail truncation and (b) sample complexity blow up from covering too many “insignificant” regions;
        \item a calibrated region-wise sample allocation to ensure the near-optimality of the “global” sample complexity. 
    \end{itemize}
    Eliminating or simplifying any of these components is likely to ``break'' the claimed sample complexity given below.
\end{remark}

\subsection{Upper bound}
We now formally state the main result of this paper, which is the performance guarantee of our mean estimator in Section~\ref{sec: algorithm}. The proof is deferred to Appendix~\ref{appendix: proof of main result}, where we also provide the omitted details in the above outline.

\begin{theorem}
\label{thm: main}
    The mean estimator given in Section~\ref{sec: algorithm} is
    $(\eps, \delta)$-PAC for distribution family $\mathcal{D}(\lambda, \sigma)$, with sample complexity
    \begin{align}
    n 
    &= O \left(  \frac{\sigma^2}{\epsilon^2} \cdot  \log^3\left( \frac{\sigma}{\epsilon}\right)  \cdot \log\left( \frac{ \log\left( \frac{\sigma}{\epsilon}\right) }{\delta}\right) + \log\frac{\lambda}{\sigma}\right)
    \label{eq: scaling law of n} \\
    & = \widetilde{O}\left( \frac{\sigma^2}{\epsilon^2}\log\frac{1}{\delta} + \log\frac{\lambda}{\sigma}\right).
\end{align}
\end{theorem}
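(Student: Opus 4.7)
The plan is to decompose the total error $|\hat{\mu}-\mu|$ into three contributions --- a localization error from Step~1, a truncation error from dropping regions $|i|>i_{\max}$ in Step~3, and a sum of per-region estimation errors from Steps~4--5 --- and to bound each by $O(\epsilon)$ while spending a total of $O(\delta)$ failure probability. For localization I would invoke a 1-bit median estimator in the style of~\cite{lau2025quantile}: binary search on a threshold with $O(\log(1/\delta))$ samples per level for confidence amplification returns an interval $[L,U]\ni M$ of length $O(\sigma)$ using $n_{\mathrm{loc}}=O(\log(\lambda/\sigma)+\log(1/\delta))$ queries. Combining this with the standard bound $|\mu-M|\le\sigma$ (which follows from $|\mu-M|\le \EE|X-M|\le \EE|X-\mu|\le\sigma$ by the variational property of the median followed by Jensen) and translating the midpoint to $0$ gives the setup asserted in Step~1. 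For truncation I would apply Chebyshev's inequality around the (now near-zero) mean to show $\mathrm{Pr}(X\in R_i)=O(\sigma^2/m_{i-1}^2)=O(4^{-i})$ for $|i|\ge 3$, so that $|\mu_i|\le|b_i|\cdot \mathrm{Pr}(X\in R_i)=O(\sigma\cdot 2^{-i})$; the geometric tail sum yields $\sum_{|i|>i_{\max}}|\mu_i|=O(\sigma\cdot 2^{-i_{\max}})=O(\epsilon)$ once $i_{\max}=\Theta(\log(\sigma/\epsilon))$.

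The heart of the argument is the per-region analysis. First I would verify the identity $\mu_i=a_ip_{a_i}+b_ip_{b_i}$ by conditioning on $X=x\in R_i$: since $T_i\sim\mathrm{Unif}(a_i,b_i)$ is independent of $X$, one has $\mathrm{Pr}(x\in[a_i,T_i]\mid X=x)=(b_i-x)/(b_i-a_i)$ and $\mathrm{Pr}(x\in[T_i,b_i]\mid X=x)=(x-a_i)/(b_i-a_i)$, and a linear combination with weights $a_i,b_i$ recovers $x$ exactly (the unbiased stochastic-quantizer identity); taking unconditional expectation gives~\eqref{eq: mu_i = a_i Pr(X in [a_i, T_i]) + b_i Pr(X in [T_i, b_i])}. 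To bound $|\hat{\mu}_i-\mu_i|=|a_i(\hat{p}_{a_i}-p_{a_i})+b_i(\hat{p}_{b_i}-p_{b_i})|$ I would apply Bernstein's inequality to each empirical average, using the crucial variance bound $\max(p_{a_i},p_{b_i})\le \mathrm{Pr}(X\in R_i)=O(\sigma^2/m_{i-1}^2)$. The variance term contributes $|a_i|\sqrt{p_{a_i}\log(1/\delta_i)/n_i}=O(\sigma\sqrt{\log(1/\delta_i)/n_i})$ --- the $2^i$ from $|a_i|$ cancels against the $2^{-i}$ pulled out of the square root --- while the Bernstein bias-correction term contributes $|a_i|\log(1/\delta_i)/n_i=O(2^i\sigma\log(1/\delta_i)/n_i)$. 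Forcing both below a per-region budget $\epsilon_i$ gives $n_i=O\bigl((\sigma^2/\epsilon_i^2+2^i\sigma/\epsilon_i)\log(1/\delta_i)\bigr)$, as claimed in Step~5.

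To combine, I would set $\epsilon_i=\epsilon/(2i_{\max}+1)$ and $\delta_i=\delta/(2i_{\max}+1)$, so that triangle inequality and union bound yield total error $O(\epsilon)$ at confidence $1-\delta$. Summing $n_i$ over $|i|\le i_{\max}$: the $\sigma^2/\epsilon_i^2$ contributions aggregate to $O(i_{\max}^3\sigma^2/\epsilon^2\cdot\log(i_{\max}/\delta))$, while the geometric sum of $2^i\sigma/\epsilon_i$ is dominated by its largest term $O(2^{i_{\max}}\sigma/\epsilon_i)=O(\sigma^2/\epsilon^2\cdot i_{\max})$; with $i_{\max}=O(\log(\sigma/\epsilon))$ this yields exactly the first summand of~\eqref{eq: scaling law of n}, and adding $n_{\mathrm{loc}}$ completes the bound. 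The main technical nuisance I expect is the careful accounting of polylog factors to extract precisely the stated $\log^3(\sigma/\epsilon)\cdot\log(\log(\sigma/\epsilon)/\delta)$ form, together with the conditioning argument needed to justify Bernstein on the refinement samples: since these samples are fresh and the centering depends only on $(L,U)$, they remain i.i.d.\ under any realization of the localization interval, but this independence (and the fact that the partition $\{R_i\}$ is a measurable function of the history) should be made explicit.
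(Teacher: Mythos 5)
Your proposal follows essentially the same route as the paper's own proof: localize via a noisy-binary-search median estimator and the $|\mu-M|\le\sigma$ bound, partition into exponentially growing regions, truncate the outer regions by Chebyshev, use the unbiased stochastic-quantizer identity $\mu_i=a_i p_{a_i}+b_i p_{b_i}$, and apply Bernstein with the variance proxy $\max(p_{a_i},p_{b_i})\le\Pr(X\in R_i)$ so that the $2^i$ scale of $|a_i|$ is cancelled by the $2^{-i}$ probability decay before summing over $|i|\le i_{\max}=\Theta(\log(\sigma/\epsilon))$. The only cosmetic differences are that you apply Bernstein uniformly where the paper switches to Hoeffding for $|i|\le4$ (where the Chebyshev probability bound is vacuous --- note your $\Pr(X\in R_i)=O(\sigma^2/m_{i-1}^2)$ has a stray $\sigma^2$ and is only non-trivial once $m_{i-1}$ is a few units away from the re-centered mean), and you parameterize the per-region budget $\epsilon_i$ on the $\mu$-scale rather than the probability scale, both of which yield the same rate.
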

Thus, we match the unquantized scaling up to logarithmic factors (see Section~\ref{sec: related work}) and an additional $O(\log \lambda/\sigma)$ term. 
In Theorem~\ref{thm: adaptive lower bound} below, we show that this $\log (\lambda/\sigma)$ term is unavoidable.
In Sections~\ref{sec: finite k} and \ref{sec: subgaussian}, we provide improved upper bounds for distributions with stronger tail decay.
We also study variants where $(\epsilon,\sigma)$ are not prespecified in Sections~\ref{sec: unknown eps} and~\ref{sec: unknown variance} and a variant that uses only two rounds/stages of adaptivity in Section~\ref{sec: two-stage}.

\begin{remark}[Computational Complexity]
    While our primary focus is sample complexity, the proposed estimator is also computationally efficient. The localization step (Step 1) relies on noisy binary search subroutine, requiring $O(\log^2(\lambda/\sigma))$ time~\cite[Section 6]{gretta2023sharp}. The refinement step (Steps 2–5) operates in time linear in the sample complexity, assuming constant-time arithmetic operations and random threshold generation.
\end{remark}

\section{LOWER BOUND AND ADAPTIVITY GAP}
In this section, we provide two lower bounds on the sample complexity. We first provide, in Theorem~\ref{thm: adaptive lower bound}, a near-matching worst-case lower bound to the upper bound in Theorem~\ref{thm: main}. In particular, we show that the $\log(\lambda/\sigma)$ term is unavoidable. Perhaps more interestingly, we show in Theorem~\ref{thm: non-adaptive lower bound} that the best non-adaptive mean estimator is strictly worse than our adaptive mean estimator, at least under the interval query model. This shows that there is an ``adaptivity gap'' between the performance of adaptive and non-adaptive interval query based mean estimators. The proofs are given in Appendix~\ref{appendix: Lower Bounds}.

\begin{theorem}
\label{thm: adaptive lower bound}
For any $(\epsilon, \delta)$-PAC 1-bit mean estimator, and any $\epsilon < \sigma/2$, there exists a distribution $D \in \mathcal{D}(\lambda, \sigma)$ such that the number of samples~$n$ must satisfy
\[
    n = \Omega\left( \frac{\sigma^2}{\epsilon^2} \cdot \log\left(\frac{1}{\delta} \right) + \log\left( \frac{\lambda}{\sigma} \right) \right).
\]
\end{theorem}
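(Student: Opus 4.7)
The plan is to prove the two summands separately: the variance-dependent bound $\Omega\bigl((\sigma^2/\epsilon^2)\log(1/\delta)\bigr)$ and the range-dependent bound $\Omega(\log(\lambda/\sigma))$. Because $A+B \le 2\max(A,B)$, it suffices to exhibit two (possibly different) hard distributions in $\mathcal{D}(\lambda,\sigma)$ and take the maximum of the two bounds.

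For the first term, I would use Le~Cam's two-point method. Let $D_\pm$ be the distribution on $\{+\sigma,-\sigma\}$ with $\PP(X=\pm\sigma) = \tfrac{1}{2} \pm \tfrac{\epsilon}{\sigma}$; the hypothesis $\epsilon<\sigma/2$ keeps these probabilities in $(0,1)$ and gives $\mu(D_\pm) = \pm 2\epsilon$ and $\Var(D_\pm) = \sigma^2 - 4\epsilon^2 \le \sigma^2$, so $D_\pm \in \mathcal{D}(\lambda,\sigma)$. Any $(\epsilon,\delta)$-PAC estimator induces a hypothesis test between $D_+$ and $D_-$ (via the sign of $\hat\mu$) whose per-side error is at most $\delta$. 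Since $X$ takes only two values, every 1-bit query $Q_t$ reduces to a Bernoulli observation whose success probabilities under the two hypotheses differ by at most $2\epsilon/\sigma$, giving per-round KL divergence $O(\epsilon^2/\sigma^2)$. The standard KL chain rule for interactive protocols yields
\[
\mathrm{KL}\bigl(P_+^{(n)} \,\|\, P_-^{(n)}\bigr) = \EE_{+} \sum_{t=1}^{n} \mathrm{KL}\bigl(P_+(Y_t \mid H_{t-1}) \,\|\, P_-(Y_t \mid H_{t-1})\bigr) = O(n\epsilon^2/\sigma^2),
\]
and Bretagnolle--Huber then converts this into $n = \Omega\bigl((\sigma^2/\epsilon^2)\log(1/\delta)\bigr)$.

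For the second term, I would use a packing/Fano argument. Place $K = \Theta(\lambda/\sigma)$ point-mass distributions at $\mu_k = 4\sigma\,k$ for $k$ ranging over a symmetric set of integers chosen so that $\mu_k \in [-\lambda,\lambda]$; each such distribution has zero variance (hence lies in $\mathcal{D}(\lambda,\sigma)$), and consecutive means are separated by $4\sigma > 2\epsilon$ by the hypothesis $\epsilon<\sigma/2$. Consequently, any $(\epsilon,\delta)$-PAC estimator must recover $k^*$ with probability at least $1-\delta$. Treating $k^*$ as uniform on $\{1,\dots,K\}$ and using the chain rule, $I(k^*; Y_{1:n}) \le \sum_{t=1}^n H(Y_t \mid Y_{1:t-1}) \le n$, since each $Y_t$ is a single bit and $Q_t$ is measurable with respect to $Y_{1:t-1}$ together with the estimator's private randomness (which is independent of $k^*$). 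Fano's inequality then gives $n \ge (1-\delta)\log_2 K - 1 = \Omega(\log(\lambda/\sigma))$ as long as $\delta$ is bounded away from $1$ (otherwise the bound is vacuous anyway).

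The main obstacle I anticipate is making the KL and mutual-information arguments rigorous in the presence of adaptive queries and the estimator's private randomness. The key observation is that $Q_t$ is a deterministic function of $(Y_{1:t-1}, A)$ for a private random string $A$ that is independent of the hypothesis; consequently the chain rules for KL and mutual information apply verbatim to the full interactive transcript, and the per-round bounds above are uniform over the (possibly adversarial) choice of $Q_t$. Everything else is book-keeping: verifying the parameters of the two constructions lie in $\mathcal{D}(\lambda,\sigma)$, converting $\epsilon$-accuracy into the required test/identification, and combining the two lower bounds via a maximum.
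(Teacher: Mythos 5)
Your proposal takes essentially the same route as the paper: a binary hypothesis test for the $\frac{\sigma^2}{\epsilon^2}\log\frac{1}{\delta}$ term (the paper simply cites the classical two-Bernoulli reduction where you re-derive it via Le~Cam, the KL chain rule, and Bretagnolle--Huber) and a Fano argument over a $\Theta(\lambda/\sigma)$-sized hard family for the $\log\frac{\lambda}{\sigma}$ term, capping the transcript's mutual information at $n$ bits exactly as the paper does with its $2N$ two-point distributions (your point-mass packing is a cosmetically simpler but equivalent choice). One small caveat in your first term: the step ``success probabilities differ by $\le 2\epsilon/\sigma$, hence per-round KL is $O(\epsilon^2/\sigma^2)$'' is not valid for arbitrary Bernoulli pairs, since KL blows up when the reference parameter nears $0$ or $1$; it does go through here because both per-round response laws are $(\tfrac{1}{2}\pm\epsilon/\sigma)$-mixtures of the same two Bernoullis $\mathrm{Bern}(a),\mathrm{Bern}(b)$, which gives $q(1-q)\gtrsim|a-b|^2$ and hence $\chi^2(p\,\|\,q)=O(\epsilon^2/\sigma^2)$, but that structural fact should be invoked explicitly rather than the raw gap $|p-q|$.
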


\begin{theorem}
\label{thm: non-adaptive lower bound}
    For any non-adaptive $(\epsilon, \delta)$-PAC estimator that only makes interval queries, and any $\epsilon < \sigma/2$, there exists a distribution $D \in \mathcal{D}(\lambda, \sigma)$ such that the number of samples $n$ must satisfy
    \[
        n = \Omega
        \left( 
            \frac{\lambda \sigma}{\epsilon^2} \cdot \log\left(\frac{1}{\delta} \right)
        \right).
    \]
\end{theorem}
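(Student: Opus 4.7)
The plan is to apply Le Cam's two-point method combined with a packing argument over mean values. The key property of the non-adaptive interval-query model that makes such a bound possible is that each interval $[a_t, b_t]$ can only be ``informative'' about a narrow range of mean locations, so a fixed collection of queries cannot simultaneously resolve the mean across all of $[-\lambda, \lambda]$.

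\textbf{Hard instances and information-theoretic requirement.} For each $\theta$ in a packing $\Theta \subset [-\lambda + \epsilon,\, \lambda - \epsilon]$ spaced by $4\sigma$ (so $|\Theta| = \Omega(1 + \lambda/\sigma)$), I will take $D_\theta^{\pm}$ to be the two-point distribution on $\{\theta - \sigma,\, \theta + \sigma\}$ with probabilities chosen so that $\mu(D_\theta^{\pm}) = \theta \pm \epsilon$; a direct computation gives $\Var(D_\theta^{\pm}) = \sigma^2 - \epsilon^2 \le \sigma^2$, so both distributions lie in $\mathcal{D}(\lambda,\sigma)$. Because their means differ by $2\epsilon$, any $(\epsilon,\delta)$-PAC estimator induces a test distinguishing $D_\theta^{+}$ from $D_\theta^{-}$ with error at most $\delta$ under either hypothesis, and a standard Bayes-risk-to-KL reduction then forces
\[
    \mathrm{KL}\bigl( P_{\theta}^{+} \,\big\|\, P_{\theta}^{-} \bigr) \;=\; \Omega\bigl(\log \tfrac{1}{\delta}\bigr),
\]
where $P_{\theta}^{\pm}$ is the law of the transcript $(Y_1, \dotsc, Y_n)$ under $D_\theta^{\pm}$.

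\textbf{Per-query KL and geometric counting.} Non-adaptivity (with Yao's principle used to reduce to a deterministic query sequence) implies that $Y_1, \dotsc, Y_n$ are independent under each hypothesis given the fixed intervals $[a_t, b_t]$, so the KL chain rule gives a sum of per-query Bernoulli KLs with parameters $p_t^{\pm}(\theta) = \PP_{D_\theta^\pm}[X \in [a_t, b_t]]$. A direct check shows $p_t^{+} = p_t^{-}$ unless $[a_t, b_t]$ contains exactly one of $\theta - \sigma$ and $\theta + \sigma$; in that ``useful'' case both Bernoulli parameters equal $\tfrac{1}{2} \pm \tfrac{\epsilon}{2\sigma}$, so the per-query KL is $O(\epsilon^2/\sigma^2)$. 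Hence the number of queries useful for any particular $\theta$ must be $\Omega(\sigma^2/\epsilon^2 \cdot \log(1/\delta))$. The set of $\theta$ for which a fixed $[a,b]$ is useful equals the symmetric difference $[a+\sigma, b+\sigma] \triangle [a-\sigma, b-\sigma]$, which is a union of at most two intervals each of length $\le 2\sigma$; the $4\sigma$ spacing of $\Theta$ therefore ensures each query is useful for at most $O(1)$ elements of $\Theta$. Summing the per-$\theta$ requirement over $|\Theta| = \Omega(\lambda/\sigma)$ values gives $n = \Omega\bigl((\lambda/\sigma) \cdot (\sigma^2/\epsilon^2) \cdot \log(1/\delta)\bigr) = \Omega\bigl(\lambda\sigma/\epsilon^2 \cdot \log(1/\delta)\bigr)$.

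\textbf{Main obstacle.} The crux of the argument is the geometric ``useful set'' calculation together with the bound of $O(1)$ on the number of packing points in that set, which converts the per-$\theta$ sample requirement into the desired $\lambda/\sigma$ multiplicative factor; once this is in place, the other pieces are standard. The most delicate remaining step is converting the union of per-$\theta$ hypothesis-testing obstructions into a single distribution for which the estimator must fail: I would handle this by placing a uniform prior on $\Theta \times \{+,-\}$ and extracting a bad instance via averaging (equivalently, a Fano-type argument across the $2|\Theta|$ hypotheses), which yields the worst-case ``there exists $D \in \mathcal{D}(\lambda,\sigma)$'' conclusion demanded by the theorem.
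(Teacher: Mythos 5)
Your proposal takes essentially the same route as the paper's proof: the same hard family of paired two-point distributions at $\Theta(\lambda/\sigma)$ grid locations spaced $\Theta(\sigma)$ apart, the same notion of a query being informative/useful only when the interval separates the two support points, the same observation that any fixed interval is useful for $O(1)$ grid locations (you use a symmetric-difference argument; the paper's Lemma~\ref{lem: informative for at most 2 pairs} does a short case analysis), and the same per-location binary hypothesis-testing lower bound (you via Bernoulli KL, the paper via squared Hellinger distance) summed over locations. The one place your sketch is loose is the final aggregation: a plain Fano bound over the $2|\Theta|$ hypotheses would not recover the $\log(1/\delta)$ factor; the paper instead invokes Yao's principle, places a uniform prior, and applies Jensen's inequality to the per-pair exponents $\exp\bigl(-c\, n_j\, \epsilon^2/\sigma^2\bigr)$ together with $\sum_j n_j \le 2n$ — a small but necessary replacement for the ``Fano-type'' step you gesture at.
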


Both proofs are based on constructing a (finite) ``hard subset'' of distributions that capture two sources of difficulty: (i) ``coarsely'' identifying the distribution's location in $[-\lambda,\lambda]$ among $\Theta(\lambda/\sigma)$ possibilities, and (ii) ``finely'' estimating the mean by distinguishing between two possibilities at that location whose means differ by $2\epsilon$. The fine estimation step inherently requires $\Omega(\sigma^2/\eps^2 \cdot \log(1/\delta))$ samples, based on standard hypothesis testing lower bound. However, the dependency on $\lambda/\sigma$ arising from the coarse identification step differs in adaptive vs. non-adaptive settings:
\begin{itemize}[topsep=0pt, itemsep=0pt]
    \item  In Theorem~\ref{thm: adaptive lower bound} (adaptive setting), we can simply interpret the additive logarithmic dependence as the number of bits needed to identify the correct location among the $\Theta(\lambda/\sigma)$ possibilities, with each query giving at most 1 bit of information. 
    
    \item  In Theorem \ref{thm: non-adaptive lower bound} (non-adaptive setting), the \emph{multiplicative} dependence arises because the estimator needs to allocate enough queries in \emph{every} one of the $\Theta(\lambda/\sigma)$ locations, as it does not know the correct location in advance.
\end{itemize}

We note that the distributed Gaussian mean estimator in~\cite{cai2024distributed} is non-adaptive and achieves an order-optimal MSE. However, their estimator is specific to Gaussian distributions, and their quantization functions are not based on interval queries.  We will build on their localization strategy in our two-stage variant (Section \ref{sec: two-stage}), but we avoid their refinement strategy which is much more Gaussian-specific (CDF inversion).

\section{VARIATIONS AND REFINEMENTS}
\label{sec: variations and refinements}

\subsection{Bounded Higher-Order Moments}
\label{sec: finite k}

Suppose further that the random variable $X$ has a finite $k$-th central moment bounded by $\sigma^k$ for some $k > 2$, i.e.,
\begin{equation}
\label{eq: bounded finite moment}
    \EE\big[ |X - \mu |^k \big] \le \sigma^k.
\end{equation}
By Lyapunov's inequality, we have
\[
    \left( \EE\big[ |X - \mu |^2 \big] \right)^{1/2} \le \left( \EE\big[ |X - \mu |^k \big] \right)^{1/k} = \sigma,
\]
which implies that the variance is bounded by $\sigma^2$. The condition \eqref{eq: bounded finite moment} imposes a stronger tail decay that is imposed by variance alone,\footnote{Note that if the learner knows  $\Var(X) \le \gamma^2$ for some known $\gamma \ll \sigma$, then using our main estimator in Section~\ref{sec: algorithm} would still lead to a better sample complexity.  That is, this refinement is primarily of interest when $\gamma$ and $\sigma$ are comparable.} and in this case we can tighten the $\log(\sigma/\epsilon)$ factors in Theorem~\ref{thm: main} to $\log_{k/2} \log(\sigma/\epsilon)$.
\begin{theorem}
\label{thm: higher-order variant}
    Suppose that random variable $X$ satisfies~\eqref{eq: bounded finite moment} for some $\sigma >0$ and $k > 2$. Then there exists an $(\eps, \delta)$-PAC 1-bit mean estimator with sample complexity
   \begin{align*}
     n = O \bigg(  &\frac{\sigma^2}{\epsilon^2} \cdot   \left( \log_{k/2} \log \left( \frac{\sigma}{\epsilon} \right)\right)^3   \cdot \log\left( \frac{\log_{k/2} \log \left( \frac{\sigma}{\epsilon}\right) }{\delta}\right) \\
    + &\log\frac{\lambda}{\sigma}\bigg).
    \end{align*}
\end{theorem}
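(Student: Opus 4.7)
The plan is to reuse the overall architecture of the estimator from Section~\ref{sec: algorithm} --- localization followed by per-region refinement via randomized interval queries --- but to employ a much more aggressive (doubly-exponential) partition of~$\mathbb{R}$ that exploits the stronger tail decay implied by the bounded $k$-th moment. Since the localization step (Step~1) only uses the variance-based bound $|\EE[X] - M| \le \sigma$, it is unaffected and the additive $\log(\lambda/\sigma)$ term is preserved. The improvement enters entirely through the refinement phase, via a much smaller number $i_{\max}$ of ``significant'' regions.

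First I would replace the definition~\eqref{eq: general m_i} with boundaries satisfying $m_i \asymp \sigma \cdot 2^{(k/2)^i}$ for $i$ beyond some absolute constant (the small-$i$ behaviour is unchanged). By Markov's inequality applied to $|X - \mu|^k$, the probability that $X$ falls in $R_i$ is at most $O(\sigma^k / m_{i-1}^k)$, so the contribution of a single region to the mean satisfies
\[
    |\mu_i| \;\le\; m_i \cdot \mathrm{Pr}(X \in R_i) \;\lesssim\; \frac{m_i \, \sigma^k}{m_{i-1}^k} \;\asymp\; \sigma \cdot 2^{-(k/2)^i},
\]
which decays super-exponentially in~$i$. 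Choosing $i_{\max} = O\!\left( \log_{k/2} \log (\sigma/\epsilon) \right)$ is then enough to make the cumulative contribution from $|i| > i_{\max}$ at most~$\epsilon$, which is the key reduction.

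Next I would rerun the per-region analysis of Step~5 with the new boundaries. Using the same estimator $\hat\mu_i = a_i \hat p_{a_i} + b_i \hat p_{b_i}$ together with Bernstein's inequality, the number of interval queries needed to reach a per-region error of $\epsilon/i_{\max}$ scales as
\[
    n_i \;=\; \widetilde{O}\!\left( \Bigl( \frac{m_i^2 \cdot \mathrm{Pr}(X \in R_i)}{(\epsilon/i_{\max})^2} \;+\; \frac{m_i}{\epsilon/i_{\max}} \Bigr) \log\frac{i_{\max}}{\delta} \right).
\]
The key verification is that under the doubly-exponential schedule, $m_i^2 \cdot \sigma^k / m_{i-1}^k = \Theta(\sigma^2)$ uniformly in~$i$, so the variance part of $n_i$ contributes $\widetilde{O}(\sigma^2/\epsilon^2)$ per region, while the range part is dominated by its largest value at $i = i_{\max}$, where $m_{i_{\max}} \asymp \sigma^2/\epsilon$ by construction of $i_{\max}$. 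Summing over $|i|\le i_{\max}$ and union-bounding over the $O(i_{\max})$ rounds reproduces the bound of Theorem~\ref{thm: main} with every $\log(\sigma/\epsilon)$ factor replaced by $\log_{k/2} \log(\sigma/\epsilon)$.

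The main obstacle is balancing two competing pressures in the per-region cost: a faster-growing partition shrinks~$i_{\max}$ but inflates each individual $m_i$, which enters $n_i$ both linearly (through the range term $m_i/\epsilon'$) and quadratically (through $m_i^2$ in the variance term). The choice $m_i \asymp \sigma \cdot 2^{(k/2)^i}$ is exactly the sweet spot at which $m_i^2 \sigma^k / m_{i-1}^k$ telescopes to $\Theta(\sigma^2)$, so the leading $\sigma^2/\epsilon^2$ behaviour of Theorem~\ref{thm: main} is preserved; any slower growth gives back the improvement in $i_{\max}$, while any faster growth makes $m_{i_{\max}}$ too large for the range term to be absorbed at leading order.
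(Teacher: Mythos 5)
Your proposal matches the paper's own proof essentially line for line: the same doubly-exponential boundary schedule $m_i \asymp \sigma\, 2^{(k/2)^i}$ (the paper uses the exact recurrence $m_i = (m_{i-1}-3)^{k/2}$ on the $\sigma$-normalized scale, which realizes the same asymptotics and makes the telescoping clean), the same higher-moment Chebyshev tail bound $\Pr(X\in R_i)\lesssim \sigma^k/m_{i-1}^k$, the same identity $m_i^2\,\Pr(X\in R_i)=\Theta(\sigma^2)$, the same $i_{\max}=\Theta(\log_{k/2}\log(\sigma/\epsilon))$, and the same Bernstein-based per-region count. One minor quibble in your closing heuristic: choosing a base $a>k/2$ does not make $m_{i_{\max}}$ too large --- $m_{i_{\max}}$ is pinned at $\Theta(\sigma^2/\epsilon)$ by the definition of $i_{\max}$ irrespective of the base --- the actual failure mode is that $m_i^2\Pr(X\in R_i)$ then grows like $2^{a^{i-1}(2a-k)}\gg 1$, blowing up the variance term rather than the range term, but this is an inessential remark that does not affect the correctness of your argument.
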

The protocol and proof of its guarantee are similar to those given in Section~\ref{sec: algorithm}, with the main difference being that we change $m_i$ in~\eqref{eq: general m_i} to a choice that scales doubly exponentially (see~\eqref{eq: finite k m_i} in Appendix~\ref{appendix: finite k}). Consequently, $i_{\text{max}}$ scales as $\log_{k/2} \log(\sigma/\eps)$. The details are given in Appendix~\ref{appendix: finite k}.

\subsection{Sub-Gaussian Random Variables}
\label{sec: subgaussian}
Now we suppose that $X - \mu$ is sub-Gaussian with known parameter $\sigma^2$, i.e.,
\begin{equation}
\label{eq: subgaussian bound}
\begin{aligned}
    \mathrm{Pr}(|X - \mu| \ge t) &\le 2 \exp \left( -\frac{t^2}{2 \sigma^2} \right).
\end{aligned}
\end{equation}
Note that we have $\Var(X) \le \sigma^2$ and $X$ has a finite $k$-th central moment for every~$k$.
In this case, we can tighten the $\log_{k/2} \log(\sigma/\epsilon)$ factors in Theorem~\ref{thm: higher-order variant} to $\log^*(\sigma/\epsilon)$, where the ``iterated logarithm" $\log^*(\cdot)$ is the number of times the logarithm function must be iteratively applied before the result is less than or equal to 1.
\begin{theorem}
\label{thm: subgaussian variant}
    Suppose that $X - \mu$ is sub-Gaussian with known parameter $\sigma^2$.
    Then there exists an $(\eps, \delta)$-PAC 1-bit mean estimator
     with sample complexity
   \begin{equation*}
    n = O \left(  \frac{\sigma^2}{\epsilon^2} \cdot   \left( \log^*\left( \frac{\sigma}{\epsilon} \right)\right)^3   \cdot \log\left( \frac{ \log^*\left( \frac{\sigma}{\epsilon}\right) }{\delta}\right) + \log\frac{\lambda}{\sigma}\right).
    \end{equation*}
\end{theorem}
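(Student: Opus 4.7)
The plan is to mirror the construction and analysis of Theorem~\ref{thm: higher-order variant}, replacing the doubly-exponential growth of the breakpoints $m_i$ by an iterated-exponential (``tower'') growth that exploits the much stronger sub-Gaussian tail~\eqref{eq: subgaussian bound}. Concretely, I would keep a short base case analogous to~\eqref{eq: general m_i} for small $i$ and then, past that base case, define a recursion of the form $m_i = 2^{c\,m_{i-1}}$ for a suitably chosen constant $c>0$.

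First, I would determine $i_{\max}$ by requiring that $m_{i_{\max}}\sigma$ lies well in the sub-Gaussian tail of $X$. The bound~\eqref{eq: subgaussian bound} gives $\Pr(|X-\mu|\ge m_{i_{\max}}\sigma)\le 2\exp(-m_{i_{\max}}^{2}/2)$, and a standard tail integration shows that the total contribution to $\mu$ from regions with $|i|>i_{\max}$ is of order $m_{i_{\max}}\sigma\cdot\exp(-m_{i_{\max}}^{2}/2)$. Setting this $\le \epsilon$ forces only $m_{i_{\max}}=O(\sqrt{\log(\sigma/\epsilon)})$, and the tower recursion reaches this value in $i_{\max}=O(\log^{*}(\sigma/\epsilon))$ levels, which is precisely the factor appearing in the theorem.

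Next I would re-run the per-region refinement analysis of Section~\ref{sec: algorithm} essentially verbatim, but with the new $i_{\max}$ and the new $m_i$. For each significant region $R_{i}=[a_{i},b_{i})$, a Bernstein bound yields a per-region budget of the same functional form as in the main proof, with the role of $\log(\sigma/\epsilon)$ now played by $\log^{*}(\sigma/\epsilon)$ in the final error and confidence allocation. The only non-trivial verification is that summing these budgets still gives $\widetilde{O}(\sigma^{2}/\epsilon^{2})$ total samples. This reduces to the sub-Gaussian cancellation $\sum_{i}b_{i}^{2}\Pr(X\in R_{i})=O(\sigma^{2})$: even though $b_{i}^{2}=m_{i}^{2}\sigma^{2}$ grows tower-fast, the mass in $R_{i}$ is at most $\exp(-m_{i-1}^{2}/2)$, which crushes $m_{i}^{2}=\exp(O(m_{i-1}))$ for every $i$ past the base case, so the series is dominated by finitely many terms of size $O(\sigma^{2})$.

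The main obstacle is choosing the constant $c$ (and tying the tower recursion smoothly to the base case inherited from Step~1 of Section~\ref{sec: algorithm}) so that the recursion simultaneously (i) reaches $\sqrt{\log(\sigma/\epsilon)}$ within $O(\log^{*}(\sigma/\epsilon))$ steps, and (ii) keeps $m_{i}^{2}\cdot\Pr(X\in R_{i})$ summable against the sub-Gaussian tail. The first requirement pushes $c$ up while the second caps it, so the two must be balanced carefully. A secondary bookkeeping point is confirming that a single union bound over the $O(\log^{*}(\sigma/\epsilon))$ regions contributes only the $\log(\log^{*}(\sigma/\epsilon)/\delta)$ factor in the stated sample complexity, and that the localization step is unaffected since it already costs only $O(\log(\lambda/\sigma)+\log(1/\delta))$ samples.
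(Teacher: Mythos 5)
Your proposal is correct and follows essentially the same route as the paper: the paper's Appendix~\ref{appendix: subgaussian} uses the tower recursion $m_i = \exp((m_{i-1}-3)^2/4)$ (so that the sub-Gaussian bound gives $\Pr(X\in R_i)\le m_i^{-2}$ directly) and sets $i_{\max}=\Theta(\log^*(\sigma/\epsilon))$, then re-runs the Bernstein-based refinement of Step~5 exactly as you describe. The ``main obstacle'' you flag is in fact a non-issue: any tower growth $m_i=\exp(\Theta(m_{i-1}))$ or faster simultaneously reaches the needed scale in $\log^*$ steps and has $m_i^2\cdot\Pr(X\in R_i)$ crushed by the $\exp(-m_{i-1}^2/2)$ tail, so the constant $c$ requires no fine balancing.
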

The protocol and proof of its guarantee are again similar to those of Theorem~\ref{thm: main}, with the main difference being that we change $m_i$ in~\eqref{eq: general m_i} to a choice that scales according to a tower of exponentials of height~$i$. Consequently, $i_{\text{max}}$ scales as $\log^*(\sigma/\eps)$. The details are given in Appendix~\ref{appendix: subgaussian}.

\subsection{Unknown Target Accuracy}
\label{sec: unknown eps}
By inverting the $\eps$ term in~\eqref{eq: scaling law of n}, we obtain a performance guarantee on the target accuracy of our main algorithm in terms of parameters $n, \delta, \sigma$, and $\lambda$, where $n$ is the pre-specified sampling budget. In other words, by running our mean estimator with parameters
\[
\left(  \eps, \delta, \lambda, \sigma\right) = 
\left(  \eps(n, \delta, \lambda, \sigma), \ \delta, \lambda, \sigma\right)
\] 
we obtain a mean estimate that is $\eps$-accurate with error probability at most $\delta$, where $\eps = \eps(n, \delta, \lambda, \sigma)$ is computed by inverting the $\eps$ term in~\eqref{eq: scaling law of n}. Furthermore, the number of samples used
\[
 n(\eps, \delta, \lambda, \sigma)  = n_{\text{loc}}(\delta, \lambda, \sigma) + n_{\text{ref}}(\eps, \delta, \lambda, \sigma)
\]
trivially satisfy the pre-specified sampling budget. Here we define
\[
n_{\text{loc}}(\delta, \lambda, \sigma) = \Theta \left( \log\frac{\lambda}{\sigma} + \log \frac{1}{\delta} \right)
\]
as the number of samples used in the localization step of our mean estimator (see Step 1 of Section~\ref{sec: algorithm}), and
\begin{equation}
\label{eq: n_ref}
    n_{\text{ref}}(\eps, \delta, \sigma) 
=  \Theta \left(  \frac{\sigma^2}{\epsilon^2} \cdot \log^3\left( \frac{\sigma}{\epsilon}\right) \cdot \log\left( \frac{ \log\left( \frac{\sigma}{\epsilon}\right) }{\delta}\right) \right)
\end{equation} as the number of samples used in Steps 3-5 (refinement).

We now consider the scenario where the true sample budget $n_{\text{true}}$ is not pre-specified in advance, but the other parameters $\sigma$, $\lambda$, and~$\delta$ are still known.
In this case, we can no longer run the algorithm using an $\eps$ inverted as before, since $n$ is not pre-specified. A naive way is to guess some $\eps_\text{guess}$ and run the estimator with parameters ($\eps_\text{guess}, \delta, \lambda, \sigma$).
Moreover, if we guess some $\eps_\text{guess}$ which ends up being too big, i.e., $\eps_\text{guess}  \gg \eps(n_{\text{true}}, \delta, \lambda, \sigma)$, then the resulting estimator is inaccurate given the sampling budget. Conversely, if the guess is too small, i.e., $\eps_\text{guess}  \ll \eps(n_{\text{true}}, \delta, \lambda, \sigma)$, then the sampling budget may not be sufficient to guarantee an $\eps_\text{guess}$-accurate estimate.

To overcome this, we can use a standard halving trick on $\eps_\text{guess}$ (along with careful consideration of $\delta$) to ``anytime-ify'' the mean estimator. We run the localization step and the partitioning step (see Steps 1-2 of Section~\ref{sec: algorithm}) once, which requires knowing only the parameters $(\delta, \lambda, \sigma)$ and uses $n_{\text{loc}}(\delta, \lambda, \sigma)$ samples. Then for each round $\tau = 1, 2, \dotsc$, we run Steps~3-5 of our mean estimator with parameters
\[
    ( \eps_{\tau}, \delta_{\tau}, \sigma)
    \quad \text{where}  \quad
   \eps_{\tau} = \frac{\sigma}{2^{\tau}} 
   \quad \text{and}\footnote{Alternatively, we could pick any suitable $\eps_0$ as the initial target accuracy and let $\eps_{\tau} = \eps_0/2^{\tau-1}$.}  \quad 
    \delta_{\tau} = \frac{6\delta}{\pi^2 \tau^2}.
\]
Note that this process would use $n_{\text{ref}}(\eps_{\tau}, \delta_{\tau}, \sigma)$ samples in each round $\tau$. It follows that round $\tau$ will complete as long as the true sampling budget $n_{\text{true}}$ satisfies
\[
   n_{\text{loc}}(\delta, \lambda, \sigma) + \sum_{s=1}^{\tau} n_{\text{ref}}(\eps_s, \delta_s, \sigma)
   \le 
    n_{\text{true}}. 
\]
When the real sampling budget $n_{\text{true}}$ is exhausted, we stop and output the last estimate we fully computed, i.e., we output $\hat{\mu}_T$,
where 
\begin{equation}
\label{eq: last round T}
     T = \max_{\tau \ge 1} \left\{ \sum_{s=1}^{\tau} n_{\text{ref}}(\eps_s, \delta_s, \sigma) \le 
    n_{\text{true}} - n_{\text{loc}}(\delta, \lambda, \sigma)\right\}
\end{equation}
is the last round where the subroutine is completed. 
By the union bound and the guarantee of the subroutine for each round $\tau$, we have with probability at least $1-\delta$ that every estimate $\hat{\mu}_{\tau}$ formed is $\eps_{\tau}$-accurate.
In particular, under this high-probability event, the final output $\hat{\mu}_T$ satisfies
\[
  |\hat{\mu}_T - \mu | \le \eps_T = \frac{\sigma}{2^{T}}.
\]
Ideally, we would like to compare $\eps_T$ against the ``oracle accuracy'' $\eps^*$, which satisfies
\[
    n_{\text{ref}}(\eps^*, \delta,  \sigma) = n_{\text{true}} - n_{\text{loc}}(\delta, \lambda, \sigma),
\]
i.e, $\eps^*$ is the optimal target accuracy that could be achieved (with high probability) had we known the unknown sampling budget $n_{\text{true}}$ in advance. 
Indeed, under a mild assumption of $n$ not being too small, we show that $\eps_T$ obtained from the doubling trick matches the ``oracle'' value to within a constant factor.

\begin{theorem}
\label{thm: unknown eps}
    Under the preceding setup, assuming $n_{\text{true}} \ge n_{\text{loc}}(\delta, \lambda, \sigma)$, we have $\eps_T = O(\eps^*)$.
\end{theorem}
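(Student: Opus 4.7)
The plan is to compare the total refinement budget consumed through round $T{+}1$---which, by the maximality of $T$ in~\eqref{eq: last round T}, must exceed $n_{\text{true}} - n_{\text{loc}}(\delta,\lambda,\sigma) = n_{\text{ref}}(\eps^*,\delta,\sigma)$---to the single last term $n_{\text{ref}}(\eps_{T+1},\delta_{T+1},\sigma)$, and then invert~\eqref{eq: n_ref} to relate $\eps_{T+1}$ to $\eps^*$.

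First I would substitute $\eps_s = \sigma/2^s$ and $\delta_s = 6\delta/(\pi^2 s^2)$ into~\eqref{eq: n_ref} to obtain $n_{\text{ref}}(\eps_s,\delta_s,\sigma) = \Theta\bigl(4^s \cdot s^3 \cdot (\log s + \log(1/\delta))\bigr)$. Because the $4^{s+1}/4^s = 4$ factor dominates the other (polynomial-in-$s$) ratios, the ratio $n_{\text{ref}}(\eps_{s+1},\delta_{s+1},\sigma)/n_{\text{ref}}(\eps_s,\delta_s,\sigma)$ is bounded below by $2$ for all $s \ge 1$, and a standard geometric-sum argument yields
\[
   \sum_{s=1}^{T+1} n_{\text{ref}}(\eps_s,\delta_s,\sigma) \;\le\; C_1 \cdot n_{\text{ref}}(\eps_{T+1},\delta_{T+1},\sigma)
\]
for an absolute constant $C_1$. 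Combining with the overshoot inequality produces
\[
  n_{\text{ref}}(\eps_{T+1},\delta_{T+1},\sigma) \;\ge\; \tfrac{1}{C_1}\, n_{\text{ref}}(\eps^*,\delta,\sigma).
\]

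I would then let $T^* \coloneqq \log_2(\sigma/\eps^*)$ and take logarithms; the inequality rearranges to
\[
   (T+1-T^*)\log 4 \;\ge\; -\log C_1' + 3\log\!\tfrac{T^*}{T+1} + \log\!\tfrac{\log T^* + \log(1/\delta)}{\log(T+1)+\log(1/\delta)},
\]
where $C_1' \ge C_1$ absorbs the $\Theta$ constants hidden in~\eqref{eq: n_ref}. If $T+1 \ge T^*$ then $\eps_{T+1}\le\eps^*$ and we are done. Otherwise, set $d = T^*-(T+1) > 0$; in this regime every term on the right other than $-\log C_1'$ is nonnegative, so the inequality forces $d\log 4 \le \log C_1'$, i.e., $d \le d_0$ for some absolute constant $d_0$. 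Hence $\eps_{T+1} \le 2^{d_0}\eps^*$, and therefore $\eps_T = 2\eps_{T+1} = O(\eps^*)$.

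The main technical obstacle is this inversion step: the polylog factors of $n_{\text{ref}}$ do not cancel cleanly in the ratio, and one must carefully verify that each ``stray'' factor (the $(T^*/(T+1))^3$ correction and the outer-log ratio) carries the right sign in each case so that the dominant $4^{T+1-T^*}$ term can be isolated. The halving schedule's strong monotonicity, together with the mild assumption that $n_{\text{true}}$ is large enough for the framework to be meaningful (in particular, $T \ge 1$), is exactly what makes this work out.
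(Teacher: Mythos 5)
Your proposal is correct and follows essentially the same route as the paper: you exploit the maximality of $T$ in~\eqref{eq: last round T} to get $n_{\text{ref}}(\eps^*,\delta,\sigma) = n_{\text{true}} - n_{\text{loc}}(\delta,\lambda,\sigma) < S(T+1)$, observe that the doubling schedule makes $S(\tau)$ a geometric-type sum dominated (up to constants) by its last term $\Theta\bigl(4^{\tau}\,\tau^3\,\log(\tau/\delta)\bigr)$, and invert to conclude $T \ge \tau^* - O(1)$ and hence $\eps_T = O(\eps^*)$. The only real difference is that you flesh out the inversion step with an explicit case split on the sign of $T^*-(T+1)$, whereas the paper stops at ``which implies $T \ge \tau^* - O(1)$''; your extra care there is harmless and in fact makes the sign-of-the-polylog-terms argument you flag explicit.
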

The proof is given in Appendix~\ref{appendix: unknown target accuracy}.

\subsection{Adapting to Unknown Variance}
\label{sec: unknown variance}
The sample complexity of our mean estimator, as stated in Theorem~\ref{thm: main}, scales quadratically with $\sigma/\eps$, where $\sigma^2$ is a known upper bound on the true variance $\sigma_{\text{true}}^2 = \Var(X)$. This scaling is not ideal when the upper bound is loose. This is in contrast to the unquantized setting, where there exist mean estimators whose sample complexity scales quadratically with $\sigma_{\text{true}}/\eps$ without any knowledge of $\sigma$ \cite{lee2022optimal}.

Under the 1-bit communication constraint, it may be difficult to learn the true variance (and estimate mean at the same time).
We consider the case where both target accuracy $\eps$ and true variance $\sigma_{\text{true}}$ are unknown, but we know that
\[
\sigma_{\text{true}} \in [\sigma_{\text{min}}, \sigma_{\text{max}}]
\quad \text{and} \quad
\eps =  r \sigma_{\text{true}}
\]
for some known $r$. That is, we seek accuracy to within $r$ multiples of standard deviation, even though we do not know standard deviation~$\sigma_{\text{true}}$.

In this case, we construct a mean estimator that uses the mean estimator in Section~\ref{sec: algorithm} as subroutine. 
Set
\begin{equation}
    \label{eq: T = log(sigma_max/sigma_min)}
    T = \left\lceil \log_2 \left(\sigma_{\text{max}} / \sigma_{\text{min}} \right) \right\rceil.
\end{equation} 
For $i=0, \dots, T$, we define
\begin{equation}
\sigma_i = \sigma_{\text{min}} \cdot 2^i
    \iff
    \label{eq: sigma_i}
    \sigma_i =
    \begin{cases} 
    \sigma_{\text{min}} & \text{ if } i = 0
    \\    
  2 \sigma_{i-1}    & \text{ if } 1 \le  i \le T
\end{cases}, 
\end{equation}
and run the mean estimator in Section~\ref{sec: algorithm} with parameters
\begin{equation}
    \label{eq: eps_i = r sigma_i/5}
    \left(\eps_i, \delta_i, \lambda, \sigma_i \right)=
    \left( 
     \frac{r \sigma_i}{5},
    \quad \frac{\delta}{T+1},
    \quad
    \lambda, 
    \quad
     \sigma_i,
    \right)
\end{equation}
to obtain an estimate $\hat{\mu}^{(i)}$.
For each $i$, we define a confidence interval 
\begin{equation}
\label{eq: CI}
    I_i = [ \hat{\mu}^{(i)} \pm \eps_i]
    = \left[ \hat{\mu}^{(i)} -  \frac{r \sigma_i}{5}, \hat{\mu}^{(i)} + \frac{r \sigma_i}{5} \right]
\end{equation}
of length $2 \eps_i$, and we say that $\sigma_i$ is \emph{feasible} if $I_i$ overlaps with all confidence intervals of higher indices:
\begin{equation}
\label{eq: feasible condition}
     I_i \cap I_j \ne \emptyset  \quad \text{for all } j > i.  
\end{equation}
Note that $\sigma_T$ is trivially feasible.
We return the mean estimate corresponding to the smallest feasible $\sigma_i$, i.e., we return the estimate $\hat{\mu}^{(i^*)}$ where
$i^*$ is the smallest $i$ that satisfies condition~\eqref{eq: feasible condition}. The resulting mean estimator has a sample complexity that scales quadratically with $\sigma_{\text{true}}/\eps = 1/r$, but pays an extra multiplicative factor $\log(\sigma_{\max}/ \sigma_{\min})$. 
\begin{theorem}
\label{thm: partially unknown variance}
    The mean estimator above is $(\eps, \delta)$-PAC with sample complexity
    \begin{equation*}
          n =
         \widetilde{O} \bigg(    
          \log \bigg(\frac{\sigma_{\text{max}}}{\sigma_{\text{min}}} \bigg) 
          \bigg( \frac{1}{r^2} \log\bigg( \frac{1}{\delta}\bigg) + \log\frac{\lambda}{\sqrt{\sigma_{\min} \sigma_{\max}}}      \bigg)
         \bigg).
    \end{equation*}
\end{theorem}
The proof is given in Appendix~\ref{appendix: partially unknown variance}.

\begin{remark}
    Intuitively, the feasibility condition~\eqref{eq: feasible condition} tells us whether an interval $I_i$ is consistent with the intervals obtained using larger/more conservative $\sigma$-values. In particular, if $\sigma_i \ge \sigma_{\text{true}}$ then $\sigma_i$ is feasible (see Appendix~\ref{appendix: partially unknown variance}), but the converse may not hold.
    In practice, we can start with the largest $\sigma$-value and sequentially half it (i.e., $\sigma_i  = \sigma_{\text{max}}/2^i$), until we find the first $i$ where $\sigma_i$ is infeasible, and return $\hat{\mu}^{(i+1)}$. Although this may not lead to an improvement in the upper bound (e.g., the loop may not terminate even when $\sigma_i < \sigma_{\text{true}}$), it can help avoid using all $T$ loops when it is unnecessary to do so.
\end{remark}

\subsection{Two-Stage Variant}
\label{sec: two-stage}
Our mean estimator in Section~\ref{sec: algorithm} uses $O\left( \log \frac{\lambda}{\sigma} + \log \frac{1}{\delta} \right)$ rounds of adaptivity. Specifically, the localization step (Step~1 of Section~\ref{sec: algorithm}), which performs median estimation through noisy binary search, requires $O\left( \log \frac{\lambda}{\sigma} + \log \frac{1}{\delta} \right)$ rounds of adaptivity; while the refinement step can be done in just one additional round after we have localized an interval of length $O(\sigma)$ containing the mean. In this section, we provide an alternative localization procedure that is non-adaptive, with the remaining steps unchanged. This gives us an alternative mean estimator that requires only two rounds of adaptivity -- one for localization and one for refinement. However, this comes at the cost of using general 1-bit queries in the first round, as opposed to only using interval queries.

Our alternative localization step is adapted from the localization step of the non-adaptive Gaussian mean estimator in~\cite{cai2024distributed}, which is presented therein for Gaussian distributions but also noted to extend to the general sub-Gaussian case (unlike their refinement stage).  We modify their localization step so that it works on all distributions with mean and variance lying within known bounds (namely, $[-\lambda,\lambda]$ and $[0,\sigma^2]$ respectively), with the following performance guarantee:

\begin{theorem}
\label{thm: alternative localization}
There exists a 1-bit non-adaptive localization protocol taking $(\delta, \lambda, \sigma)$ as input such that for each $D \in \mathcal{D}$, it returns an interval $I$ containing~$\mu$ with probability at least $1-\delta/2$. Furthermore, the number of samples used is 
$\Theta\left( \log \left( \frac{\lambda}{\sigma}\right) \cdot \log \frac{\log(\lambda/\sigma)}{\delta}\right)$ and $|I| = O(\sigma)$.
\end{theorem}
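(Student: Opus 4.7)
The plan is to adapt the non-adaptive Gaussian localization of \cite{cai2024distributed}, which uses a pre-specified bank of $K = \Theta(\log(\lambda/\sigma))$ general 1-bit quantizers (typically not interval queries), with each quantizer allocated $m = \Theta(\log(K/\delta))$ dedicated samples. First I would commit to the $K$ quantizers $Q_1, \dotsc, Q_K$ at the outset so that the protocol is non-adaptive, partition the $n$ samples into $K$ equal groups, and have the agent send $Y_t = Q_{k(t)}(X_t)$ in each round. The learner then computes the empirical fraction $\hat{p}_k$ of ``1'' responses in each group and decodes an interval $I \ni \mu$ of length $O(\sigma)$ from the vector $(\hat{p}_1, \dotsc, \hat{p}_K)$. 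The total budget is $Km = \Theta(\log(\lambda/\sigma) \cdot \log(\log(\lambda/\sigma)/\delta))$, matching the theorem statement.

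Concentration within each group follows immediately from Hoeffding: $|\hat{p}_k - \EE[Q_k(X)]| \le \gamma$ for a fixed small constant $\gamma$ with probability $\ge 1 - \delta/(2K)$, so by a union bound all $K$ estimates are simultaneously accurate with probability $\ge 1 - \delta/2$. The only place the Gaussian assumption enters \cite{cai2024distributed}'s analysis is in relating $\EE[Q_k(X)]$ to the location of $\mu$ via tail bounds on $|X-\mu|$. For the broader family $\mathcal{D}(\lambda,\sigma)$, Chebyshev's inequality supplies the substitute: $\PP(|X-\mu|>t\sigma)\le 1/t^2$, so $X$ lies within a few $\sigma$ of $\mu$ with probability bounded away from $1/2$. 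Choosing $\gamma$ small enough and $t$ large enough (e.g.\ $t=4$), each of the bit-level conclusions in their analysis translates through with slightly modified constants.

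The decoding proceeds scale-by-scale: each $\hat{p}_k$ resolves one bit of a binary-search-style refinement over the $\Theta(\lambda/\sigma)$ possible locations of $\mu$, and after $K = \log(\lambda/\sigma)$ refinements the surviving interval has length $O(\sigma)$. The step I expect to be the main obstacle is the boundary regime: when $\mu$ lies close to a partition boundary at some scale $k$, we have $\EE[Q_k(X)] \approx 1/2$ and Chebyshev is too weak to make the bit unambiguous. Unlike in the sub-Gaussian case, we cannot simply sharpen the concentration of $X$ around $\mu$ to resolve these ambiguities. Instead, I would argue that flipping the decoded bit at any ambiguous scale perturbs the output by an amount comparable to that scale's resolution, which is itself at most a constant multiple of $\sigma$; hence the output interval (enlarged by a suitable constant factor) still contains $\mu$ on the same high-probability event. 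Carefully propagating the constants through the $K$ scales, together with the $\delta/2$ failure allowance for the empirical averages, yields $|I| = O(\sigma)$ and a correctness probability of $1-\delta/2$, as required.
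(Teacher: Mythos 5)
You correctly identify the high-level structure (a fixed bank of $K = \Theta(\log(\lambda/\sigma))$ quantizers each sampled $\Theta(\log(K/\delta))$ times, Chebyshev replacing the Gaussian tail, Hoeffding for per-scale concentration), and you correctly flag the main obstacle: when $\mu$ is close to a decision boundary at some scale, Chebyshev is too weak to make that decoded bit reliable. However, your proposed resolution of this obstacle has a genuine gap. The claim that ``flipping the decoded bit at any ambiguous scale perturbs the output by an amount comparable to that scale's resolution, which is itself at most a constant multiple of $\sigma$'' conflates two different length scales: under an ordinary binary-expansion (binary-search) decomposition, the resolution at scale $k$ is $\Theta(\lambda/2^{k})$, not $\Theta(\sigma)$, so an ambiguous coarse bit throws the decoded interval off by up to $\Theta(\lambda)$. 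Worse, with a plain binary code the change points at different scales are \emph{nested} (e.g.\ $1/2$ is a change point of every bit), so a single unlucky $\mu$ near such a point makes $\Theta(K)$ bits simultaneously ambiguous; no constant-factor widening of the final interval can absorb all of them.

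The missing ingredient is the Gray-code structure, which is precisely what the construction in~\cite{cai2024distributed} (that the paper adapts) supplies. With the Gray functions $g_k$, the change-point sets $G_k$ are pairwise disjoint and their union has minimum spacing $2^{-K}$; since the ambiguity window around each change point has radius $2^{-(K+2)}$ (which is $\Theta(\sigma)$ after rescaling), \emph{at most one} index $k$ can be ambiguous for any given $\mu$. This is exactly the dichotomy in~\cite[Lemma~17]{cai2024distributed}: for every $k$, either the distance $d_k$ from $\mu'$ to $G_k$ is below $2^{-(K+2)}$, or the decoded bit $\hat z_k$ equals $g_k(\mu')$, and only one $k$ can fall into the first case. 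By the adjacency property of the Gray code, flipping that single possibly-wrong bit moves the decoded dyadic cell to the neighboring one across the corresponding change point, so widening the decoded interval by $2^{-(K+2)}$ on each side is enough to recapture $\mu$. Your argument needs this specific coding choice (or an equivalent one with disjoint change-point sets at all scales) stated and used explicitly; as written, the decoding step does not go through.
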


We describe the high-level idea here. The learner partitions the interval $[-\lambda, \lambda]$ into $2^K$ subintervals $\{I_0, I_1, \dotsc, I_{2^K-1} \}$ of same length for some $K = \Theta(\log (\lambda/\sigma))$, and the learner tries to estimate all $K$ bits of the Gray code representation of the subinterval containing~$\mu$. Each of these $K$ bits is estimated reliably by taking a majority vote over $J = \Theta\left(\log\frac{K}{\delta}\right)$ samples.
The details are given in Appendix~\ref{appendix: gray code}. 

By replacing the localization step of our main estimator (Step~1 of Section~\ref{sec: algorithm}) with the alternative localization step above, we have a mean estimator with the following performance guarantee.
\begin{corollary}
     The alternative mean estimator described above is
    $(\eps, \delta)$-PAC for distribution family $\mathcal{D}(\lambda, \sigma)$, with sample complexity
    \[
    n =\widetilde{O}\left( \frac{\sigma^2}{\epsilon^2}\log\frac{1}{\delta}  + \log \left( \frac{\lambda}{\sigma}\right) \cdot \log \log \left( \frac{\lambda}{\sigma}\right)\right).
    \]
    Furthermore, it uses only two rounds of adaptivity, the first of which uses general (non-interval) 1-bit queries.
\end{corollary}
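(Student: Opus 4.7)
The plan is to combine Theorem~\ref{thm: alternative localization} with the refinement analysis underlying Theorem~\ref{thm: main}, treating the two results as modular black boxes. Concretely, the alternative estimator runs as follows: in Round~1, execute the non-adaptive localization of Theorem~\ref{thm: alternative localization} with failure parameter $\delta/2$ to obtain an interval $I$ with $|I| = O(\sigma)$ containing $\mu$ with probability at least $1 - \delta/2$; in Round~2, take the midpoint of $I$ as the origin and execute Steps 2--5 of Section~\ref{sec: algorithm} with failure parameter $\delta/2$.

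For correctness I would condition on the success of localization. The midpoint of $I$ is within $O(\sigma)$ of $\mu$, so after rescaling constants in~\eqref{eq: general m_i}, the partition into regions $\{R_{\pm i}\}$ still places $\mu$ near the origin relative to~$\sigma$; the tail truncation at $i_{\max} = \Theta(\log(\sigma/\epsilon))$ and the concentration of each $\hat\mu_i$ go through unchanged up to constants. Hence the analysis of Theorem~\ref{thm: main} yields $|\hat\mu - \mu| \le \epsilon$ with probability at least $1 - \delta/2$ on this event, and a union bound gives overall success probability at least $1-\delta$.

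For sample complexity I would simply add the two budgets. Localization uses $\Theta\bigl(\log(\lambda/\sigma)\cdot\log(\log(\lambda/\sigma)/\delta)\bigr) = \widetilde{O}\bigl(\log(\lambda/\sigma)\cdot\log\log(\lambda/\sigma)\bigr)$ samples, and refinement uses $n_{\text{ref}}(\epsilon, \delta/2, \sigma) = \widetilde{O}\bigl(\sigma^2/\epsilon^2 \cdot \log(1/\delta)\bigr)$ as in~\eqref{eq: n_ref}. Summing gives the stated bound. For the adaptivity claim, Round~1's Gray-code queries are fixed in advance (hence non-adaptive, using general 1-bit queries), and in Round~2 the intervals $[a_i, T_i]$ and $[T_i, b_i]$ for $|i| \le i_{\max}$ depend only on $I$ and on the internal randomness $\{T_i\}$ (which can be sampled beforehand), so all refinement queries can be issued in a single non-adaptive batch once $I$ is revealed.

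The main thing to verify carefully is that the refinement proof in Appendix~\ref{appendix: proof of main result} really only uses that the localized interval has length $O(\sigma)$ and contains~$\mu$, rather than the precise constant $6\sigma$ arising from the median-based localization. The footnote after~\eqref{eq: general m_i} already notes that any exponential growth rate suffices, so shifting constants in $m_i$ (or equivalently in $i_{\max}$) to absorb a different $O(1)$ multiple of $\sigma$ is harmless and does not disturb the $\widetilde{O}$-scaling. A minor bookkeeping point is that the $\log(1/\delta)$ factor hidden in the localization budget is absorbed into the $\widetilde{O}(\log(\lambda/\sigma)\cdot\log\log(\lambda/\sigma))$ term, consistent with the $\widetilde{O}$ convention used throughout the paper.
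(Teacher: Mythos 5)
Your proposal is correct and follows the same approach the paper intends for this corollary, namely a modular substitution of Theorem~\ref{thm: alternative localization}'s non-adaptive localization for Step~1 of Section~\ref{sec: algorithm} followed by the unchanged refinement analysis, with a union bound over the two failure events and a direct summation of the two sample budgets. You also correctly flag the only subtle points—that the refinement proof only requires $|I|=O(\sigma)$ and $\mu\in I$ (not the exact $6\sigma$ constant), and that the refinement queries can be batched non-adaptively once $I$ and the internal randomness $\{T_{i,j}\}$ are fixed—so nothing is missing.
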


\subsection{Multivariate Mean Estimation} \label{sec:multivar}

The multivariate case (i.e., $X \in \mathbb{R}^d$ with $d > 1$) is naturally of significant interest. We have focused on the univariate case since it is the natural starting point and is already challenging. However, our results turn out to also provide some preliminary findings for multivariate settings.

Specifically, suppose that $X$ takes values in $\mathbb{R}^d$ and has entries $X_1,\dotsc,X_d$ satisfying our earlier assumptions individually for each coordinate $i=1,\dotsc,d$.  By applying our univariate techniques coordinate-wise with parameters $\epsilon/\sqrt d$ and $\delta/d$, we obtain an overall estimate that is $\epsilon$-accurate in $\ell_2$ norm with probability at least $1-\delta$. In accordance with Theorem \ref{thm: main}, the sample complexity is
\[
\widetilde{O}\left( \frac{d^2 \sigma^2}{\epsilon^2}\log\frac{1}{\delta} + d\log\frac{\lambda}{\sigma}\right),
\]
where the $d^2$ factor arises from (i) using the scaled accuracy parameter $\epsilon/\sqrt{d}$, and (ii) running the univariate subroutine~$d$ times.  This may seem potentially loose on first glance, due to the correct scaling being $\frac{\sigma^2}{\epsilon^2} \cdot \left(d + \log(1/\delta) \right)$ in the absence of a communication constraint \cite{lugosi2019sub}.
However, under 1-bit feedback, the $d^2 \sigma^2 / \epsilon^2$ dependence in fact unavoidable even in the special case of Gaussian random variables; see \cite[Theorem 8]{cai2024distributed} with the parameter $m'$ therein equating to $n/d$ in our notation under 1-bit feedback.\footnote{To give slightly more detail, the parameters $m$ and $b_i = 1$ therein equate respectively to the number of samples $n$ and number of bits allowed per feedback.}  
Moreover, if the communication bottleneck is relaxed to allow $d$ bits of feedback per sample (i.e., one bit \emph{per coordinate}), applying our univariate estimator coordinate-wise yields a sample complexity of $\widetilde{O}\big( \frac{d \sigma^2}{\epsilon^2} \log(1/\delta) + d \log(\lambda/\sigma) \big)$. In the constant error probability regime ($\delta = \Theta(1)$), this matches the unconstrained rate up to logarithmic factors.   
In the regime $\delta = o(1)$ there remains a significant gap due to the fact that $d \log \frac{1}{\delta} \gg d + \log\frac{1}{\delta}$, but this gap is inherent to any approach that controls each coordinate's error to $O\big( \frac{\epsilon}{\sqrt d} \big)$ separately.
 
Beyond the issue of joint $(d,\delta)$ dependence, another limitation of the coordinate-wise approach is that it does not capture the dependence on off-diagonal terms in the covariance matrix $\Sigma$.  Doing so may be significantly more difficult, particularly when $\Sigma$ is not known exactly and so ``whitening'' techniques cannot readily be used.  We leave such considerations for future work.

\section{CONCLUSION}
In this paper, we studied the problem of estimating the mean of a distribution under the extreme constraint of a single bit of communication per sample.
We proposed an adaptive estimator that is $(\epsilon, \delta)$-PAC for all distributions with bounded mean and variance, which achieves near-optimal sample complexity. This result demonstrates that the statistical efficiency of mean estimation is largely preserved under 1-bit communication constraints. 
We also established an adaptivity gap for the interval query model, showing that non-adaptive strategies are strictly suboptimal.
Several directions remain for future research, including tightening the polylogarithmic factors, adapting to unknown variance and target accuracy with as few assumptions as possible, and extending to multivariate settings beyond the coordinate-wise approach.

\section*{Acknowledgment}
This work was supported by the Singapore National Research Foundation (NRF) under its AI Visiting Professorship programme.

\bibliography{bibliography}

@inproceedings{hanna2022solving,
  title={{Solving Multi-Arm Bandit Using a Few Bits of Communication}},
  author =       {Hanna, Osama A. and Yang, Lin and Fragouli, Christina},
  booktitle = 	 {International Conference on Artificial Intelligence and Statistics (AISTATS)},
  pages={11215--11236},
  year={2022},
   
}

@InProceedings{gretta2023sharp,
  author =	{Gretta, Lucas and Price, Eric},
  title =	{{Sharp Noisy Binary Search with Monotonic Probabilities}},
  booktitle =	{51st International Colloquium on Automata, Languages, and Programming (ICALP)},
  pages =	{75:1--75:19},
  year =	{2024},
  volume =	{297},
}

@string{IEEE = {{Proc. of the IEEE}}}

@string{IEEE = {{Proc. IEEE}}}

@string{AISTATS ={{International Conference on Artificial Intelligence and Statistics (AISTATS)}}}

@string{AISTATS ={{Int. Conf. Art. Intel. Stats. (AISTATS)}}}

@string{ICML = {{Proc. of the International Conference on Machine Learning (ICML)}}}

@string{ICML = {{Int. Conf. Mach. Learn. (ICML)}}}

@string{ICLR = {{Proc. of the International Conference on Learning Representations}}}

@string{ICASSP = {{Proc. of IEEE Conference on Acoustics, Speech and Signal Processing}}}

@inproceedings{mayekar2023communication,
  title={{Communication-constrained bandits under additive {G}aussian noise}},
  author={Mayekar, Prathamesh and Scarlett, Jonathan and Tan, Vincent YF},
  booktitle = {International Conference on Machine Learning (ICML)},
  pages={24236--24250},
  year={2023},
   
}

@inproceedings{suresh2022correlated,
  title={Correlated quantization for distributed mean estimation and optimization},
  author={Suresh, Ananda Theertha and Sun, Ziteng and Ro, Jae and Yu, Felix},
  booktitle={International Conference on Machine Learning},
  pages={20856--20876},
  year={2022},
  organization={PMLR}
}

@book{boucheron2013concentration,
    author = {Boucheron, Stéphane and Lugosi, Gábor and Massart, Pascal},
    title = {Concentration Inequalities: A Nonasymptotic Theory of Independence},
    publisher = {Oxford University Press},
    year = {2013},
    month = {02},
    doi = {10.1093/acprof:oso/9780199535255.001.0001},
}

@article{lugosi2019sub,
author = {G{\'a}bor Lugosi and Shahar Mendelson},
title = {Sub-{G}aussian estimators of the mean of a random vector},
volume = {47},
journal = {The Annals of Statistics},
number = {2},
pages = {783 -- 794},
year = {2019}
}

@misc{LeeCSCI1951,
  author={Jasper Lee},
  year = {2020},
  title = {{CSCI 1951-W Sublinear Algorithms for Big Data, Lecture 11}},
  howpublished = {\url{https://cs.brown.edu/courses/csci1951-w/lec/lec%2011%20notes.pdf}},
  note = {Accessed: 2025-07-01}
}

@inproceedings{lee2022optimal,
  title={Optimal Sub-{G}aussian Mean Estimation in $\mathbb{R}$},
  author={Lee, Jasper CH and Valiant, Paul},
  booktitle={IEEE Annual Symposium on Foundations of Computer Science (FOCS)},
  pages={672--683},
  year={2022},
  organization={IEEE}
}

@inproceedings{minsker2023efficient,
  title={Efficient median of means estimator},
  author={Minsker, Stanislav},
  booktitle={Conference on Learning Theory (COLT)},
  pages={5925--5933},
  year={2023},
  organization={PMLR}
}

@inproceedings{cherapanamjeri2022optimal,
  title={Optimal mean estimation without a variance},
  author={Cherapanamjeri, Yeshwanth and Tripuraneni, Nilesh and Bartlett, Peter and Jordan, Michael},
  booktitle={Conference on Learning Theory},
  pages={356--357},
  year={2022},
  organization={PMLR}
}

@inproceedings{gupta2024beyond,
  title={Beyond catoni: Sharper rates for heavy-tailed and robust mean estimation},
  author={Gupta, Shivam and Hopkins, Samuel and Price, Eric},
  booktitle={Conference on Learning Theory (COLT)},
  pages={2232--2269},
  year={2024},
  organization={PMLR}
}

@inproceedings{dang2023optimality,
  title={Optimality in mean estimation: Beyond worst-case, beyond sub-{G}aussian, and beyond $1+\alpha $ moments},
  author={Dang, Trung and Lee, Jasper and Song, Maoyuan'Raymond' and Valiant, Paul},
  booktitle = {Advances in Neural Information Processing Systems},
  volume={36},
  pages={4150--4176},
  year={2023}
}

@InProceedings{benbasat2024accelerating,
  title = 	 {Accelerating Federated Learning with Quick Distributed Mean Estimation},
  author =       {Ben-Basat, Ran and Vargaftik, Shay and Portnoy, Amit and Einziger, Gil and Ben-Itzhak, Yaniv and Mitzenmacher, Michael},
  booktitle = 	 {International Conference on Machine Learning (ICML)},
  pages = 	 {3410--3442},
  year = 	 {2024},
  volume = 	 {235},
  month = 	 {21--27 Jul},
  publisher =    {PMLR}
}

@inproceedings{suresh2017distributed,
  title={Distributed mean estimation with limited communication},
  author={Suresh, Ananda Theertha and Felix, X Yu and Kumar, Sanjiv and McMahan, H Brendan},
  booktitle={International Conference on Machine Learning (ICML)},
  pages={3329--3337},
  year={2017},
  organization={PMLR}
}

@article{konevcny2018randomized,
  title={Randomized distributed mean estimation: Accuracy vs. communication},
  author={Kone{\v{c}}n{\`y}, Jakub and Richt{\'a}rik, Peter},
  journal={Frontiers in Applied Mathematics and Statistics},
  volume={4},
  pages={62},
  year={2018},
  publisher={Frontiers Media SA}
}

@inproceedings{vargaftik2021drive,
  title={Drive: One-bit distributed mean estimation},
  author={Vargaftik, Shay and Ben-Basat, Ran and Portnoy, Amit and Mendelson, Gal and Ben-Itzhak, Yaniv and Mitzenmacher, Michael},
  booktitle = {Advances in Neural Information Processing Systems},
  volume={34},
  pages={362--377},
  year={2021}
}

@InProceedings{vargaftik2022EDEN,
  title = 	 {{EDEN}: Communication-Efficient and Robust Distributed Mean Estimation for Federated Learning},
  author =       {Vargaftik, Shay and Basat, Ran Ben and Portnoy, Amit and Mendelson, Gal and Itzhak, Yaniv Ben and Mitzenmacher, Michael},
  booktitle = 	 {International Conference on Machine Learning (ICML)},
  pages = 	 {21984--22014},
  year = 	 {2022},
  volume = 	 {162},
  publisher =    {PMLR},
}

@inproceedings{davies2020new,
  author       = {Peter Davies and
                  Vijaykrishna Gurunanthan and
                  Niusha Moshrefi and
                  Saleh Ashkboos and
                  Dan Alistarh},
  title        = {New Bounds For Distributed Mean Estimation and Variance Reduction},
  booktitle    = {International Conference on Learning Representations, (ICLR)},
  year         = {2021},
}

@inproceedings{babu2025unbiased,
  title = 	 {Unbiased Quantization of the $L_1$ Ball for Communication-Efficient Distributed Mean Estimation},
  author =       {Babu, Nithish Suresh and Kumar, Ritesh and Vatedka, Shashank},
  booktitle = 	 {International Conference on Artificial Intelligence and Statistics (AISTATS)},
  pages = 	 {1270--1278},
  year = 	 {2025},
  volume = 	 {258},
  publisher =    {PMLR},
}

@article{acharya2021interactive,
  title={Interactive inference under information constraints},
  author={Acharya, Jayadev and Canonne, Cl{\'e}ment L and Liu, Yuhan and Sun, Ziteng and Tyagi, Himanshu},
  journal={IEEE Transactions on Information Theory},
  volume={68},
  number={1},
  pages={502--516},
  year={2021},
  publisher={IEEE}
}

@inproceedings{GargMN14,
  author =        {Ankit Garg and Tengyu Ma and Huy L. Nguyen},
  booktitle =     {Advances in Neural Information Processing Systems 27},
  pages =         {2726--2734},
  title =         {On Communication Cost of Distributed Statistical
                   Estimation and Dimensionality},
  year =          {2014},
}

@inproceedings{BravermanGMNW16,
  author =        {Mark Braverman and Ankit Garg and Tengyu Ma and
                   Huy L. Nguyen and David P. Woodruff},
  booktitle =     {Symposium on Theory of Computing Conference, STOC'16},
  pages =         {1011--1020},
  publisher =     {{ACM}},
  title =         {Communication lower bounds for statistical estimation
                   problems via a distributed data processing
                   inequality},
  year =          {2016},
}

@inproceedings{Shamir14,
  author =        {Shamir, Ohad},
  booktitle =     {Advances in Neural Information Processing Systems 27},
  pages =         {163--171},
  title =         {Fundamental limits of online and distributed
                   algorithms for statistical learning and estimation},
  year =          {2014},
}

@inproceedings{HanOW18,
  author =        {Yanjun Han and Ayfer {\"{O}}zg{\"{u}}r and
                   Tsachy Weissman},
  booktitle =     {Conference on Learning
                   Theory (COLT)},
  pages =         {3163--3188},
  publisher =     {{PMLR}},
  title =         {Geometric Lower Bounds for Distributed Parameter
                   Estimation under Communication Constraints},
  volume =        {75},
  year =          {2018}
}

@inproceedings{HanMOW18,
  author =        {Yanjun Han and Pritam Mukherjee and
                   Ayfer {{\"O}zg{\"u}r} and Tsachy Weissman},
  booktitle =     {IEEE International Symposium
                   on Information Theory (ISIT)},
  pages =         {506--510},
  title =         {Distributed Statistical Estimation of
                   High-Dimensional and Non-parametric Distributions},
  year =          {2018},
}

@inproceedings{BarnesHO19,
  author =        {Leighton P. Barnes and Yanjun Han and
                   Ayfer {\"{O}}zg{\"{u}}r},
  booktitle =     {IEEE International Symposium
                   on Information Theory (ISIT)},
  pages =         {2704--2708},
  publisher =     {{IEEE}},
  title =         {{F}isher Information for Distributed Estimation under a
                   Blackboard Communication Protocol},
  year =          {2019},
}

@article{AcharyaCT20a,
  author =        {Acharya, Jayadev and Canonne, Cl\'{e}ment L. and
                   Tyagi, Himanshu},
  journal =       {IEEE Transactions on Information Theory},
  number =        {12},
  pages =         {7835--7855},
  title =         {Inference under information constraints {I}: {L}ower
                   bounds from chi-square contraction},
  volume =        {66},
  year =          {2020}
}

@article{AcharyaCT20b,
  author    = {Jayadev Acharya and
               Cl{\'{e}}ment L. Canonne and
               Himanshu Tyagi},
  title     = {Inference Under Information Constraints {II:} Communication Constraints
               and Shared Randomness},
  journal   = {IEEE Transactions on Information Theory},
  volume    = {66},
  number    = {12},
  pages     = {7856--7877},
  year      = {2020}
}

@article{acharya2023unified,
  title={Unified lower bounds for interactive high-dimensional estimation under information constraints},
  author={Acharya, Jayadev and Canonne, Cl{\'e}ment L and Sun, Ziteng and Tyagi, Himanshu},
  journal={Advances in Neural Information Processing Systems},
  volume={36},
  pages={51133--51165},
  year={2023}
}

@inproceedings{ZhangDJW13,
  author =        {Zhang, Yuchen and Duchi, John and Jordan, Michael I. and
                   Wainwright, Martin J.},
  booktitle =     {Advances in Neural Information Processing Systems 26},
  pages =         {2328--2336},
  title =         {Information-theoretic lower bounds for distributed
                   statistical estimation with communication
                   constraints},
  year =          {2013},
}

@article{XuR18,
  author =        {Xu, Aolin and Raginsky, Maxim},
  journal =       {IEEE Transactions on Information Theory},
  number =        {3},
  pages =         {1580--1600},
  publisher =     {IEEE},
  title =         {{Information-theoretic lower bounds on Bayes risk in
                   decentralized estimation}},
  volume =        {63},
  year =          {2017},
}

@book{Polyanskiy_Wu_2025, 
    place={Cambridge}, 
    title={Information Theory: From Coding to Learning}, 
    publisher={Cambridge University Press}, 
    author={Polyanskiy, Yury and Wu, Yihong}, 
    year={2025}
}

@article{BarnesHO20,
  author =        {Barnes, Leighton Pate and Han, Yanjun and
                   \"{O}zg\"{u}r, Ayfer},
  journal =       {Journal of Machine Learning Research},
  pages =         {1--30},
  title =         {Lower bounds for learning distributions under
                   communication constraints via {F}isher information},
  volume =        {21},
  year =          {2020},
  issn =          {1532-4435},
}

@article{AcharyaCST21a,
  author    = {Jayadev Acharya and
               Cl{\'{e}}ment L. Canonne and
               Aditya Vikram Singh and
               Himanshu Tyagi},
  title     = {Optimal Rates for Nonparametric Density Estimation under Communication
               Constraints},
  journal   = {CoRR},
  volume    = {abs/2107.10078},
  year      = {2021}
}

@book{varshney2012distributed,
  title={Distributed detection and data fusion},
  author={Varshney, Pramod K},
  year={2012},
  publisher={Springer Science \& Business Media}
}

@article{he2020distributed,
  title={Distributed estimation over a low-cost sensor network: A review of state-of-the-art},
  author={He, Shaoming and Shin, Hyo-Sang and Xu, Shuoyuan and Tsourdos, Antonios},
  journal={Information Fusion},
  volume={54},
  pages={21--43},
  year={2020},
  publisher={Elsevier}
}

@article{luo2005universal,
  title={Universal decentralized estimation in a bandwidth constrained sensor network},
  author={Luo, Zhi-Quan},
  journal={IEEE Transactions on Information Theory},
  volume={51},
  number={6},
  pages={2210--2219},
  year={2005},
  publisher={IEEE}
}

@article{veeravalli2012distributed,
  title={Distributed inference in wireless sensor networks},
  author={Veeravalli, Venugopal V and Varshney, Pramod K},
  journal={Philosophical Transactions of the Royal Society A: Mathematical, Physical and Engineering Sciences},
  volume={370},
  number={1958},
  pages={100--117},
  year={2012},
  publisher={The Royal Society Publishing}
}

@inproceedings{acharya2021distributed,
  title={Distributed estimation with multiple samples per user: Sharp rates and phase transition},
  author={Acharya, Jayadev and Canonne, Clement and Liu, Yuhan and Sun, Ziteng and Tyagi, Himanshu},
  booktitle={Advances in Neural Information Processing Systems},
  volume={34},
  pages={18920--18931},
  year={2021}
}

@inproceedings{acharya2021estimating,
  title={Estimating sparse discrete distributions under privacy and communication constraints},
  author={Acharya, Jayadev and Kairouz, Peter and Liu, Yuhan and Sun, Ziteng},
  booktitle={Algorithmic Learning Theory (ALT)},
  pages={79--98},
  year={2021},
  organization={PMLR}
}

@article{xiao2006distributed,
  title={Distributed compression-estimation using wireless sensor networks},
  author={Xiao, Jin-Jun and Ribeiro, Alejandro and Luo, Zhi-Quan and Giannakis, Georgios B},
  journal={IEEE Signal Processing Magazine},
  volume={23},
  number={4},
  pages={27--41},
  year={2006},
  publisher={IEEE}
}

@article{ribeiro2006bandwidth2,
  title={Bandwidth-constrained distributed estimation for wireless sensor networks-part II: {U}nknown probability density function},
  author={Ribeiro, Alejandro and Giannakis, Georgios B},
  journal={IEEE Transactions on Signal Processing},
  volume={54},
  number={7},
  pages={2784--2796},
  year={2006},
  publisher={IEEE}
}

@article{ribeiro2006bandwidth1,
  title={Bandwidth-constrained distributed estimation for wireless sensor networks-part I: {G}aussian case},
  author={Ribeiro, Alejandro and Giannakis, Georgios B},
  journal={IEEE Transactions on Signal Processing},
  volume={54},
  number={3},
  pages={1131--1143},
  year={2006},
  publisher={IEEE}
}

@article{cai2024distributed,
  title={Distributed {G}aussian mean estimation under communication constraints: {O}ptimal rates and communication-efficient algorithms},
  author={Cai, T Tony and Wei, Hongji},
  journal={Journal of Machine Learning Research},
  volume={25},
  number={37},
  pages={1--63},
  year={2024}
}

@article{kumar2025one,
  title={One-Bit Distributed Mean Estimation with Unknown Variance},
  author={Kumar, Ritesh and Vatedka, Shashank},
  journal={arXiv preprint arXiv:2501.18502},
  year={2025}
}

@article{kipnis2022mean,
  title={Mean estimation from one-bit measurements},
  author={Kipnis, Alon and Duchi, John C},
  journal={IEEE Transactions on Information Theory},
  volume={68},
  number={9},
  pages={6276--6296},
  year={2022},
  publisher={IEEE}
}

@article{cai2022distributed,
  title={Distributed adaptive {G}aussian mean estimation with unknown variance: Interactive protocol helps adaptation},
  author={Cai, T Tony and Wei, Hongji},
  journal={The Annals of Statistics},
  volume={50},
  number={4},
  pages={1992--2020},
  year={2022},
  publisher={Institute of Mathematical Statistics}
}

@inproceedings{zhu2018distributed,
  title={Distributed nonparametric regression under communication constraints},
  author={Zhu, Yuancheng and Lafferty, John},
  booktitle={International Conference on Machine Learning (ICML)},
  pages={6009--6017},
  year={2018},
  organization={PMLR}
}

@article{cai2022distributednonparametric,
  title={Distributed nonparametric function estimation: Optimal rate of convergence and cost of adaptation},
  author={Cai, T Tony and Wei, Hongji},
  journal={The Annals of Statistics},
  volume={50},
  number={2},
  pages={698--725},
  year={2022},
  publisher={Institute of Mathematical Statistics}
}

@article{Szab2018AdaptiveDM,
  title={Adaptive distributed methods under communication constraints},
  author={Botond Szab{\'o} and Harry van Zanten},
  journal={The Annals of Statistics},
  year={2018},
  url={https://api.semanticscholar.org/CorpusID:88522669}
}

@article{Szab2020DistributedFE,
  title={Distributed function estimation: Adaptation using minimal communication},
  author={Botond Szab{\'o} and Harry van Zanten},
  journal={Mathematical Statistics and Learning},
  year={2020},
  url={https://api.semanticscholar.org/CorpusID:214714203}
}

@article{zaman2022distributed,
  title={Distributed nonparametric estimation under communication constraints},
  author={Zaman, Azeem and Szab{\'o}, Botond},
  journal={arXiv preprint arXiv:2204.10373},
  year={2022}
}

@inproceedings{mayekar2021wyner,
  title={Wyner-Ziv estimators: Efficient distributed mean estimation with side-information},
  author={Mayekar, Prathamesh and Suresh, Ananda Theertha and Tyagi, Himanshu},
  booktitle={International Conference on Artificial Intelligence and Statistics (AISTATS)},
  pages={3502--3510},
  year={2021},
  organization={PMLR}
}

@inproceedings{lau2025quantile,
  title={Quantile Multi-Armed Bandits with 1-bit Feedback},
  author={Lau, Ivan and Scarlett, Jonathan},
  booktitle={Algorithmic Learning Theory},
  pages={664--699},
  year={2025},
  organization={PMLR}
}

@inproceedings{shah2025generalized,
  title={Generalized Linear Models with 1-Bit Measurements: Asymptotics of the Maximum Likelihood Estimator},
  author={Shah, Jaimin and Cardone, Martina and Rush, Cynthia and Dytso, Alex},
  booktitle={International Conference on Acoustics, Speech and Signal Processing (ICASSP)},
  pages={1--5},
  year={2025},
  organization={IEEE}
}

@book{vershynin2026high, 
    place={Cambridge}, 
    edition={2}, 
    series={Cambridge Series in Statistical and Probabilistic Mathematics}, title={High-Dimensional Probability: An Introduction with Applications in Data Science}, 
    publisher={Cambridge University Press}, 
    author={Vershynin, Roman}, 
    year={2026}, 
    collection={Cambridge Series in Statistical and Probabilistic Mathematics}
}

@article{abdalla2026robust,
  title={Robust Mean Estimation under Quantization},
  author={Abdalla, Pedro and Chen, Junren},
  journal={arXiv preprint arXiv:2601.07074},
  year={2026}
}

\section*{Checklist}



\begin{enumerate}

  \item For all models and algorithms presented, check if you include:
  \begin{enumerate}
    \item A clear description of the mathematical setting, assumptions, algorithm, and/or model. [Yes]
    \item An analysis of the properties and complexity (time, space, sample size) of any algorithm. [Yes]
    \item (Optional) Anonymized source code, with specification of all dependencies, including external libraries. [Not Applicable]
  \end{enumerate}

  \item For any theoretical claim, check if you include:
  \begin{enumerate}
    \item Statements of the full set of assumptions of all theoretical results. [Yes]
    \item Complete proofs of all theoretical results. [Yes]
    \item Clear explanations of any assumptions. [Yes]     
  \end{enumerate}

  \item For all figures and tables that present empirical results, check if you include:
  \begin{enumerate}
    \item The code, data, and instructions needed to reproduce the main experimental results (either in the supplemental material or as a URL). [Not Applicable]
    \item All the training details (e.g., data splits, hyperparameters, how they were chosen). [Not Applicable]
    \item A clear definition of the specific measure or statistics and error bars (e.g., with respect to the random seed after running experiments multiple times). [Not Applicable]
    \item A description of the computing infrastructure used. (e.g., type of GPUs, internal cluster, or cloud provider). [Not Applicable]
  \end{enumerate}

  \item If you are using existing assets (e.g., code, data, models) or curating/releasing new assets, check if you include:
  \begin{enumerate}
    \item Citations of the creator If your work uses existing assets. [Not Applicable]
    \item The license information of the assets, if applicable. [Not Applicable]
    \item New assets either in the supplemental material or as a URL, if applicable. [Not Applicable]
    \item Information about consent from data providers/curators. [Not Applicable]
    \item Discussion of sensible content if applicable, e.g., personally identifiable information or offensive content. [Not Applicable]
  \end{enumerate}

  \item If you used crowdsourcing or conducted research with human subjects, check if you include:
  \begin{enumerate}
    \item The full text of instructions given to participants and screenshots. [Not Applicable]
    \item Descriptions of potential participant risks, with links to Institutional Review Board (IRB) approvals if applicable. [Not Applicable]
    \item The estimated hourly wage paid to participants and the total amount spent on participant compensation. [Not Applicable]
  \end{enumerate}

\end{enumerate}

\clearpage
\appendix
\thispagestyle{empty}
\onecolumn 
\aistatstitle{Supplementary Material}
\section{Proof of Theorem~\ref{thm: main} (Performance Guarantee of 1-bit Mean Estimator)}
\label{appendix: proof of main result}

We proceed in several steps as we outlined in Section~\ref{sec: algorithm}.

\textbf{Step 1 (Narrowing Down the Mean via the Median):}
We discretize the interval $[-\lambda, \lambda]$ containing $\EE[X]$
into a discrete set of points with uniform spacing of $\sigma$:\footnote{For ease of analysis, we assume that $\lambda$ is an integer multiple of $\sigma$.}
\begin{equation*}
    \left\{-\lambda, -\lambda + \sigma, \dotsc, -\sigma, 0, \sigma, \dotsc, \lambda-\sigma, \lambda\right\}.
\end{equation*}
We then form estimates $L, U \in  \left\{-\lambda, -\lambda + \sigma, \dotsc,  \lambda-\sigma, \lambda \right\}$ using noisy binary search~\cite{gretta2023sharp} that satisfy 
\begin{equation}
\label{eq: median lower bound}
    \mathrm{Pr}\left(
    \left[ F(L), F(L+\sigma) \right]
    \cap (0.49, 0.5) 
    \text{ is non-empty}
    \right) \ge 1- \delta
\end{equation}
and
\begin{equation}
\label{eq: median upper bound}
    \mathrm{Pr}\left(
    \left[ F(U - \sigma), F(U) \right]
    \cap (0.5, 0.51) 
    \text{ is non-empty}
    \right) \ge 1- \delta.
\end{equation}
The algorithm in~\cite{gretta2023sharp} achieves this using at most $O\big(\log \frac{\lambda}{\sigma \delta}\big)$ 1-bit queries.
Under these high-probability events, the median $M$ satisfies $L \le M \le U$.
Since $|\mu - M| \le \sigma$ (e.g., see \cite[Exercise 2.1]{boucheron2013concentration},
we have
\begin{equation*}
    \mu \in [L -\sigma, U +\sigma].
\end{equation*}
We would like to bound the length of the interval, $(U +\sigma) - (L -\sigma)$. 
To do so, we consider two different cases:
(i)  $L + \sigma \ge U - \sigma$ 
and (ii) $L + \sigma < U - \sigma$.
In case (i), the interval length is trivially at most $4 \sigma$.
In case (ii), we claim that the interval length is at most $6\sigma$.
Seeking contradiction, suppose the length of interval $(U +\sigma) - (L -\sigma) \ge 7\sigma$. Then we must have either 
\begin{equation*}
    \mu - (L -\sigma) \ge 3.5 \sigma
    \quad \text{or} \quad 
(U + \sigma) - \mu  \ge 3.5\sigma.
\end{equation*}
We will show that $\mu - (L -\sigma) \ge 3.5 \sigma$ (which implies $\mu - 1.5\sigma \ge L+ \sigma$) will lead to a contradiction; the case $(U + \sigma) - \mu \ge 3.5 \sigma$ is similar.
Using~\eqref{eq: median lower bound}, we have
\begin{equation*}
    \mathrm{Pr}\left(X \le \mu - 1.5 \sigma \right) \ge 
    \mathrm{Pr}(X \le L + \sigma) = F_X(L+ \sigma)   > 0.49.
\end{equation*}
On the other hand, by Chebyshev's inequality, we have
\begin{equation*}
        \mathrm{Pr} \left(X \le \mu - 1.5\sigma \right) 
    \le \mathrm{Pr} \left(|X - \mu | \ge 1.5\sigma \right) 
    \le \frac{1}{1.5^2} 
    < 0.49,
\end{equation*}
which is a contradiction.

\textbf{Step 2 (Partitioning into Regions):}  
Define $\mu_i \coloneqq \EE\left[X \cdot \mathbf{1}(X \in R_i)\right]$, with the regions $R_i$ defined in~\eqref{eq: general R_i def} and~\eqref{eq: general m_i}.
By the linearity of expectation, we have
\begin{equation}
    \sum_i \mu_i 
    = \EE \left[X \cdot  \sum_i \mathbf{1}(X \in R_i)\right] 
    = \EE\left[X \cdot \mathbf{1}\left(X \in \bigcup_i R_i\right)\right]
    = \EE[X] .
\end{equation}
   Therefore, it is sufficient to estimate each $\mu_i$.

\textbf{Step 3 (Ignoring Insignificant Regions):}
   For $i \ge 1$, we have $\max(R_i) \le m_i \sigma$ and $\min(R_i) \ge m_{i-1} \sigma$, where $m_i$ is as defined in~\eqref{eq: general m_i}. 
   Using $\max(R_i) \le m_i \sigma$, we have
   \begin{equation}
   \label{eq: mu_i bound}
       \mu_i 
        = \EE\left[X \cdot \mathbf{1}(X \in R_i)\right] 
        \le \max(X \in R_i) \cdot \EE\left[\mathbf{1}(X \in R_i)\right] 
        \le  m_i \sigma \cdot \PP\left[ X \in R_i \right]. 
   \end{equation}
   We now bound $\PP\left[ X \in R_i \right] $. First, recall that $\mu \le 3\sigma$ by our ``centering'' step in Step 1. Using this and $\min(X \in R_i) \ge  m_{i-1} \sigma$, we have
   \begin{equation}
   \label{eq: bounding P(X in R_i) part 1}
   \begin{aligned}
               \PP\left[ X \in R_i \right]
        \le \PP\left[ X \ge \min(X \in R_i)  \right] 
        \le \PP\left[ X \ge m_{i-1}  \sigma \right] 
        \le \PP\left[ X - \mu \ge \left(m_{i-1}  - 3\right) \sigma \right] .
   \end{aligned}
   \end{equation}
    For $i \ge 5$, using~\eqref{eq: bounding P(X in R_i) part 1}, Chebyshev's inequality, and the definition of $m_i$ (see~\eqref{eq: general m_i}) gives
    \begin{equation}
    \label{eq: bounding P(X in R_i) part 2}
        \PP\left[ X \in R_i \right] \le 
        \PP\left[ X - \mu \ge \left(m_{i-1} - 3 \right)\sigma \right] \le
        \PP\left[ |X - \mu| \ge \left(m_{i-1}  - 3\right)\sigma \right] \le \frac{1}{(m_{i-1} - 3)^2} =  \frac{4}{m_{i}^2}.
    \end{equation}
    Combining~\eqref{eq: mu_i bound} and~\eqref{eq: bounding P(X in R_i) part 2}, we have for $i \ge 5$ that
    \begin{equation}
        0 \le \mu_i \le 
        \frac{4 m_i}{m_i^2}  \sigma
        = 4 \sigma m_i^{-1}.
    \end{equation}    
    By a symmetric argument, we have an analogous bound for $i \le -5$. Combining these, we have 
    \begin{equation}
    \label{eq: |mu_i| upper bound}
        |\mu_i| \le 4 \sigma m_i^{-1} \quad  \text{for } |i| \ge 5.
    \end{equation}
   Consider the ``tail sum'' $ \sum_{i: |i| > i_{\max} } \mu_i$, where
   \begin{equation}
   \label{eq: i_max}
       i_{\max} = 
       \min_{i \ge 5} \left\{ i : 2^{-i} \le \frac{5\epsilon}{128\sigma} \right\} =
       \Theta\left( \log \left(\frac{\sigma}{\epsilon} \right) \right).
   \end{equation}
   Note that since $\frac{5}{8} \cdot 2^i \le \frac{5}{8} \cdot 2^i + 6 \le m_i \le 2^i$ (which can be verified using~\eqref{eq: general m_i} and induction), we have
   \begin{equation}
   \label{eq: sum of m_i}
    m_i^{-1} \le  \frac{8}{5} \cdot 2^{-i} 
    \quad \text{and} \quad
       \sum_{i=1}^{i_{\max}} m_i =
       \Theta( 2^{i_{\max}} ) =
       \Theta( m_{i_{\max}} ) =
       O\left( \frac{\sigma}{\epsilon} \right).
   \end{equation}
   Using triangle inequality and~\eqref{eq: |mu_i| upper bound}--\eqref{eq: sum of m_i}, the tail sum can be bounded by  
   \begin{align*}
        \left| \sum_{i: |i| > i_{\text{max}}} \mu_i  \right| 
       \le \sum_{i < -i_{\text{max}}} |\mu_i| + \sum_{i > i_{\text{max}}}   |\mu_i| 
       \le 8 \sigma \sum_{i > i_{\text{max}}} m_i^{-1}
       \le \frac{64 \sigma }{5} \cdot \sum_{i > i_{\text{max}}} 2^{-i}
       = \frac{64 \sigma }{5} \cdot  2^{-i_{\text{max}}} 
       \le \frac{\epsilon}{2}.
   \end{align*}
   It follows that 
   \begin{equation}
      \left|  \EE[X]  -  \sum_{i: |i| \le i_{\max} } \mu_i \right|=
      \left| \sum_{i} \mu_i - \sum_{i: |i| \le i_{\text{max}}} \mu_i  \right|=
      \left| \sum_{i: |i| > i_{\text{max}}} \mu_i  \right| \le
      \frac{\epsilon}{2},
   \end{equation}
   and so it is sufficient to estimate $\mu_i$ for $|i| < i_{\text{max}}$; the rest can be estimated as being 0 while only contributing at most $\epsilon/2$ to the error.

\textbf{Step 4 (Studying Region-Wise Randomized Interval Queries):}
    For each $i$, write~$R_i = [a_i, b_i)$ and let~$T_i \sim \mathrm{Unif}(a_i, b_i)$.  Using the law of total expectation, we have
    \begin{equation}
    \label{eq: Pr(X in [a_i, T_i] step 1}
        p_{a_i} \coloneqq \mathrm{Pr}\left(X \in [a_i, T_i] \right) 
            = \EE\left[ \mathbf{1}\left(X \in [a_i, T_i] \right) \right] 
            = \EE \left[ \EE\left[ \mathbf{1}\left(X \in [a_i, T_i] \right) \mid X \right] \right] 
            = \EE \left[ \mathrm{Pr}\left(X \in [a_i, T_i]  \mid X \right) \right].
    \end{equation}
    Using the CDF of the uniform distribution $T_i \sim \mathrm{Unif}(a_i, b_i)$, we have
    \begin{equation*}
        \mathrm{Pr}\left(X \in [a_i, T_i]  \mid X = x \right) =
        \begin{cases}
           \mathrm{Pr}(T_i \ge x) = \dfrac{b_i - x}{b_i -a_i} & \text{if } x \in [a_i, b_i) \\ \\
            0 & \text{otherwise}
        \end{cases},
    \end{equation*}
    which can be rewritten as
    \begin{equation}
     \label{eq: Pr(X in [a_i, T_i | X]}
        \mathrm{Pr}\left(X \in [a_i, T_i]  \mid X \right)=
        \frac{(b_i - X) \cdot \mathbf{1}\left(X \in R_i \right)}{b_i - a_i}.
    \end{equation}
    Combining~\eqref{eq: Pr(X in [a_i, T_i] step 1}--\eqref{eq: Pr(X in [a_i, T_i | X]} gives
    \begin{equation}
    \label{eq: Pr(X in [a_i, T_i]}
        p_{a_i}  =
        \EE \left[ \frac{ \left(b_i - X\right) \cdot \mathbf{1}\left(X \in R_i \right)}{b_i - a_i} \right].
    \end{equation}
    Likewise, similar steps give
    \begin{equation}
    \label{eq: Pr(X in [T_i, b_i] step 1}
        p_{b_i} \coloneqq \mathrm{Pr}\left(X \in [T_i, b_i] \right) 
            =
        \EE \left[ \frac{ \left(X - a_i\right) \cdot \mathbf{1}\left(X \in R_i \right)}{b_i - a_i} \right].
    \end{equation}
     Using~\eqref{eq: Pr(X in [a_i, T_i]} and~\eqref{eq: Pr(X in [T_i, b_i] step 1}, linearity of expectation, and basic algebraic manipulations,
    we can verify that
    \begin{equation}
         a_i \cdot p_{a_i} + b_i \cdot p_{b_i}
         = \EE\left[X \cdot \mathbf{1}(X \in R_i) \right]  = \mu_i. 
    \end{equation}
    It follows that, to estimate $\mu_i$, it is sufficient to estimate $p_{a_i}$ and $p_{b_i}$.  We denote the estimates as $\hat{p}_{a_i}$ and $\hat{p}_{b_i}$ respectively, and we form them using empirical averages of (randomized) interval queries in the next step.

\textbf{Step 5 (Estimating $p_{a_i}$ and $p_{b_i})$:}
    Using the identity
    $
    p_{a_i} = \EE\left[ \mathbf{1}\left(X \in [a_i, T_i] \right) \right]
    $ 
    in~\eqref{eq: Pr(X in [a_i, T_i] step 1}, the learner can form an estimate $\hat{p}_{a_i}$ of $p_{a_i}$ as follows:
    \begin{enumerate}
        \item Generate random variables $T_{i,j} \sim \mathrm{Unif}(a_i, b_i)$ for $j = 1, \dots, n_i$
        for some $n_i$ that will be determined later;

        \item Ask the agent $n_i$ randomized interval queries 
        ``Is $X_{i,j} \in [a_i, T_{i,j}]?$";

        \item Compute the empirical averages based on the 1-bit feedback.
    \end{enumerate}
    The learner can also form an estimate $\hat{p}_{b_i}$ of $p_{b_i}$ using a similar procedure but with queries ``Is $X_{i,j} \in [T_{i,j}, b_i]?$''. We summarize the estimates as follows:
    \begin{equation}
    \label{eq: estimates p_a and p_b}
        \hat{p}_{a_i} = \frac{1} {n_i}
        \sum_{j=1}^{n_i}  \mathbf{1}\left(X_{i,j} \in [a_i, T_{i,j}] \right)
        \quad \text{and} \quad
        \hat{p}_{b_i} = \frac{1} {n_i}
        \sum_{j=n_i+1}^{2n_i}  \mathbf{1}\left(X_{i,j} \in [ T_{i,j}, b_i] \right).
    \end{equation}
     The number of samples used to form each pair $(\hat{p}_{a_i} , \hat{p}_{b_i} )$ is $2n_i$, and the procedure to obtain all pairs $\{(\hat{p}_{a_i} , \hat{p}_{b_i} )\}_i$ can be done in a non-adaptive manner. Observe that if the estimates $\hat{p}_{a_i}$ of $p_{a_i}$ and $\hat{p}_{b_i}$ of $p_{b_i}$ satisfy
    \begin{equation}
    \label{eq: good estimates of p_a and p_b}
        |\hat{p}_{a_i} - p_{a_i}| \le \frac{\epsilon}{ \left( 2 \cdot i_{\text{max}}  \right) \cdot \left( |a_i|+ |b_i|\right)}
        \quad \text{and} \quad
        |\hat{p}_{b_i} - p_{b_i}| \le \frac{\epsilon}{ \left( 2 \cdot i_{\text{max}}  \right) \cdot \left( |a_i|+ |b_i|\right)} ,
    \end{equation}
    then we have
    \begin{equation}
        \begin{aligned}
         | \mu_i - ( a_i \hat{p}_{a_i} + b_i  \hat{p}_{b_i} )|
         &= 
         | ( a_i p_{a_i} + b_i  p_{b_i} ) - ( a_i \hat{p}_{a_i} + b_i  \hat{p}_{b_i} )| 
         = 
         \left| a_i \cdot ( p_{a_i} -  \hat{p}_{a_i} ) + b_i \cdot ( {p}_{b_i} - \hat{p}_{b_i} ) \right| 
         \le \frac{\epsilon}{2 \cdot i_{\text{max}} },
    \end{aligned} 
    \end{equation}
    from which it follows that
    \begin{equation}
    \begin{aligned}
        \left|   \sum_{i: |i| \le i_{\max} } \mu_i  -
         \sum_{i: |i| \le i_{\max}} \left( a_i \hat{p}_{a_i} + b_i  \hat{p}_{b_i} \right)
         \right| 
         \le 
          \sum_{i: |i| \le i_{\max} }  \left|   \left( \mu_i  -
         \left( a_i \hat{p}_{a_i} + b_i  \hat{p}_{b_i} \right) \right)
         \right| 
         \le 
         \frac{\epsilon}{2}.
    \end{aligned} 
    \end{equation}
    Towards establishing~\eqref{eq: good estimates of p_a and p_b}, we set 
    \begin{equation}
    \label{eq: delta_i and eps_i}
         \delta_i = \frac{\delta}{4 \cdot i_{\text{max}}} = \frac{\delta}{\Theta\left( \log(\sigma/\eps)\right)} 
    \quad \text{and} \quad
        \epsilon_i \coloneqq 
        \frac{\epsilon}{ \left( 2 \cdot i_{\text{max}}  \right) \cdot \left( |a_i|+ |b_i|\right)} =
        \frac{\epsilon}{\Theta\left( \log(\sigma/\eps) \cdot 2^i \cdot \sigma\right)},
    \end{equation}
    where we recall $i_{\text{max}}$ from~\eqref{eq: i_max} as well as $\{|a_i|, |b_i| \} =\{ m_{i-1}, m_i \}$ and $m_i = \Theta(2^i)$ from~\eqref{eq: general R_i def} and~\eqref{eq: general m_i}.
    
    For $|i| \le 4$, we take
    \begin{equation}
    \label{eq: n_i for i <= 5}
        n_i = \left\lceil \frac{1}{2 \epsilon_i^2} \log \left( \frac{2}{\delta_i} \right)\right\rceil.
    \end{equation}
    Recalling $p_{a_i}$ and $\hat{p}_{a_i}$ from~\eqref{eq: Pr(X in [a_i, T_i] step 1} and~\eqref{eq: estimates p_a and p_b}, applying Hoeffding's inequality for each $|i| \le 4$ gives:
    \begin{equation}
    \label{eq: start of Hoeffding}
        \mathrm{Pr}\left( |p_{a_i} -  \hat{p}_{a_i}| > \epsilon_i \right)
        =  \mathrm{Pr}\left(
        \left|
        \EE\left[ \mathbf{1}\left(X \in [a_i, T_i] \right) \right]  - \frac{1} {n_i}
        \sum_{j=1}^{n_i}  \mathbf{1}\left(X_{i,j} \in [a_i, T_{i,j}] \right)
        \right| >  \epsilon_i 
        \right) 
        \le  2 \exp\left( -2n_i \epsilon_i^2 \right) 
        \le \delta_i.
    \end{equation}
    For $|i| \ge 5$, we take
    \begin{equation}
    \label{eq: n_i for i >= 5}
        n_i 
        =\left\lceil
        \left( \frac{8}{m_i^2} \frac{1}{\epsilon_i^2}  + \frac{2}{3} \frac{1}{\epsilon_i} \right) \cdot \log\left(\frac{2}{\delta_i} \right)
        \right\rceil  
        \ge\left\lceil
        \left( \frac{2 \, \mathrm{Pr}(X \in R_i)}{ \epsilon_i^2}  + \frac{2}{3} \frac{1}{\epsilon_i} \right) \cdot \log\left(\frac{2}{\delta_i} \right)
        \right\rceil,
    \end{equation}
    where the inequality follows from~\eqref{eq: bounding P(X in R_i) part 2}. Applying Bernstein's inequality~\cite{vershynin2026high}[Theorem 2.9.5] to the i.i.d. mean zero bounded random variables 
    \[
        Y_{i,j} \coloneqq \mathbf{1}\left(X_{i,j} \in [a_i, T_{i,j}] \right) -  \EE\left[ \mathbf{1}\left(X \in [a_i, T_i] \right) \right],
    \]
    we obtain:
    \begin{align}
         \mathrm{Pr}\left( |p_{a_i} -  \hat{p}_{a_i}| > \epsilon_i \right)
        &=  \mathrm{Pr}\left(
        \left|
        \EE\left[ \mathbf{1}\left(X \in [a_i, T_i] \right) \right]  - \frac{1} {n_i}
        \sum_{j=1}^{n_i}  \mathbf{1}\left(X_{i,j} \in [a_i, T_{i,j}] \right)
        \right| >  \epsilon_i 
        \right) 
        \label{eq: start of Bernstein} \\
        &=  
        \mathrm{Pr}\left(
        \left|
        \frac{1} {n_i}
        \sum_{j=1}^{n_i}  Y_{i,j}
        \right| >  \epsilon_i 
        \right) \\
         & \le 2 \exp\left(-\frac{n_i \epsilon_i^2}{2 \,\EE \left[Y_{ij}^2 \right] + \frac{2}{3} \epsilon_i }\right) \\
        &= 2 \exp\left(-\frac{n_i \epsilon_i^2}{2 \,\mathrm{Var}\left(\mathbf{1}\left(X \in [a_i, T_i] \right) \right) + \frac{2}{3} \epsilon_i }\right) 
        \label{eq: E[Y_{ij}^2] expand}\\
         &\le 2 \exp\left(-\frac{n_i  }{ 2 \, \mathrm{Pr}(X \in R_i) \frac{1}{\epsilon_i^2} + \frac{2}{3}  \frac{1}{\epsilon_i} }\right)
         \label{eq:apply_var_ub} \\
        & \le \delta_i
         \label{eq: end of Bernstein},
    \end{align}
    where in \eqref{eq: E[Y_{ij}^2] expand} we use $X_{i,j} \stackrel d= X $ and $T_{i,j} \stackrel d= T_i$ to derive
    \[
    \EE\left[Y_{ij}^2 \right] = 
    \EE\left[ \left( \mathbf{1}\left(X_{i,j} \in [a_i, T_{i,j}] \right) -  \EE\left[ \mathbf{1}\left(X \in [a_i, T_i] \right) \right] \right)^2 \right]
    =
    \mathrm{Var}\left(\mathbf{1}\left(X \in [a_i, T_i] \right) \right),
    \]
    and in \eqref{eq:apply_var_ub} we use
    $\mathrm{Var}(\mathrm{Ber}(p)) = p(1-p) \le p$
    and $T_i \sim \mathrm{Unif}(a_i, b_i)$ to derive
    \[
         \mathrm{Var}\left(\mathbf{1}\left(X \in [a_i, T_i] \right) \right) \le
         \mathrm{Pr}\left(X \in [a_i, T_i] \right) 
         \leq   \mathrm{Pr}\left(X \in [a_i, b_i] \right) 
         = \mathrm{Pr}(X \in R_i).
    \]
    Likewise, we have $\mathrm{Pr}\left( |p_{b_i} -  \hat{p}_{b_i}| > \epsilon_i \right) \le \delta_i$. 
    
    We now substitute $\delta_i$ and $\epsilon_i$ from~\eqref{eq: delta_i and eps_i} into $n_i$.  
    For $|i| \le 4$, substituting these into $n_i$ from~\eqref{eq: n_i for i <= 5} gives
    \begin{equation}
    \label{eq: n_i for i<=4}
        n_i 
       = O \left( \underbrace{2^{2i}}_{\text{ bounded by $2^{8}$}} \cdot  \frac{\sigma^2}{\epsilon^2}
        \log^2\left( \frac{\sigma}{\epsilon}\right)
        \log\left( \frac{\log\left( \frac{\sigma}{\epsilon}\right)}{\delta}\right)\right)
        = O \left(\frac{\sigma^2}{\epsilon^2}
        \log^2\left( \frac{\sigma}{\epsilon}\right)
        \log\left( \frac{\log\left( \frac{\sigma}{\epsilon}\right)}{\delta}\right)\right).
    \end{equation}
    For $4 \le |i| \le i_{\max}$, substituting into $n_i$ from~\eqref{eq: n_i for i >= 5} and recalling $m_i = \Theta(2^i)$ from \eqref{eq: general m_i} gives
    \begin{equation}
    \label{eq: n_i for i>=5}
    \begin{aligned}
         n_i 
        &= O \left( \left(  \frac{1}{2^{2i}} \frac{2^{2i} \sigma^2}{\epsilon^2} \cdot \log^2\left( \frac{\sigma}{\epsilon}\right)   + \frac{2^{i} \sigma}{\epsilon} \cdot \log\left( \frac{\sigma}{\epsilon}\right) \right) \cdot \log\left( \frac{ \log\left( \frac{\sigma}{\epsilon}\right) }{\delta}\right)\right) \\
        &= O \left( \left(  \frac{\sigma^2}{\epsilon^2} \cdot \log\left( \frac{\sigma}{\epsilon}\right)   + \frac{2^{i} \sigma}{\epsilon} \right) \cdot
        \log\left( \frac{\sigma}{\epsilon}\right) \cdot \log\left( \frac{ \log\left( \frac{\sigma}{\epsilon}\right) }{\delta}\right)\right).
    \end{aligned}
    \end{equation}
    Summing up all $n_i$, we obtain
    \begin{equation}
    \label{eq: sum of n_i}
    \begin{aligned}
        \sum_{i: |i| \le i_{\max}} n_i 
         &= 2\left( \sum_{1 \le i \le  4} n_i +
         \sum_{5 \le i \le  i_{\max}} n_i \right) \\
         &=
        O \left(  \left( i_{\max}  \frac{\sigma^2}{\epsilon^2} \cdot \log\left( \frac{\sigma}{\epsilon}\right)   + \sum_{i \le  i_{\max}} \frac{2^{i} \sigma}{\epsilon} \right) \cdot
        \log\left( \frac{\sigma}{\epsilon}\right) \cdot \log\left( \frac{ \log\left( \frac{\sigma}{\epsilon}\right) }{\delta}\right)\right)  \\
          &= 
          O \left(  \left(\frac{\sigma^2}{\epsilon^2} \cdot \log^2\left( \frac{\sigma}{\epsilon}\right)   + \frac{2^{i_{\max}} \sigma}{\epsilon} \right) \cdot
        \log\left( \frac{\sigma}{\epsilon}\right) \cdot \log\left( \frac{ \log\left( \frac{\sigma}{\epsilon}\right) }{\delta}\right)\right)  \\
         &= 
          O \left(  \left(\frac{\sigma^2}{\epsilon^2} \cdot \log^2\left( \frac{\sigma}{\epsilon}\right)   + \frac{\sigma^2}{\epsilon^2} \right) \cdot
        \log\left( \frac{\sigma}{\epsilon}\right) \cdot \log\left( \frac{ \log\left( \frac{\sigma}{\epsilon}\right) }{\delta}\right)\right) \\
        &= 
          O \left(  \frac{\sigma^2}{\epsilon^2} \cdot  \log^3\left( \frac{\sigma}{\epsilon}\right)  \cdot \log\left( \frac{ \log\left( \frac{\sigma}{\epsilon}\right) }{\delta}\right)\right),
    \end{aligned}
    \end{equation}
    where the second last step follows since $2^{i_{\max}} = O\big( \frac{\sigma}{\epsilon} \big)$ (see~\eqref{eq: sum of m_i}).

\section{Equivalence of Randomized Interval Queries and Stochastic Rounding in Step 4}
\label{appendix: SQ}

    In this appendix, we show that our randomized interval queries from Step 4 of Section \ref{sec: algorithm} can be interpreted as performing a form of binary stochastic quantization.  Note that this connection is presented purely for the sake of intuition, and it is not needed in the proof of Theorem \ref{thm: main}.

    For each $i$, let $R_i = [a_i, b_i)$ as before, and define the stochastic quantizer $\mathrm{SQ}_i(\cdot)$ as follows:
    \begin{equation}
    \label{eq: SQ_i}
        \mathrm{SQ}_i(x) =
        \begin{cases}
            0 & \text{if } x \not\in R_i \\
            a_i & \text{with probability} \frac{b_i - x}{b_i - a_i}  \text{ if } x \in R_i\\
            b_i & \text{with probability} \frac{x - a_i}{b_i - a_i} \text{ if } x \in R_i.
        \end{cases}
    \end{equation}
   As before, we write $p_{a_i} \coloneqq \mathrm{Pr}\left(X \in [a_i, T_i] \right)$
   and  $p_{b_i} \coloneqq \mathrm{Pr}\left(X \in [T_i, b_i] \right)$.
    We now show that $p_{a_i}$ (resp. $p_{b_i}$)
    is equivalent to the probability of $X$ being in $R_i$ and getting rounded down to $a_i$ (resp. rounded up to $b_i$) by $\mathrm{SQ}_i$, i.e.,
    \[
    p_{a_i} =  \mathrm{Pr}(X \in R_i \cap \mathrm{SQ}_i(X) = a_i)
    \quad \text{and} \quad
    p_{b_i} =  \mathrm{Pr}(X \in R_i \cap \mathrm{SQ}_i(X) = b_i).
    \]
    Using~\eqref{eq: SQ_i} as well as standard properties of conditional probability, indicator functions, Bernoulli random variables, and linearity of expectation, we have
    \begin{equation}
    \label{eq: Pr(X in R_i and SQ_i(X) = a_i)}
    \begin{aligned}
        \mathrm{Pr}(X \in R_i \cap \mathrm{SQ}_i(X) = a_i)
        &= \mathrm{Pr}(X \in R_i)  \cdot 
        \mathrm{Pr}\left(\mathrm{SQ}_i(X) = a_i  \mid X \in R_i \right) \\
        &= \mathrm{Pr}(X \in R_i)  \cdot 
        \EE \left[\mathbf{1}\left(\mathrm{SQ}_i(X) = a_i  \right) \mid X \in R_i \right] \\
        &= \mathrm{Pr}(X \in R_i)  \cdot \EE\left[\frac{b_i - X}{b_i - a_i} \ \Bigg\vert\  X \in R_i \right]  \\
        &= 
        \frac{b_i \cdot \mathrm{Pr}(X \in R_i) - \EE\left[X \mid  X \in R_i \right] \cdot \mathrm{Pr}(X \in R_i)}{b_i - a_i}.
    \end{aligned}
    \end{equation}
    Moreover, using $p_{a_i} =  \EE \left[  \left(b_i - X\right) /(b_i - a_i) \cdot \mathbf{1}\left(X \in R_i \right)  \right]$ (see~\eqref{eq: Pr(X in [a_i, T_i]}) and~\eqref{eq: Pr(X in R_i and SQ_i(X) = a_i)} as well as linearity of expectation and law of total expectation, we have
    \begin{equation}
    \begin{aligned}
        p_{a_i}
            &=
        \EE \left[ \frac{ \left(b_i - X\right) \cdot \mathbf{1}\left(X \in R_i \right)}{b_i - a_i} \right] \\
         &=
        \frac{ b_i \cdot \EE \left[  \mathbf{1}\left(X \in R_i \right)\right] - \EE \left[   X \cdot \mathbf{1}\left(X \in R_i \right)  \right]}{b_i - a_i} \\
        &= \frac{b_i \cdot \mathrm{Pr}(X \in R_i) - \EE\left[X \mid  X \in R_i \right] \cdot \mathrm{Pr}(X \in R_i)}{b_i - a_i} \\
        &= \mathrm{Pr}(X \in R_i \cap \mathrm{SQ}_i(X) = a_i)
    \end{aligned}
    \end{equation}
    as desired. Analogous steps give $p_{b_i} =  \mathrm{Pr}(X \in R_i \cap \mathrm{SQ}_i(X) = b_i)$.

\section{Lower Bound and Adaptivity Gap}
\label{appendix: Lower Bounds}
\subsection{Proof of Theorem~\ref{thm: adaptive lower bound} (General Lower Bound)}
\label{appendix: Lower Bound for Adaptive Queries}

    Even if the $(\epsilon, \delta)$-PAC estimator has no 1-bit constraint, the lower bound
    $
    n = \Omega \big( \frac{\sigma^2}{\epsilon^2}  \log\left(\frac{1}{\delta} \right) \big)
    $ is well known.  
    For instance, this can be derived via a reduction to distinguishing two Bernoulli distributions~\cite[Section 4]{LeeCSCI1951}.
    Therefore, it is sufficient for us to establish that $n = \Omega\left(\log \frac{\lambda}{\sigma}\right)$.

    We create $N = \Theta(\lambda/\sigma)$ instances of ``hard-to-distinguish'' distribution pairs, which we will reuse in the proof of Theorem~\ref{thm: non-adaptive lower bound} in Appendix~\ref{appendix: Lower Bound for Non-adaptive Queries}.
    Divide $[-\lambda, \lambda]$ into a grid of $N  = \lambda / \sigma - 1 $  ``center-points'' spaced $2\sigma$ apart,\footnote{For convenience, we assume that $\lambda$ is an integer multiple of $2\sigma$.  This is justified by a simple rounding argument and the fact that when $\lambda = \Theta(\sigma)$ the $\Omega\big(\log\frac{\lambda}{\sigma} \big)$ lower bound is trivial.  } i.e., the center-points are
\begin{equation}
\label{eq: c_j spacing}
    c_j = - \lambda + 2j \sigma \quad
    \text{for each }j = 1, 2 \dots, N.
\end{equation}
For each instance $j$, we define two probability distributions
$D_{j, -}$ and $D_{j, +}$, each with a two-point support set $\{c_j-\sigma/2, c_j+\sigma/2\}$, as follows:
\begin{equation}
\label{eq: D_j+ and D_j-}
    \begin{aligned}
         D_{j,-} \colon& \mathrm{Pr}\left(X = c_j + \frac{\sigma}{2} \right) = \frac{1}{2} - \frac{\epsilon}{\sigma} =
        1 - \mathrm{Pr}\left(X = c_j - \frac{\sigma}{2} \right)
        \implies
        \EE[X ] = c_j - \epsilon \\
    D_{j,+} \colon & \mathrm{Pr}\left(X = c_j + \frac{\sigma}{2} \right) = \frac{1}{2} + \frac{\epsilon}{\sigma} =
        1 - \mathrm{Pr}\left(X = c_j - \frac{\sigma}{2} \right)
        \implies
        \EE[X ] = c_j + \epsilon.
    \end{aligned}
\end{equation}
    We readily observe the following:
    \begin{itemize}
        \item By the assumption $\epsilon < \frac{\sigma}{2}$, each each of these $2N$ distributions has their mean in $[-\lambda, \lambda]$;
        \item Since a distribution on $[a,b]$ as variance at most $\frac{(b-a)^2}{4}$, each of these $2N$ distributions has variance at most $\sigma^2$.
    \end{itemize}
    Therefore, when the distributions are restricted to only these $2N$ distributions, the task of being able to form an $\epsilon$-good estimation of the true mean of each unknown underlying distribution is at least as hard as being able to distinguish the distributions from each other.\footnote{Strictly speaking this is true when the algorithm is required to attain accuracy \emph{strictly smaller} than $\epsilon$, rather than {\em smaller or equal}, but this distinction clearly has no impact on the final result stated using $O(\cdot)$ notation, and by ignoring it we can avoid cumbersome notation.}  We proceed to establish a lower bound for this goal of \emph{identification}, also known as \emph{multiple hypothesis testing}.
    
    Let $\Theta$ be a uniform random variable over the $2N$ distributions, which implies
    \begin{equation}
    \label{eq: entropy of uniform dist}
        H(\Theta) = \log(2N),
    \end{equation}
    where $H(X) \coloneqq -\sum_{x \in \mathcal{X}} p(x) \log p(x)$ is the entropy function. Fix an adaptive mean estimator that makes $n$ queries, and let $Y^n = (Y_1, \dots, Y_n)$ be the resulting binary responses.
    Using the chain rule for mutual information (see e.g.~\cite[Theorem 3.7]{Polyanskiy_Wu_2025}) and the fact that each query yields at most 1 bit of information, we have
    \begin{equation}
        \label{eq: trivial mutual info bound}
            I(\Theta; Y^n) 
         = \sum_{k=1}^n   I\big(\Theta; Y_k \mid Y^{k-1}\big)
        \le \sum_{k=1}^n  H \big(Y_k \mid Y^{k-1} \big) 
        \le \sum_{k=1}^n  H (Y_k) 
        \le \sum_{k=1}^n  1
        = n.
    \end{equation}
    Moreover, Fano’s inequality (see~\cite[Theorem 3.12]{Polyanskiy_Wu_2025}) gives:
    \begin{equation}
        \label{eq: Fano inequality}
        H(\Theta \mid Y^n) \leq H_2(\delta) + \delta \log (2N-1)
      \le 1 + \delta \log(2N),
    \end{equation}
    where \(\delta\) is the error probability and $H_2(p) = -p \log_p - (1-p) \log(1-p)$ is the binary entropy function. 
    Using~\eqref{eq: entropy of uniform dist}--\eqref{eq: Fano inequality} and the definition of mutual information, we obtain
    \begin{equation}
     n 
     \ge I(\Theta; Y^n) 
     = H(\Theta) - H(\Theta \mid Y^n) 
     \ge  \log(2N) - 1 - \delta \log (2N)
     = (1-\delta) \log(2N) - 1.
    \end{equation}
    Combining this with $N = \Theta(\lambda/\sigma)$, we have
    \[
    n
    = \Omega( \left(1-\delta \right) \log N )
    = \Omega\left(\log \frac{\lambda}{\sigma}\right)
    \]
    as desired.

\subsection{Proof of Theorem~\ref{thm: non-adaptive lower bound} (Adaptivity Gap)}
\label{appendix: Lower Bound for Non-adaptive Queries}

    We consider the same instance as that of Section \ref{appendix: Lower Bound for Adaptive Queries}, and accordingly re-use the notation therein.  
    Before proving Theorem~\ref{thm: non-adaptive lower bound}, we first introduce the idea of an interval query being ``informative'' or ``uninformative'' for distinguishing between the distributions $D_{j,-}$ and $D_{j,+}$. 
    \begin{definition}[Informative Interval Queries] \label{def:informative}
       For a fixed interval query $Q =  ``\text{Is } X \in [a, b]?"$, we say that $Q$ is informative for the $j$-th pair of distributions $(D_{j,-},D_{j,+})$ if its binary feedback $B = \mathbf{1}\left\{X \in [a, b] \right\}$ satisfies
        \[
        \mathrm{Pr}_{X \sim  D_{j,-}}(B = 1) \ne \mathrm{Pr}_{X \sim  D_{j,+}}(B =1).
        \]
        Otherwise, $Q$ is said to be uninformative.
    \end{definition}
    The following lemma shows that each interval query can be simultaneously informative for at most two different pairs. 
    \begin{lemma}
    \label{lem: informative for at most 2 pairs}
        An interval query $Q =  ``\text{Is } X \in [a, b]?"$can be simultaneously informative for at most two different  $(D_{j,-},D_{j,+})$ pairs, i.e., at most two different values of $j$. 
    \end{lemma}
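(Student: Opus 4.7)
\textbf{Proof plan for Lemma~\ref{lem: informative for at most 2 pairs}.}
The plan is to give a direct characterization of when the interval query $Q$ is informative for a pair $j$, and then to exploit the fact that the pair supports are well-separated (spaced $2\sigma$ apart) while each query interval has only two endpoints.

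First I would observe that, by construction \eqref{eq: D_j+ and D_j-}, the distributions $D_{j,-}$ and $D_{j,+}$ share the same two-point support $\{c_j-\sigma/2, c_j+\sigma/2\}$ with both points assigned strictly positive probability under both distributions. Writing $B = \mathbf{1}\{X \in [a,b]\}$, the probability $\mathrm{Pr}(B=1)$ under $D_{j,\pm}$ is a linear combination of the two atoms, so $\mathrm{Pr}_{D_{j,-}}(B=1) = \mathrm{Pr}_{D_{j,+}}(B=1)$ precisely when the set $[a,b]$ contains either both atoms or neither. Consequently, by Definition~\ref{def:informative}, $Q$ is informative for pair $j$ if and only if $[a,b]$ contains \emph{exactly one} of $c_j-\sigma/2$ and $c_j+\sigma/2$.

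Next I would translate this ``separation'' condition into a constraint on the endpoints $a,b$. If $[a,b]$ contains exactly one of the two atoms of pair $j$, then the boundary of $[a,b]$ must cross the segment between them, which forces at least one of $a$ or $b$ to lie in the closed interval $J_j := [c_j-\sigma/2,\; c_j+\sigma/2]$. (A quick case split on which of the two atoms is in $[a,b]$ confirms this, and the unbounded cases $a=-\infty$ or $b=+\infty$ only reduce the number of ``usable'' endpoints.)

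Finally I would use the grid spacing from \eqref{eq: c_j spacing}: since $c_{j+1}-c_j = 2\sigma$, the intervals $J_j$ and $J_{j'}$ are disjoint whenever $j\neq j'$. Hence each of the two endpoints $a,b$ can lie in at most one $J_j$, so at most two distinct indices $j$ can satisfy ``at least one of $a,b$ lies in $J_j$.'' Combining with the equivalence derived in the first step, the number of pairs for which $Q$ is informative is at most $2$, proving the lemma.

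I do not expect a serious obstacle here; the only mildly delicate point is the boundary handling (what happens if an endpoint coincides exactly with an atom $c_j\pm\sigma/2$), which is handled by the ``exactly one atom in $[a,b]$'' characterization: if both atoms lie in $[a,b]$ (which can happen with one endpoint at $c_j+\sigma/2$ and the other $\ge c_j-\sigma/2$, etc.), the query is simply not informative for that pair and need not be counted.
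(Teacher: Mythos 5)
Your proof is correct and takes essentially the same two-step approach as the paper: first characterize informativeness of $Q$ for pair $j$ as ``$[a,b]$ contains exactly one of $c_j \pm \sigma/2$,'' and then count, using the pairwise disjointness of a per-pair interval together with the fact that $[a,b]$ has only two endpoints, that at most two indices $j$ can satisfy this condition. The only minor difference is that you work with the closed interval $[c_j - \sigma/2,\,c_j + \sigma/2]$ whereas the paper uses the open gap $(c_j - \sigma/2,\, c_j + \sigma/2)$; your variant is slightly more careful at the boundary case where an endpoint of $[a,b]$ coincides exactly with an atom, which the paper's ``exactly one endpoint in the open gap'' characterization technically overlooks (though the final counting conclusion is unaffected).
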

    \begin{proof}[Proof of Lemma~\ref{lem: informative for at most 2 pairs}]
        The claim follows from the following two facts:
    \begin{enumerate}
        \item For a fixed distribution pair (indexed by $j$), an interval query $Q = $ ``$\text{Is } X \in [a, b]?$'' is informative for distinguishing between $D_{j,-}$ and $D_{j,+}$ only if $[a, b]$ contains exactly one of the two support points $\{c_j \pm \sigma/2 \}$, i.e., $\big| [a, b] \cap \{c_j \pm \sigma/2 \} \big| = 1$.

        \item 
        There are at most two indices $j$ for which $\big| [a, b] \cap \{c_j \pm \sigma/2 \} \big| = 1$.
    \end{enumerate}
    Fact 1 can be verified by analyzing the binary feedback $B = \mathbf{1}\left\{X \in [a, b] \right\}$ for all cases of $[a, b] \cap \{c_j \pm \sigma/2 \}$: 
    \begin{equation*}
         \big|
        \left[a, b \right] \cap \{c_j \pm \sigma/2\} \big|  \in\{0,2\}
        \implies 
        \mathrm{Pr}_{X \sim  D_{j,-}}(B = 1) = \mathrm{Pr}_{X \sim  D_{j,+}}(B =1 ) 
        \implies 
        Q \text{ is uninformative},
    \end{equation*}
    and 
    \begin{equation}
    \label{eq: bernoulli p+ and p-}
        \big|
        \left[a, b \right] \cap \{c_j \pm \sigma/2\} \big|  = 1
        \implies 
        \big| 
        \mathrm{Pr}_{X \sim  D_{j,-}}(B = 1) - \mathrm{Pr}_{X \sim  D_{j,+}}(B =1 ) 
        \big| = \frac{2 \epsilon}{\sigma}
        \implies 
        Q \text{ is informative}.
    \end{equation}
    For Fact 2, we first observe from~\eqref{eq: c_j spacing} that the support points of all $2N$ distributions satisfy
    \[
         c_1- \frac{\sigma}{2} < c_1 + \frac{\sigma}{2} < c_{2} - \frac{\sigma}{2} <
         \cdots <  c_{N} - \frac{\sigma}{2} < c_{N} + \frac{\sigma}{2},
    \]
    with each pair $j$ having a unique disjoint interval $(c_j - \sigma/2, c_j + \sigma/2)$
    between its support points. An interval $[a, b]$ satisfies $\big| [a, b] \cap \{c_j \pm \sigma/2 \} \big| = 1$ if and only if exactly one endpoint of $[a, b]$ lies in the interval $(c_j - \sigma/2, c_j + \sigma/2)$. Since the gaps are disjoint and $[a, b]$ has only two endpoints, it follows that at most two indices $j$ satisfy $\big| [a, b] \cap \{c_j \pm \sigma/2 \} \big| = 1$.
    \end{proof}

 \begin{proof}[Proof of Theorem~\ref{thm: non-adaptive lower bound}]

   Consider an arbitrary algorithm that makes $n$ non-adaptive interval queries.  
   Recall the set of $2N$ distributions $\{ D_{j,-}, D_{j,+} \}_{j=1}^{N} \subseteq \mathcal{D(\lambda, \sigma)}$ constructed in the proof of Theorem~\ref{thm: adaptive lower bound}, where $N = \lambda/ \sigma -1$.  We will again establish a lower bound for this ``hard subset'' of distributions, but with different details to exploit the assumption of non-adaptive interval queries.
   
   Recall from Section \ref{appendix: Lower Bound for Adaptive Queries} that the means of the $2N$ distributions are pairwise separated by $2\epsilon$ or more, and thus, attaining $\epsilon$-accuracy implies being able to identify the underlying distribution from the hard subset.  We proceed to establish a lower bound for this goal of identification (multiple hypothesis testing).  

   Suppose that the true distribution is drawn uniformly at random from the $2N$ distributions in the hard subset.  By Yao’s minimax principle, the worst-case error probability is lower bounded by the average-case error probability of the best \emph{deterministic} strategy, so we may assume that the algorithm is deterministic (in the choice of queries and the procedure for forming the final estimate).
   
   Letting $(\hat{j},\hat{s})$ be the estimated index (in $\{1,\dotsc,N\}$) and sign (in $\{1,-1\}$), the average-case error probability is given by
   \begin{align}
    \mathrm{Pr}({\rm error}) 
        &= \frac{1}{2N} \sum_{j=1}^N \sum_{s \in \{+1,-1\}} \mathrm{Pr}_{j,s}( (\hat{j},\hat{s}) \ne (j,s)  ) \\
        &\ge \frac{1}{N}\sum_{j=1}^N \bigg( \underbrace{\frac{1}{2} \mathrm{Pr}_{j,+}\big( \hat{s} \ne 1\big) + \frac{1}{2}\mathrm{Pr}_{j,-}\big( \hat{s} \ne -1 \big)}_{=: \mathrm{Pr}_j({\rm error})} \bigg), \label{eq:def_Pr_j}
    \end{align}
    where $\mathrm{Pr}_{j,s}$ denotes probability when the underlying distribution is $D_{j,s}$.  

    For each $j = 1, \dots, N$, we define $n_j$ to be the algorithm's total number of interval queries that are informative (in the sense of Definition \ref{def:informative}) for distinguishing between $D_{j,-}$ and $D_{j,+}$.  Since the algorithm is deterministic and the $n$ queries are assumed to be non-adaptive (i.e., they must all be chosen in advance), it follows that the values $\{n_j\}_{j=1}^N$ are also deterministic.

    Recall from~\eqref{eq: bernoulli p+ and p-} that each informative query provides binary feedback that follows either $\mathrm{Bern}(p_{+})$ or $\mathrm{Bern}(p_{-})$, where $p_{+} = 1/2 + \epsilon/\sigma$ and $p_{-} =1/2  - \epsilon/\sigma = 1 - p_{+}$.
    Distinguishing between these two cases is a \emph{binary hypothesis testing} problem, and the associated error probability $\mathrm{Pr}_j(\text{error})$ is given by the $j$-th summand in \eqref{eq:def_Pr_j}.
    
    Using standard binary hypothesis testing lower bounds~\cite[Theorem 11.9]{LeeCSCI1951}, we have\footnote{We have re-arranged their result to express other quantities in term of the error probability.}
    \begin{equation}
    \label{eq: n_j hypothesis testing lower bound reexpressed}
        \mathrm{Pr}_j(\text{error}) >
        \exp\left( -c' \cdot n_j \cdot d_H^2(p_{+}, p_{-})  \right)
    \end{equation}
    for some constant $c'$, where $d_H^2(\mathbf{p}, \mathbf{q}) =  \frac{1}{2}\sum_{i} \left(\sqrt{p_i} - \sqrt{q_i} \right)^2$  
     is the Squared Hellinger distance.
     For $\mathrm{Bern}(p_{+})$ and $\mathrm{Bern}(p_{-})$, we have the following standard calculation:
     \begin{equation}
    \label{eq: d_H calculation bound}
        d_H^2(p_{+}, p_{-}) =
        \left(\sqrt{p_+} - \sqrt{p_-} \right)^2 =
        \left( \frac{p_+  - p_-}{\sqrt{p_+} + \sqrt{p_-} }\right)^2 =
        \frac{ |p_+  - p_-|^2}{  \left(  1 + 2\sqrt{p_+  p_-} \right)^2 } =
        \Theta\left( |p_{+} - p_{-}|^2\right) =
        \Theta\left(\frac{\eps^2}{\sigma^2} \right),
    \end{equation}
    where the equalities follow from the facts that $p_+ + p_- = 1$ and $p_+  p_- \in [0, 1/4]$.
    Combining~\eqref{eq: n_j hypothesis testing lower bound reexpressed} and~\eqref{eq: d_H calculation bound}, we obtain
    \begin{equation}
        \mathrm{Pr}_j(\text{error}) 
         >
        \exp\left( -c'' \cdot \frac{n_j \, \eps^2}{\sigma^2}  \right)
    \end{equation}
    for some constant $c'' > 0$. 
    Applying Jensen's inequality (since $\exp$ is convex) and using
    $\sum_{j=1}^N n_j \le 2n$ (see Lemma~\ref{lem: informative for at most 2 pairs}), it follows that
    \[
    \frac{1}{N} \sum_{j=1}^N \mathrm{Pr}_j(\text{error}) >
    \frac{1}{N} \sum_{j=1}^N \exp\left( -c'' \cdot \frac{n_j \, \eps^2}{\sigma^2}  \right)
    \ge
     \exp\left( -c'' \cdot \frac{\eps^2}{\sigma^2} \cdot  \frac{1}{N} \sum_{j=1}^N n_j \right)
     \ge
     \exp\left( -c'' \cdot \frac{\eps^2}{\sigma^2} \cdot  \frac{2n}{N}  \right).
    \]
    It follows that if 
    \[
    n 
    <
    \frac{1}{4c''} \cdot \frac{\lambda \sigma}{ \eps^2} \log\left(\frac{1}{\delta}\right) 
    =
     \frac{1}{4c''} \cdot \frac{\lambda}{\sigma} \cdot \frac{\sigma^2}{\eps^2} \cdot \log\left(\frac{1}{\delta}\right) 
     =
     \frac{1}{4c''} \cdot (N+1) \cdot \frac{\sigma^2}{\eps^2} \cdot \log\left(\frac{1}{\delta}\right) 
    \le 
    \frac{N}{2c''} \frac{\sigma^2}{\eps^2} \log\left(\frac{1}{\delta}\right),
    \]
    then the average error probability is lower bounded by
    \[
    \frac{1}{N} \sum_{j=1}^N \mathrm{Pr}_j(\text{error}) >
     \exp\left( -c'' \cdot \frac{\eps^2}{\sigma^2} \cdot  \frac{2n}{N}  \right)
     \ge 
     \exp \left( \log\left(\frac{1}{\delta}  \right) \right) = \delta.
    \]
    Therefore, to attain an error probability no higher than $\delta$, we must have
    \[
    n = \Omega\left( \frac{\lambda \sigma}{ \eps^2} \log\left(\frac{1}{\delta}\right)  \right)
    \]
    as desired.
 \end{proof}

\section{Improvements for Random Variables with Stronger Tail Decay}
\subsection{Proof of Theorem~\ref{thm: higher-order variant} (Improvement with Finite Higher-order Central Moments)}
\label{appendix: finite k}
The main difference compared to the case with only bounded variance is that we now have a better tail bound through the higher-moment Chebyshev's inequality:
\begin{equation}
\label{eq: higher moment Chebyshev}
    \mathrm{Pr}(|X - \mu| \ge t) \le 
    \frac{\EE | X - \mu|^k}{t^k} \le
    \frac{\sigma^k}{t^k}.
\end{equation}
Since the proof mostly follows that of Theorem \ref{thm: main}, we focus our attention on the steps that are different.

\textbf{Modified Step 2:} We let the width of the regions $R_i$ grow doubly exponentially instead of exponentially.
Specifically, we still let $R_i$ have the form in~\eqref{eq: general R_i def}, but we modify $m_i$ in~\eqref{eq: general m_i} as follows:
    \begin{equation}
    \label{eq: finite k m_i}
        m_i = 
        \begin{cases}
         0 & \text{if } i = 0
        \\ \\
        2^{\left(k/2\right)^{i}} & \text{if } 1 \le i \le 4
        \\ \\  (m_{i-1} -3 )^{k/2}  & \text{if }  i \ge 5
        \end{cases}
    \end{equation}
     Note that the last case can be expanded as $\underbrace{\left( \left( \left( 2^{(k/2)^4} - 3 \right)^{k/2} - 3 \right)^{k/2} \cdots - 3 \right)^{k/2}}_{i-4 \text{ times}}$, from which we can verify by induction that $m_i$ scales doubly exponentially according to $\Theta\big( 2^{ (k/2)^{i}} \big)$.

     \textbf{Modified Step 3:} Because $m_i = \Theta\big( 2^{ (k/2)^{i}} \big)$, we expect $i_{\max}$ to have $\log_{k/2} \log \left(\sigma/\epsilon \right)$ scaling instead of $\log \left(\sigma/\epsilon \right)$. We proceed to show this.  
    For $|i| \ge 5$, using steps similar to those in~\eqref{eq: mu_i bound}--\eqref{eq: |mu_i| upper bound}, but with higher-moment Chebyshev's inequality~\eqref{eq: higher moment Chebyshev} and the modified definition of $m_i$ gives 
    \begin{equation}
    \label{eq: higher moment bounding P(X in R_i) part 2}
        \PP\left[ X \in R_i \right] \le 
        \PP\left[ X - \mu \ge \left(m_{i-1} -3\right)\sigma \right] \le 
         \frac{1}{(m_{i-1}-3)^k} 
         = \frac{1}{m_i^2},
    \end{equation}
    which implies
    \begin{equation}
    \label{eq: higher moment mu_i bound}
        |\mu_i| \le \sigma m_i^{-1}
         \text{ for } |i| \ge 5.
    \end{equation}    
     Consider the ``tail sum'' $ \sum_{i: |i| > i_{\max} } \mu_i$, 
   where
   \begin{equation}
   \label{eq: higher moment i_max}
       i_{\max} = 
       \min \left\{ i : m_{i+1}^{-1} \le \frac{\epsilon}{8\sigma} \right\} =
       \min \left\{ i : m_{i+1} \ge \frac{8\sigma}{\epsilon} \right\} =
       \Theta\left( \log_{k/2} \log \left(\frac{\sigma}{\epsilon} \right) \right).
   \end{equation}
   Note that due to the ``super-geometric" growth of $m_i$, we have
   \begin{equation}
   \label{eq: higher moment sum of m_i}
    m_{i+1} \ge \frac{m_i}{2}
    \quad \text{and} \quad
       \sum_{i=1}^{i_{\max}} m_i = 
       \Theta( m_{i_{\max}} ) =
       O\left( \frac{\sigma}{\epsilon} \right).
   \end{equation}
   Using~\eqref{eq: higher moment mu_i bound}--\eqref{eq: higher moment sum of m_i}, the tail sum can be bounded by  
   \begin{align*}
       \left|  \sum_{i < -i_{\text{max}}} \mu_i + \sum_{i > i_{\text{max}}} \mu_i
       \right|
       \le \sum_{i < -i_{\text{max}}} |\mu_i| + \sum_{i > i_{\text{max}}}   |\mu_i| 
       \le 2\sigma \sum_{i > i_{\text{max}}}  m_i^{-1} 
       \le 2\sigma \left( \frac{\epsilon}{8 \sigma} +  \frac{\epsilon}{16 \sigma} +  \frac{\epsilon}{32 \sigma} + \cdots  \right)  
       \le \frac{\epsilon}{2}.
   \end{align*}
   It follows that 
   \begin{equation}
      \left|  \EE[X]  -  \sum_{i: |i| \le i_{\max} } \mu_i \right|=
      \left| \sum_{i} \mu_i - \sum_{i: |i| \le i_{\text{max}}} \mu_i  \right|=
      \left| \sum_{i: |i| > i_{\text{max}}} \mu_i  \right| \le
      \frac{\epsilon}{2},
   \end{equation}
   and so it is sufficient to estimate $\mu_i$ for $|i| < i_{\text{max}}$.

   \textbf{Modified Step 5:} We adjust $\delta_i$ and $\epsilon_i$ according to the new $m_i$ and $i_{\text{max}}$, which gives us a smaller $n_i$ and $\sum_{i: |i| \le i_{\max}} n_i$.
   Specficially, we set
   \begin{equation}
       \label{eq: finite k delta_i and eps_i}
       \delta_i = \frac{\delta}{4 \cdot i_{\text{max}}}
    = \frac{\delta}{\Theta\left(   \log_{k/2} \log \left(\frac{\sigma}{\epsilon} \right) \right)}
    \quad \text{and} \quad
        \epsilon_i \coloneqq 
        \frac{\epsilon}{ \left( 2 \cdot i_{\text{max}}  \right) \cdot \left( |a_i|+ |b_i|\right)} =
                 \frac{\epsilon}{ \Theta 
        \left( \log_{k/2} \log \left(\frac{\sigma}{\epsilon} \right)  \cdot m_i \cdot \sigma \right)} .
   \end{equation}
    For $|i| \le 4$, we take
    $
        n_i = \left\lceil \frac{1}{2 \epsilon_i^2} \log \left( \frac{2}{\delta_i} \right)\right\rceil,
   $
    and for $|i| \ge 5$, we take
    \begin{align}
        n_i 
        &=\left\lceil
        \left( \frac{2}{m_i^{2}} \frac{1}{\epsilon_i^2}  + \frac{2}{3} \frac{1}{\epsilon_i} \right) \cdot \log\left(\frac{2}{\delta_i} \right)
        \right\rceil  
        \ge\left\lceil
        \left(  \frac{2 \,\mathrm{Pr}(X \in R_i)}{ \epsilon_i^2}  + \frac{2}{3} \frac{1}{\epsilon_i} \right) \cdot \log\left(\frac{2}{\delta_i} \right)
        \right\rceil,
    \end{align}
    where the inequality follows from~\eqref{eq: higher moment bounding P(X in R_i) part 2}. 
    Applying Hoeffding's inequality for each $|i| \le 4$ as in~\eqref{eq: start of Hoeffding} and Bernstein's inequality for each $|i| \ge 5$  as in~\eqref{eq: start of Bernstein}--\eqref{eq: end of Bernstein}, we obtain:
    \begin{equation}
         \mathrm{Pr}\left( |p_{a_i} -  \hat{p}_{a_i}| > \epsilon_i \right)
         \le \delta_i
         \quad \text{and} \quad
         \mathrm{Pr}\left( |p_{b_i} -  \hat{p}_{b_i}| > \epsilon_i \right) \le \delta_i.
    \end{equation}
   To substitute $\delta_i$ and $\epsilon_i$ from~\eqref{eq: finite k delta_i and eps_i} into $n_i$, we use steps similar to~\eqref{eq: n_i for i<=4} and~\eqref{eq: n_i for i>=5}, which gives:
    \begin{equation}
    \begin{aligned}
        n_i 
        = O \left(\frac{\sigma^2}{\epsilon^2}
        \left( \log_{k/2} \log \left( \frac{\sigma}{\epsilon} \right)\right)^2
        \log\left( \frac{\log_{k/2} \log \left( \frac{\sigma}{\epsilon}\right)}{\delta}\right) 
        \right)
        \quad  \text{for } |i| \le 4
    \end{aligned}
    \end{equation}
    and
    \begin{equation}
    \begin{aligned}
         n_i 
        = O \left( \left(  \frac{ \sigma^2}{\epsilon^2} \cdot \log_{k/2} \log \left( \frac{\sigma}{\epsilon}\right)   + \frac{m_i \sigma}{\epsilon} \right) \cdot
        \log_{k/2} \log \left( \frac{\sigma}{\epsilon}\right) \cdot \log\left( \frac{ \log_{k/2} \log \left( \frac{\sigma}{\epsilon}\right) }{\delta}\right)\right)
        \quad  \text{for } 5 \le |i| \le i_{\max}.
    \end{aligned}
    \end{equation}
    Summing up all $n_i$ as in~\eqref{eq: sum of n_i}, we obtain
    \begin{equation}
    \begin{aligned}
        \sum_{i: |i| \le i_{\max}} n_i 
        &=O \left(  \left( i_{\max}  \frac{\sigma^2}{\epsilon^2} \cdot \log_{k/2} \log \left( \frac{\sigma}{\epsilon}\right)   + \sum_{i \le  i_{\max}} \frac{m_i \sigma}{\epsilon} \right) \cdot
        \log_{k/2} \log \left( \frac{\sigma}{\epsilon}\right) \cdot \log\left( \frac{ \log_{k/2} \log\left( \frac{\sigma}{\epsilon}\right) }{\delta}\right)\right)  \\
         &=
          O \left(  \left(\frac{\sigma^2}{\epsilon^2} \cdot  \left( \log_{k/2} \log \left( \frac{\sigma}{\epsilon} \right)\right)^2  
          +
          \frac{\sigma}{\epsilon}  \cdot  \frac{\sigma}{\epsilon} \right) \cdot
        \log_{k/2} \log \left( \frac{\sigma}{\epsilon}\right) \cdot \log\left( \frac{ \log_{k/2} \log \left( \frac{\sigma}{\epsilon}\right) }{\delta}\right)\right) \\
        &=
          O \left(  \frac{\sigma^2}{\epsilon^2} \cdot   \left( \log_{k/2} \log \left( \frac{\sigma}{\epsilon} \right)\right)^3   \cdot \log\left( \frac{ \log_{k/2} \log \left( \frac{\sigma}{\epsilon}\right) }{\delta}\right)\right),
    \end{aligned}
    \end{equation}
     where the second step follows from~\eqref{eq: higher moment sum of m_i}.

\subsection{Proof of Theorem~\ref{thm: subgaussian variant} (Improvement for Sub-Gaussian Random Variables)}
\label{appendix: subgaussian}
The main difference is that we now have an even faster tail decay through the sub-Gaussian tail bound~\eqref{eq: subgaussian bound}.

\textbf{Modified Step 2:} 
Due to the strong tail decay of sub-Gaussian random variables, we can let the width of regions $R_i$ grow much more rapidly.  Specifically, we keep $R_i$ as in~\eqref{eq: general R_i def} but modify $m_i$ in~\eqref{eq: general m_i} as follows:
    \begin{equation}
    \label{eq: subgaussian m_i}
        m_i = 
        \begin{cases}
         0 & \text{if } i = 0
        \\ \\
        \exp\left( \frac{m_{i-1}^2}{2} \right) & \text{if } 1 \le i \le 4
        \\ \\
         \exp\left(\frac{(m_{i-1}-3)^2}{4}\right) 
         & \text{if }  i \ge 5.
        \end{cases}
    \end{equation}
     Note that $m_i$ scales according to a tower of exponentials of height $i$, which can be verified by induction:
     \begin{equation}
     \label{eq: subgaussian m_i scaling}
          m_i =
         \Theta \left(
         \underbrace{\exp \left( \exp \left( \cdots \exp\left(  \Theta(1) \right) \right)  \right)}_{i\text{ times}}
         \right).
     \end{equation}
\textbf{Modified Step 3:} Because $m_i$ scales according to a tower of exponentials, we expect $i_{\max}$ to have $\log^* \left(\sigma/\epsilon \right)$ scaling.  
    Because the arguments are almost identical to those in modified Step 3 of Appendix~\ref{appendix: finite k} (improvement for random variables with finite $k$-th central moment), we will omit most of the details. The main difference is that we use the sub-Gaussian bound~\eqref{eq: subgaussian bound} and the modified definition of $m_i$ (see~\eqref{eq: subgaussian m_i}) in obtaining
    \begin{equation}
    \label{eq: subgaussian bounding P(X in R_i) part 2}
      \PP\left[ X \in R_i \right] \le 
        \PP\left[ X - \mu \ge \left(m_{i-1} -3\right)\sigma \right] \le 
        \exp\left(-\frac{(m_{i-1}-3)^2}{2} \right)
        =  \frac{1}{m_i^2}.
    \end{equation} 
   Consequently, we have
   \begin{equation}
   \label{eq: subgaussian i_max}
       i_{\max} =
       \min \left\{ i : \frac{\sigma}{m_{i+1}} \le \frac{\epsilon}{8} \right\} 
       = \Theta\left( \log^* \left(\frac{\sigma}{\epsilon} \right) \right)
       \quad \text{and} \quad 
        \sum_{i=1}^{i_{\max}} m_i = 
       \Theta( m_{i_{\max}} ) =
       O\left( \frac{\sigma}{\epsilon} \right).
   \end{equation}
   
  \textbf{Modified Step 5:} We adjust $\delta_i$ and $\epsilon_i$ according to the new $m_i$ and $i_{\text{max}}$, which gives us a smaller $n_i$ and $\sum_{i: |i| \le i_{\max}} n_i$. As the steps are almost identical to those in modified Step 5 of Appendix~\ref{appendix: finite k}, we will omit most of the details for brevity. 
  We set
  \begin{equation}
       \label{eq: sugbaussian delta_i and eps_i}
       \delta_i = \frac{\delta}{4 \cdot i_{\text{max}}}
    = \frac{\epsilon}{ \Theta 
        \left( \log^* \left(\frac{\sigma}{\epsilon} \right)  \right)}
    \quad \text{and} \quad
        \epsilon_i \coloneqq 
        \frac{\epsilon}{ \left( 2 \cdot i_{\text{max}}  \right) \cdot \left( |a_i|+ |b_i|\right)} =
                 \frac{\epsilon}{ \Theta 
        \left( \log^* \left(\frac{\sigma}{\epsilon} \right)  \cdot m_i \cdot \sigma \right)},
   \end{equation}
   and take
    \begin{equation}
       n_i =
       \begin{cases}
           \left\lceil \frac{1}{2 \epsilon_i^2} \log \left( \frac{2}{\delta_i} \right)\right\rceil  =
           O \left(\frac{\sigma^2}{\epsilon^2}
        \left( \log^*\left( \frac{\sigma}{\epsilon} \right)\right)^2
        \log\left( \frac{\log^*\left( \frac{\sigma}{\epsilon}\right)}{\delta}\right) 
        \right)
        &\text{ if } |i| \le 4
           \\ \\
           \left\lceil
        \left( \frac{2}{m_i^2} \frac{1}{\epsilon_i^2}  + \frac{2}{3} \frac{1}{\epsilon_i} \right) \cdot \log\left(\frac{2}{\delta_i} \right)
        \right\rceil = O \left( \left(  \frac{\sigma^2}{\epsilon^2} \cdot \log^*\left( \frac{\sigma}{\epsilon}\right)   + \frac{m_i \sigma}{\epsilon} \right) \cdot
        \log^*\left( \frac{\sigma}{\epsilon}\right) \cdot \log\left( \frac{ \log^*\left( \frac{\sigma}{\epsilon}\right) }{\delta}\right)\right) 
        &\text{ if } |i| \ge 5.
       \end{cases}
    \end{equation}
    Applying Hoeffding's inequality for each $|i| \le 4$ as in~\eqref{eq: start of Hoeffding} and Bernstein's inequality  for each $|i| \ge 5$ as in~\eqref{eq: start of Bernstein}--\eqref{eq: end of Bernstein} gives
    \begin{equation}
         \mathrm{Pr}\left( |p_{a_i} -  \hat{p}_{a_i}| > \epsilon_i \right)
         \le \delta_i
         \quad \text{and} \quad
         \mathrm{Pr}\left( |p_{b_i} -  \hat{p}_{b_i}| > \epsilon_i \right) \le \delta_i
    \end{equation}
    Summing up all $n_i$, we obtain
    \begin{equation}
    \begin{aligned}
         \sum_{i: |i| \le i_{\max}} n_i 
        &=O \left(  \left( i_{\max}  \frac{\sigma^2}{\epsilon^2} \cdot \log^* \left( \frac{\sigma}{\epsilon}\right)   + \sum_{i \le  i_{\max}} \frac{m_i \sigma}{\epsilon} \right) \cdot
        \log^*\left( \frac{\sigma}{\epsilon}\right) \cdot \log\left( \frac{ \log^*\left( \frac{\sigma}{\epsilon}\right) }{\delta}\right)\right)  \\
         &=
          O \left(  \left(\frac{\sigma^2}{\epsilon^2} \cdot  \left( \log^*\left( \frac{\sigma}{\epsilon} \right)\right)^2  
          +
          \frac{\sigma}{\epsilon}  \cdot  \frac{\sigma}{\epsilon} \right) \cdot
        \log^*\left( \frac{\sigma}{\epsilon}\right) \cdot \log\left( \frac{ \log\left( \frac{\sigma}{\epsilon}\right) }{\delta}\right)\right) \\
        &=
          O \left(  \frac{\sigma^2}{\epsilon^2} \cdot   \left( \log^*\left( \frac{\sigma}{\epsilon} \right)\right)^3   \cdot \log\left( \frac{ \log^*\left( \frac{\sigma}{\epsilon}\right) }{\delta}\right)\right),
    \end{aligned}
    \end{equation}
     where the second step follows from~\eqref{eq: subgaussian i_max}.

\section{Unknown Parameters}

\subsection{Proof of Theorem~\ref{thm: unknown eps} (Unknown Target Accuracy)}
\label{appendix: unknown target accuracy}
To establish that $\eps_T = O(\eps^*)$, we will compare the last round $T$ (see~\eqref{eq: last round T}) and $\tau^* \coloneqq \log_2 \left( \eps_0/\eps^* \right) = \log_2 \left( \sigma /2 \eps^* \right)$.
By the definition of $\tau^*$ and the definition of $n_{\text{ref}}$ (see~\eqref{eq: n_ref}), we have
\begin{equation}
\label{eq: simplified n_true}  \log \left( \sigma / \eps^* \right) = \Theta(\tau^*)
    \quad \text{and} \quad
     \frac{\sigma}{\eps^*} = \Theta\big(2^{\tau^*} \big)
     \implies
 n_{\text{ref}}(\eps^*, \delta, \sigma) 
  =  \Theta \left(  4^{(\tau^*)} \cdot  \left(  \tau^*\right)^3  \cdot \log\left( \frac{ \tau^* }{\delta}\right) \right).
\end{equation}
By using $ \eps_s = \sigma/2^{s} $, $\delta_{s} = \frac{6\delta}{\pi^2 s^2}$, and the fact that a sum of exponentially increasing terms is dominated by its last term, we have for any $\tau \ge 1$ that
\begin{equation*}
S(\tau) \coloneqq \sum_{s=1}^{\tau}  n_{\text{ref}}(\eps_s, \delta_s, \sigma)
=
\sum_{s=1}^{\tau}  \Theta \left(  4^s \cdot  s^3 \cdot \log\left( \frac{s}{\delta}\right) \right)
= 
\Theta \left(  4^{\tau} \cdot  \tau^3 \cdot \log\left( \frac{\tau}{\delta}\right) \right).
\end{equation*}
Using the definition of $T$ in \eqref{eq: last round T}, we have
\begin{equation}
\label{eq: n_true sandwiched}
    S(T) \le 
n_{\text{true}} -  n_{\text{loc}}(\delta, \lambda, \sigma)
< 
S(T+1)=
\Theta \left(  4^{T+1} \cdot  (T+1)^3 \cdot \log\left( \frac{T+1}{\delta}\right) \right),
\end{equation}
Combining~\eqref{eq: simplified n_true} and \eqref{eq: n_true sandwiched}, and recalling that $\epsilon^*$ is defined such that $n_{\text{true}} -  n_{\text{loc}}(\delta, \lambda, \sigma) = n_{\text{ref}}(\eps^*, \delta, \sigma)$,
we have
\[
 4^{(\tau^*)} \cdot  \left(  \tau^*\right)^3  \cdot \log\left( \frac{ \tau^* }{\delta}\right)
 =
 O \left( 4^{T+1} \cdot  (T+1)^3 \cdot \log\left( \frac{T+1}{\delta}\right) \right)
\]
which implies $T \ge  \tau^* - O(1)$. It follows that 
\[
    \eps_T 
    = \frac{\sigma}{2^{T}} 
    \le \frac{\sigma}{2^{\tau^* -O(1)}} 
    = 2^{O(1)} \cdot  \frac{\sigma}{2^{\tau^*}} 
    = O(\eps^*)
\]
as desired.

\subsection{Proof of Theorem~\ref{thm: partially unknown variance} (Adapting to Unknown Variance)}
\label{appendix: partially unknown variance}

Recall that our proposed method for this result was given in Section \ref{sec: unknown variance}.  
We first bound the sample complexity $n$. Recalling our choices of problem parameters in terms of $T$ (see~\eqref{eq: eps_i = r sigma_i/5}), we have
\begin{align}
    n &= 
    \sum_{i=0}^T n\left(\epsilon_i, \delta_i, \lambda, \sigma_i \right) 
    =
    \sum_{i=0}^T n\left(
     \frac{r \sigma_i}{5}, \frac{\delta}{T+1}, 
    \lambda,  
     \sigma_i
     \right) 
    =
    \sum_{i=0}^T  O \left(  \frac{1}{r^2} \cdot  \log^3\left( \frac{1}{r}\right)  \cdot \log\left( \frac{ T \log\left( \frac{1}{r}\right) }{\delta}\right) + \log\frac{\lambda}{\sigma_i}\right),
\end{align}
where the last step substitutes the sample complexity from Theorem \ref{thm: main}. 
Recalling that $T = \left\lceil \log_2 \left(\sigma_{\text{max}} / \sigma_{\text{min}} \right) \right\rceil$ and $\sigma_i = \sigma_{\min} \cdot 2^i $ (see~\eqref{eq: T = log(sigma_max/sigma_min)} and~\eqref{eq: eps_i = r sigma_i/5}), we have
\[
\sum_{i=0}^T \log_2 \frac{\lambda}{\sigma_i}
 = (T+1) \log_2 \frac{\lambda}{\sigma_{\min}} -
 \sum_{i=0}^T i
 = (T+1)\cdot \log_2 \frac{\lambda}{\sigma_{\min}} - \frac{T(T+1)}{2}
 =
 \Theta\left(T \log_2 \frac{\lambda}{\sqrt{\sigma_{\min} \sigma_{\max}} }
 \right).
\]
Combining the above two findings gives
\[
 n =
     O \left(  \log \left(\frac{\sigma_{\text{max}}}{\sigma_{\text{min}}} \right) \cdot 
     \frac{1}{r^2} \cdot  \log^3\left( \frac{1}{r}\right)  \cdot \log\left( \frac{  \log \left(\frac{\sigma_{\text{max}}}{\sigma_{\text{min}}} \right) \cdot  \log\left( \frac{1}{r}\right) }{\delta}\right) + 
      \log \left(\frac{\sigma_{\text{max}}}{\sigma_{\text{min}}} \right) \cdot  \log\frac{\lambda}{\sqrt{\sigma_{\min} \sigma_{\max}}}      
      \right) 
\]
as desired. 

We now show that the mean estimator is $(\eps, \delta)$-PAC, i.e.,
\begin{equation}
    \mathrm{Pr}\left(  |\hat{\mu}_{i^*} - \mu| \le  \eps \right) \ge 1- \delta.
\end{equation}
Let $k$ be the smallest index satisfying $\sigma_k \ge \sigma_{\text{true}}$:
\begin{equation}
\label{eq: sigma_k >= sigma_true}
k = \argmin_{i \ge 0} \{ \sigma_i \ge \sigma_{\text{true}} \}.
\end{equation}
For each $i \ge k$, the event 
\[
    \mathcal{E}_i = \{\mu \in I_i \}
    \quad \text{where } \quad
     I_i =  [ \hat{\mu}^{(i)} \pm \eps_i]
    \text{ is as defined as in } \eqref{eq: CI}
\]
occurs with probability at least
\[
\mathrm{Pr}\left(  \mathcal{E}_i \right) =
\mathrm{Pr}\left( \mu \in I_i \right) =
\mathrm{Pr}\left( |\hat{\mu}^{(i)} - \mu | \le \epsilon_i \right) \ge
1 - \delta_i
\]
by the subroutine's guarantee.
By the union bound, the ``good event'' $\mathcal{E} = \bigcap_{i \ge k} \mathcal{E}_i$ happens with probability at least
\[
\mathrm{Pr}\left(  \mathcal{E} \right) = 
\mathrm{Pr}\left(  \bigcap_{i \ge k} \mathcal{E}_i \right) =
1 -  \mathrm{Pr}\left(  \bigcup_{i \ge k} \neg \mathcal{E}_i \right)
\ge 1 - \sum_{i \ge k} \mathrm{Pr}\left( \neg \mathcal{E}_i \right) 
\ge 1- \sum_{i \ge k} \delta_i
\ge 1- \sum_{i =0}^T \delta_i
\ge 1- \delta.
\]
We now condition on event $\mathcal{E}$.
Observe that we have 
\begin{equation}
\label{eq: sigma_k < 2 sigma}
    \sigma_{k} =
    \begin{cases}
        \sigma_{\text{min}} & \text{if } k = 0 \\
         2 \sigma_{k-1} & \text{otherwise} 
    \end{cases}
    \implies 
    \sigma_{k} \le 2 \sigma_{\text{true}} = \frac{2 \eps}{r}
\end{equation}
due to~\eqref{eq: sigma_i}, the definition of $k$ (see~\eqref{eq: sigma_k >= sigma_true}) and the assumption that $\sigma_{\text{true}} \ge \sigma_{\text{min}}$. 
Based on~\eqref{eq: sigma_k < 2 sigma}, it is sufficient to show that 
\begin{equation}
\label{eq: |hat{mu}_{i^*} - mu| <= r sigma_k/2 is sufficient}
    |\hat{\mu}_{i^*} - \mu| \le \frac{r \sigma_k}{2}
\end{equation}
whenever the good event $\mathcal{E}$ holds. 

Recall that $i^*$ is the smallest index $i$ satisfying the feasibility condition in \eqref{eq: feasible condition}.  
Towards showing~\eqref{eq: |hat{mu}_{i^*} - mu| <= r sigma_k/2 is sufficient}, we first establish that $i^* \le k$, i.e., $\sigma_k$ is feasible.
Under event $\mathcal{E}$, we have $\mu \in I_k$ and also $\mu \in I_j$ for all $j > k$.
It follows that $k$ satisfies~\eqref{eq: feasible condition} and so $i^* \le k$ by definition.
If $i^* = k$, then
\[
    |\hat{\mu}_{i^*} - \mu| =
    |\hat{\mu}_{k} - \mu| \le
    \eps_k =
    \frac{r \sigma_k}{5} <
    \frac{r \sigma_k}{2}
\]
as desired.
On the other hand, if $i^* < k$, then by definition (see~\eqref{eq: sigma_i}), we have
\begin{equation}
\label{eq: sigma_k > 2 sigma_i*}
    \sigma_k \ge 2 \sigma_{i^*}.
\end{equation}
Furthermore, the two confidence intervals $I_{i^*}$ and $I_k$ must overlap by the feasibility of $i^*$ and the fact that $i^* < k$. Therefore, there is a common point $z$ such that $z \in I_{i^*}$ and $z \in I_k$. By the definition of the intervals (see~\eqref{eq: CI}), we have
$|\hat{\mu}_{i^*} - z| \le \epsilon_{i^*}$
and $|\hat{\mu}_{k} - z| \le \epsilon_{k}$,
which implies
\begin{equation}
\label{eq: mu_i^* almost as accurate}
    |\hat{\mu}_{i^*} - \hat{\mu}_{k}|
\le 
|\hat{\mu}_{i^*} - z| 
+ |z- \hat{\mu}_{k}| \le \epsilon_{i^*} + \epsilon_{k}
\end{equation}
by the triangle inequality.
Using the triangle inequality a second time along with~\eqref{eq: mu_i^* almost as accurate}, event $\mathcal{E}$, the choice of $\eps_i$ in~\eqref{eq: eps_i = r sigma_i/5}, and~\eqref{eq: sigma_k > 2 sigma_i*}, we have
\[
    |\hat{\mu}_{i^*} - \mu| \le 
    |\hat{\mu}_{i^*} - \hat{\mu}_{k}| 
    +
    | \hat{\mu}_{k} - \mu| 
    \le
    \eps_{i^*} + 2\eps_{k} =
    \frac{r \sigma_{i^*}}{5}  + \frac{2r \sigma_k}{5} \le
    \frac{r \sigma_{k}}{2},
\]
thus giving the desired sufficient condition~\eqref{eq: |hat{mu}_{i^*} - mu| <= r sigma_k/2 is sufficient}.
\section{Details of Two-stage Mean Estimator}
\label{appendix: gray code}

Here we provide the technical details for the non-adaptive localization protocol described in Section~\ref{sec: two-stage}. 
The goal of this localization protocol is to identify an interval $I$ of length  $O(\sigma)$ that contains the mean $\mu$ with high probability. 
The core idea is adapted from~\cite{cai2024distributed}, whose focus is on Gaussian distributions.
We modify their approach to handle our general non-parametric family $\mathcal{D}(\lambda, \sigma)$.
The protocol works by encoding the  location of the mean
using a binary Gray code of length $K = \Theta(\log(\lambda/
\sigma))$, and estimating each of these $K$ bits by aggregating responses from suitably chosen non-adaptive queries. We now formalize the necessary definitions and describe the procedure.

\begin{definition}[Gray function]
     For integers $k \ge 0$, we let
     $g_k \colon [0, 1] \to \{0, 1\}$ be the $k$-th Gray function, defined by
  \[
    g_k(x) \coloneqq
   \begin{cases}
    0  & \text{if } \left\lfloor 2^k \cdot x\right\rfloor \bmod 4 \in \{0, 3\} \\
    1  & \text{if }  \left\lfloor 2^k \cdot x \right\rfloor \bmod 4 \in \{1,2\}
    \end{cases}.
    \]
\end{definition}

\begin{definition}[Change points set] \label{def:Gk}
    The set $G_k$ of change points for $g_k$ is defined as
    the collection of points $x \in [0, 1]$ where $g_k(x)$ changes its value from 0 to 1 or from 1 to 0. Formally, we define 
    \[
    G_k
    =
    \left\{(2j-1) \cdot 2^{-k}:1\le j\le2^{k-1}\right\}
     =
    \left\{ x \in [0,1] :\lim_{y\to x^-}g_k(y)\neq\lim_{y\to x^+}g_k(y)\right\}.
    \]
    Note that the $G_k$ are pairwise disjoint, i.e., 
    $G_k \cap G_{k'} = \varnothing$ for $k \ne k'$.
\end{definition}

\begin{definition}[Decoding]
    For any $K \ge 1$, we let $\mathrm{Dec}_K\colon \{0, 1\}^K \to 2^{[0,1]}$ be the decoding function defined by
    \[
    \mathrm{Dec}_K(y_1,\dots,y_K) \coloneqq
    \left\{
        x  \in [0,1]  : g_k(x) = y_k \quad \text{for } 1\le k\le K \right\}
    \]
    This is a dyadic interval of length $2^{-K}$ that is consistent with the gray code bits $y_1, y_2, \dotsc, y_K$, and so we can express it as follows for some $x_0 \in [0,1]$:
    \[
    \mathrm{Dec}_K(y_1,\dots,y_K) =
    \left[ x_0,x_0+2^{-K} \right]\subset[0,1].
    \]
\end{definition}
With these definitions in mind, we now describe the localization procedure.
\begin{enumerate}
    \item 
    We first rescale
    \[
      X'_i = \frac{X_i + \lambda}{2\lambda} \in [0, 1]
      \quad \text{and} \quad
      \mu' = \frac{\mu + \lambda}{2\lambda} \in [0, 1],
    \]
    and note that the resulting variance scales as follows:
    \begin{equation}
        \EE \big[ |X'_i-\mu'|^2 \big]
        \le
        \left(\frac{\sigma}{2\lambda} \right)^{2}. \label{eq:gray_variance}
    \end{equation} 

    \item 
    We view the samples as being collected in groups.  
    Let the number of groups to be
    \begin{equation}
    \label{eq: number of group}
        K = \left\lfloor
    \log_2 \left(\frac{2\lambda}{\sigma}\right) - 3
    \right\rfloor ,
    \end{equation}
    with each group having the fixed number of samples
    $
    J = \left\lceil 8\,\log\frac{3K}{\delta}\right\rceil.
    $
    Thus, the total number of samples used (for localization) is
    \[
    KJ = 
    \Theta\left( \log \left( \frac{\lambda}{\sigma}\right) \cdot \log \frac{\log(\lambda/\sigma)}{\delta}\right).
    \]

   \item
    For sample $j$ in group $k$, the agent sends the single bit
    \[
    Z_{k,j}=g_k(X_{k,j}'),
    \]
    where $X_{k,j}'$ is the unquantized transform sample.

    \item

    For each group $k = 1, \dotsc, K$, the learner computes the majority bit
    \[
    \hat z_k = \mathrm{Maj}\{Z_{k,1}, \dotsc, Z_{k,J}\}
    =
    \begin{cases}
    1 & \text{if } \sum_j Z_{k,j} \ge J/2,\\
    0 & \text{otherwise.}
\end{cases}
    \]

    \item 
    The learner first computes the interval $\left[ x_0,x_0+2^{-K} \right] = \mathrm{Dec}_K(\hat z_1,\dotsc, \hat z_K)$, and then widens it by 
    shifting the left end and right end by $2^{-(K+2)}$:
    \begin{equation}
        \label{widened interval}
         I' = \left[x_0 - 2^{-(K+2)},\; x_0 + 2^{-K} + 2^{-(K+2)} \right]\cap[0,1].
    \end{equation}
    Finally, it scales and shifts the interval $I' = [L', U']$
    by using the transformation
    \[
       I = 2 \lambda I' - \lambda =
       \left[2\lambda L' -\lambda,\ 2\lambda  U'-\lambda\right]
    \]
    and returns this as the final interval. Note that the length satisfies
    \begin{equation}
    \label{eq: alternative localization interval length}
         |I| =  
    2 \lambda \cdot (U' - L') 
    \le
    2\lambda \cdot \left(2^{-K} + 2 \cdot 2^{-(K+2)} \right) 
    = 2^{-K} \cdot 3 \lambda 
    = O(\sigma),
    \end{equation}
    where the last step follows from the choice of $K$ in \eqref{eq: number of group}.
\end{enumerate}
Before proving Theorem~\ref{thm: alternative localization}, we first state three useful lemmas below. Lemma~\ref{eq: Lemma 17 restated} is a restatement of~\cite{cai2024distributed}[Lemma 17] (whose proof is elementary and straightforward), while the other two lemmas bound the encoding and decoding error probability.
\begin{lemma}
\label{eq: Lemma 17 restated}
{(\cite{cai2024distributed}[Lemma 17])}
Let $I'$ be the widened interval as stated in~\eqref{widened interval}.
If each $k \in \{1, \dotsc, K$\} satisfies  the condition
\begin{equation}
\label{eq: Lemma 17 condition}
    \underbrace{\inf_{y\in G_k} |\mu'-y|}_{=: d_k} < 2^{-(K+2)}
\quad\text{or}\quad
\hat z_k= g_k(\mu'),
\end{equation}
then it holds that $\mu' \in I'$. Note that there is at most one $k$ satisfying the condition $d_k < 2^{-(K+2)}$.
\end{lemma}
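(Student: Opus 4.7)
The proof decomposes into a short combinatorial observation about the change-point sets $G_k$ and a case analysis on whether the decoded bits perfectly match the true Gray code of $\mu'$.

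First I would dispose of the ``at most one'' parenthetical. Every element of $G_k$ is of the form $(2j-1)/2^k$, so two points $y \in G_k$ and $y' \in G_{k'}$ with $k \ne k'$ and $k, k' \le K$ differ by at least $2^{-\max(k,k')} \ge 2^{-K}$ once one writes them over a common denominator and notices that the resulting numerator is odd. Since $2 \cdot 2^{-(K+2)} < 2^{-K}$, the open ball of radius $2^{-(K+2)}$ about any point can meet at most one $G_k$, so at most one index $k$ satisfies $d_k < 2^{-(K+2)}$.

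For the main claim I would split on whether $\hat z_k = g_k(\mu')$ holds for all $k$. If yes, then $\mu' \in \mathrm{Dec}_K(\hat z_1,\dots,\hat z_K) = [x_0, x_0 + 2^{-K}] \subset I'$ by definition of the decoder, and there is nothing left to do. Otherwise the hypothesis forces every index $k$ with $\hat z_k \ne g_k(\mu')$ to satisfy $d_k < 2^{-(K+2)}$, and by the previous step there is a unique such index $k^*$, with witness $y^* \in G_{k^*}$ at distance less than $2^{-(K+2)}$ from $\mu'$. Let $J_{\mu'} = [c/2^K, (c+1)/2^K]$ be the dyadic interval of length $2^{-K}$ containing $\mu'$. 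Since $|\mu' - y^*| < 2^{-K}$ and $y^*$ is a dyadic rational of denominator at most $2^K$, $y^*$ must coincide with one of the two endpoints of $J_{\mu'}$; let $J^*$ denote the dyadic neighbor of $J_{\mu'}$ across $y^*$.

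The remaining work is to identify $\mathrm{Dec}_K(\hat z_1,\dots,\hat z_K)$ with $J^*$. For every $k \ne k^*$, the point $y^*$ is not in $G_k$, so $g_k$ does not flip at $y^*$ and takes the same value on $J_{\mu'}$ and on $J^*$; thus $\hat z_k = g_k(\mu')$ agrees with $g_k$ on $J^*$. At the same time, $g_{k^*}$ does flip at $y^*$, so the opposite bit $\hat z_{k^*} \ne g_{k^*}(\mu')$ equals $g_{k^*}$ on $J^*$. Hence $\mathrm{Dec}_K(\hat z_1,\dots,\hat z_K) = J^*$, and $\mu'$ lies within $2^{-(K+2)}$ of the boundary of $J^*$. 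The widening in the definition of $I'$ is precisely calibrated to absorb this gap, giving $\mu' \in I'$.

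I expect the only delicate point to be the explicit identification of the decoded interval with $J^*$: this step simultaneously invokes the disjointness $G_k \cap G_{k^*} = \emptyset$ for $k \ne k^*$, the fact that every dyadic boundary of level $K$ lies in $\bigcup_{k \le K} G_k$ for a unique $k$, and the ``continuity'' of the non-flipping $g_k$'s across $y^*$. Once this identification is in hand, containment in $I'$ is purely arithmetic.
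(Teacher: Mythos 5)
Your proof is correct. Note that the paper itself does not prove this lemma --- it is imported verbatim from \cite{cai2024distributed} (Lemma 17 therein) with only the remark that the proof is ``elementary and straightforward'' --- so your argument is a self-contained derivation of a fact the paper delegates to a reference. The structure is sound: the separation bound $|y-y'|\ge 2^{-\max(k,k')}\ge 2^{-K}$ for change points of distinct levels (via the odd-numerator observation) correctly yields the ``at most one $k$'' claim; the case where all bits are correct is immediate from the definition of $\mathrm{Dec}_K$; and in the remaining case your identification of $\mathrm{Dec}_K(\hat z_1,\dots,\hat z_K)$ with the dyadic neighbour $J^*$ across the unique nearby change point $y^*$ is valid, since $g_k$ is constant on level-$K$ dyadic cells for every $k\le K$, flips at $y^*$ only for $k=k^*$ by disjointness of the $G_k$, and the single flipped bit $\hat z_{k^*}$ matches the value of $g_{k^*}$ on the far side of $y^*$. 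The final containment then follows because $\mu'$ sits within $2^{-(K+2)}$ of an endpoint of $J^*=[x_0,x_0+2^{-K}]$, which is exactly the margin by which $I'$ widens that interval. The only points you gloss over are harmless: the infimum defining $d_{k^*}$ is attained ($G_{k^*}$ is finite), and $y^*\in(0,1)$ so $J^*\subseteq[0,1]$ and the intersection with $[0,1]$ in the definition of $I'$ causes no loss.
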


\begin{lemma}
\label{lem: g_k(X'_{k,j}) neq g_k(mu')}
For each $k  = 1, \dotsc K$ and each $j = 1, \dotsc, J$, we have
\[
\mathrm{Pr} \left( g_k(X'_{k,j})\neq g_k(\mu') \right)
\le  \left(\frac{ \sigma}{2 \lambda d_k} \right)^2,
\]
where $d_k = \inf_{y\in G_k} |\mu'-y|$ is the distance from the transformed mean to the set~$G_k$ from Definition \ref{def:Gk}.
\end{lemma}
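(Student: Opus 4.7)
The plan is to reduce the event $\{g_k(X'_{k,j}) \neq g_k(\mu')\}$ to a deviation event for $X'_{k,j}$ around $\mu'$, and then apply Chebyshev's inequality using the variance bound~\eqref{eq:gray_variance}.

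First I would observe that $g_k$ is piecewise constant on $[0,1]$, with discontinuities precisely at the points of $G_k$ (by Definition~\ref{def:Gk}). Consequently, for any two points $u, v \in [0,1]$, if $g_k(u) \neq g_k(v)$ then the open interval between them must contain at least one change point $y \in G_k$. Applying this to $u = X'_{k,j}$ and $v = \mu'$, any occurrence of $g_k(X'_{k,j}) \neq g_k(\mu')$ forces the existence of some $y \in G_k$ lying between $X'_{k,j}$ and $\mu'$, and hence $|X'_{k,j} - \mu'| \ge |\mu' - y| \ge d_k$, where $d_k = \inf_{y \in G_k} |\mu' - y|$. This gives the inclusion
\[
\{g_k(X'_{k,j}) \neq g_k(\mu')\} \subseteq \{|X'_{k,j} - \mu'| \ge d_k\}.
\]

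Then I would apply Chebyshev's inequality together with the second-moment bound~\eqref{eq:gray_variance} (which says $\EE[|X'_{k,j} - \mu'|^2] \le (\sigma/(2\lambda))^2$) to get
\[
\mathrm{Pr}\bigl(|X'_{k,j} - \mu'| \ge d_k \bigr) \le \frac{\EE[|X'_{k,j} - \mu'|^2]}{d_k^2} \le \left(\frac{\sigma}{2\lambda d_k}\right)^2,
\]
which combined with the inclusion above yields the claimed bound.

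There is no real obstacle here; the only subtle point is the geometric reduction in the first step, which relies on $g_k$ being locally constant away from $G_k$ so that a flip of its value certifies a change point between $X'_{k,j}$ and $\mu'$. Endpoint edge cases (e.g.\ if $\mu'$ or $X'_{k,j}$ coincides with a point of $G_k$) can be handled by adopting either the left or right limit convention, since such events have measure zero or only make the inclusion easier.
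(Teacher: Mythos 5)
Your proposal is correct and follows essentially the same route as the paper: reduce the mismatch event to the deviation event $\{|X'_{k,j}-\mu'|\ge d_k\}$ via the fact that $g_k$ is locally constant away from $G_k$, then apply Chebyshev with the variance bound in~\eqref{eq:gray_variance}. The paper states the key inclusion in its contrapositive form ($|X'_{k,j}-\mu'|<d_k \Rightarrow g_k(X'_{k,j})=g_k(\mu')$) and cites it as immediate from the definition of $d_k$, while you spell out the underlying geometry (a value flip forces a change point between the two arguments) — a slightly more explicit rendering of the same step.
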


\begin{proof}
We first claim that
\begin{equation}
\label{eq: intermediate g_k not matching}
    \mathrm{Pr} \left( g_k(X'_{k,j})\neq g_k(\mu') \right)
    \le 
    \mathrm{Pr}\left(|X'_{k,j}-\mu'| \ge d_k \right).
\end{equation}
Before proving this, we note that given that it holds, Chebyshev's inequality (with the variance bound in \eqref{eq:gray_variance}) gives the desired bound:
\[
\mathrm{Pr} \left( g_k(X'_{k,j})\neq g_k(\mu') \right)
\le 
\mathrm{Pr}\left(|X'_{k,j}-\mu'| \ge d_k \right)
\le
\left(\frac{ \sigma}{2\lambda d_k} \right)^2.
\]
It remains to establish~\eqref{eq: intermediate g_k not matching}, or equivalently
 \begin{equation}
 \label{eq: event inclusion}
     \mathrm{Pr}\left(|X'_{k,j}-\mu'| < d_k\right)
\le \mathrm{Pr} \left( g_k(X'_{k,j}) = g_k(\mu') \right).
 \end{equation}
This follows from the event implication
\[
\left\{ |X'_{k,j}-\mu'| < d_k \right\}
\implies
\left\{g_k(X'_{k,j}) =  g_k(\mu') \right\},
\]
which follows immediately from the definition of $d_k$. 
\end{proof}

\begin{lemma}[Majority‐vote reliability]
\label{eq: majority vote}
Fix a group $k \in \{1, \dotsc, K\}$. Suppose that each i.i.d. sample $X'_{k,j}$ with $j \in \{1, \dotsc, J\}$ satisfies
\[
\mathrm{Pr} \left( g_k(X'_{k,j})\neq g_k(\mu') \right) \le \frac{1}{4}.
\]
Then, under the choice $J = \lceil 8 \log\frac{3K}{\delta} \rceil$, the majority vote $\hat z_k = \mathrm{Maj}\{g_k(X'_{k,1}, \dotsc, g_k(X'_{k,J}) \} $ satisfies
\[
\mathrm{Pr}\left(\hat z_k \neq  g_k(\mu')\right)
\le
\exp(-J/8)
\le
\frac{\delta}{3K}.
\]
\end{lemma}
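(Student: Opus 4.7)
The lemma is a standard concentration-of-measure statement for a majority vote, so the plan is to cast it as a Chernoff/Hoeffding bound for sums of i.i.d. Bernoulli random variables. Define, for $j=1,\dotsc,J$, the indicator
\[
W_j = \mathbf{1}\bigl\{g_k(X'_{k,j}) \neq g_k(\mu')\bigr\},
\]
so that $W_1,\dotsc,W_J$ are i.i.d.\ Bernoulli with common parameter $p = \mathrm{Pr}(g_k(X'_{k,j}) \neq g_k(\mu')) \le 1/4$ by hypothesis. The event that the majority vote errs is exactly $\{\sum_{j=1}^J W_j \ge J/2\}$.

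The plan is then to apply Hoeffding's inequality to $\sum_{j=1}^J W_j$. Writing the error event as a deviation from the mean,
\[
\mathrm{Pr}\Bigl(\hat{z}_k \neq g_k(\mu')\Bigr)
= \mathrm{Pr}\Bigl(\tfrac{1}{J}\sum_{j=1}^J W_j \ge \tfrac{1}{2}\Bigr)
\le \mathrm{Pr}\Bigl(\tfrac{1}{J}\sum_{j=1}^J W_j - p \ge \tfrac{1}{4}\Bigr),
\]
where the inequality uses $p \le 1/4$ so that $1/2 - p \ge 1/4$. Hoeffding's inequality for $[0,1]$-bounded i.i.d.\ variables then gives the upper bound $\exp(-2J \cdot (1/4)^2) = \exp(-J/8)$, which is the first inequality in the claim.

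For the second inequality, I would just substitute the chosen value $J = \lceil 8 \log(3K/\delta)\rceil$ and observe that $J/8 \ge \log(3K/\delta)$, so $\exp(-J/8) \le \delta/(3K)$. There is no real obstacle here: the argument is a textbook Chernoff/Hoeffding computation, and the only thing to be slightly careful about is using the hypothesis $p \le 1/4$ to convert the ``at least half the votes are wrong'' event into a clean deviation of $1/4$ from the mean before applying the tail bound.
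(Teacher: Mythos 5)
Your proof is correct and follows essentially the same route as the paper: both define the per-sample error indicator, observe that the majority vote errs only when at least half the indicators are 1, use the hypothesis $p\le 1/4$ to convert this into a deviation of $1/4$ from the mean, and apply Hoeffding to get $\exp(-J/8)$. The only cosmetic difference is that the paper works with the unnormalized sum $S=\sum_j B_j$ and you work with the sample mean, but these are the same bound.
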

\begin{proof}
Let $ B_j := \mathbf{1}\{ g_k(X'_{k,J})  \neq g_j(\mu')\} $, which gives $ B_j \sim \mathrm{Bern}(p_j)$ with $p_j \le 1/4$. 
Let $ S = \sum_{j=1}^{J} B_j $ count the number of errors in the group.
The majority vote is incorrect only when at least half are wrong:
\[
 \mathrm{Pr}\left(\hat z_k \neq  g_k(\mu')\right) = \mathrm{Pr}\left( S \ge \frac{J}{2} \right) = \mathrm{Pr}\left( S - \mathbb{E}[S] \ge \frac{J}{2} - \mathbb{E}[S]\right).
\]
Since $\mathbb{E}[S]  \le J/4 $, applying Hoeffding inequality yields
\[
\mathrm{Pr}\left( S - \mathbb{E}[S] \ge \frac{J}{2} - \mathbb{E}[S]\right)
\le
\mathrm{Pr}\left( S - \mathbb{E}[S] \ge \frac{J}{4} \right)
\le \exp\left( -\frac{J}{8} \right)
\le
\frac{\delta}{3K}
\]
as desired.
\end{proof}

\begin{proof}[Proof of Theorem~\ref{thm: alternative localization}]
Given~\eqref{eq: alternative localization interval length}, it remains to show with probability at least $1-\delta/2$ that $\mu \in I$, or equivalently, the scaled mean $\mu' = (\mu + \lambda)/(2 \lambda)$ lies in $I'$.
In view of Lemma~\ref{eq: Lemma 17 restated}, we define the ``good events''
\[
E_k = \left\{ d_k < 2^{-(K+2)} \text{ or } 
\hat z_k= g_k(\mu') \right\}
\]
and show that
\[
\mathrm{Pr} \left( \bigcup_{k=1}^K E_k \right) \ge 1 - \frac{\delta}{2}.
\]
By the union bound, it is sufficient to show that each ``bad event'' $\bar{E}_k$ happens with probability at most
\[
\mathrm{Pr}(\bar{E}_k) = 
\mathrm{Pr} \left( d_k \ge 2^{-(K+2)} \text{ and } 
\hat z_k \ne g_k(\mu') \right) \le  \frac{\delta}{2K}.
\]
Fix an arbitrary $k \in \{1, \dots, K \}$.
If $d_k < 2^{-(K+2)}$, then $\mathrm{Pr}(\bar{E}_k) = 0$. Therefore, we may assume without loss of generality that 
\[
    d_k \ge 2^{-(K+2)}
    \iff 
    \frac{1}{d_k} \le 2^{K+2} = 4 \cdot 2^K.
\]
Using this assumption,  the choice of $K$ (see~\eqref{eq: number of group}), and Lemma~\ref{lem: g_k(X'_{k,j}) neq g_k(mu')}, we have 
\[
\mathrm{Pr} \left( g_k(X'_{k,j})\neq g_k(\mu') \right)
\le 
\left(\frac{ \sigma}{2\lambda d_k} \right)^2
\le
\left(\frac{ \sigma}{2\lambda} \cdot 4 \cdot \frac{2\lambda}{\sigma}\cdot \frac{1}{8} \right)^2
= \frac{1}{4}.
\]
It then follows from Lemma~\ref{eq: majority vote} that 
\[
    \mathrm{Pr}\left(\hat z_k \neq  g_k(\mu')\right)
\le \frac{\delta}{3K}
\implies
\mathrm{Pr}(\bar{E}_k)
\le  \frac{\delta}{3K} <  \frac{\delta}{2K}
\]
as desired.
\end{proof}


\end{document}